\tikzstyle{terminator} = [rectangle, draw, rounded corners, minimum height=2em, text centered]
\tikzstyle{process}    = [rectangle, draw, minimum height=2em, text centered]
\tikzstyle{decision}   = [diamond, draw, minimum height=2em, text centered]
\tikzstyle{data}       = [trapezium, draw, trapezium left angle=60, trapezium right angle=120, minimum height=2em, text centered]
\tikzstyle{connector}  = [draw, -latex]
\newcommand{\RemoveAlgoNumber}{\renewcommand{\fnum@algorithm}{\AlCapSty{\AlCapFnt\algorithmcfname}}}
\setlist[itemize]{nosep, leftmargin=*, labelsep=0.5em}
\newtheorem{remark}{Remark}[section]
\newtheorem{proposition}{Proposition}[section]
\newtheorem{theorem}{Theorem}[section]
\newtheorem{corollary}{Corollary}[section]
\newtheorem{property}{Property}[section]
\def\be{\begin{equation}}
\def\ee{\end{equation}}
\def\bea{\begin{eqnarray}}
\def\eea{\end{eqnarray}}
\title{
Solving the Constrained Random Disambiguation Path Problem\\ via Lagrangian Relaxation and Graph Reduction
\vspace{-0,25cm}}
\author{
Li Zhou$^{*}$ ~\&~ Elvan Ceyhan\thanks{Department of Mathematics and Statistics, Auburn University, Auburn, AL 36849, USA \\
\texttt{ emails: lzz0062@auburn.edu (corresponding author), ceyhan@auburn.edu}}
}
\date{ }
\begin{document}
\color{black}
\maketitle
\vspace{-1.25cm} 

\tableofcontents
	

\begin{abstract}
\noindent
We study a resource-constrained variant of the Random Disambiguation Path (RDP) problem, 
a generalization of the Stochastic Obstacle Scene (SOS) problem, 
in which a navigating agent must reach a target in a spatial environment populated with uncertain obstacles. 
Each ambiguous obstacle may be disambiguated at a (possibly heterogeneous) heterogeneous resource cost, 
subject to a global disambiguation budget. 
We formulate this constrained planning problem as a Weight-Constrained Shortest Path Problem (WCSPP) 
with risk-adjusted edge costs that incorporate probabilistic blockage and traversal penalties.
To solve it, we propose a novel algorithmic framework—COLOGR—combining Lagrangian relaxation 
with a two-phase vertex elimination (TPVE) procedure. 
The method prunes infeasible and suboptimal paths while provably preserving the optimal solution, 
and leverages dual bounds to guide efficient search. 
We establish correctness, feasibility guarantees, and surrogate optimality under mild assumptions. 
Our analysis also demonstrates that COLOGR frequently achieves zero duality gap 
and offers improved computational complexity over prior constrained path-planning methods.
Extensive simulation experiments validate the algorithm’s robustness across 
varying obstacle densities, sensor accuracies, and risk models, 
consistently outperforming greedy baselines and approaching offline-optimal benchmarks.
The proposed framework is broadly applicable to stochastic network design, 
mobility planning, and constrained decision-making under uncertainty.
\end{abstract}

\noindent \textbf{Keywords:} 
Stochastic obstacle scene;
Resource-constrained path planning;
Lagrangian duality;
Risk-aware navigation;
Vertex elimination;
Sensor uncertainty

\section{Introduction}
\label{sec:intro}

Graph-based navigation in stochastic and partially observable environments is a fundamental problem 
in autonomous systems, with applications ranging from mobile robotics and autonomous driving to 
coordinated multi-agent systems and GNSS-denied navigation. 
These environments are often characterized by incomplete, noisy, or dynamically evolving information, 
necessitating robust path planning under uncertainty.  
In response, a diverse body of research has emerged, including approaches based on greedy heuristics 
\citep{aksakalli2011, aksakalliari2014}, reinforcement learning in partially observable settings 
\citep{zweig2020graphRL}, and probabilistic planning via factor graphs for GNSS/INS fusion  
\citep{xin2023factor}, among others.

Several classical formulations have laid the foundation for modern stochastic path planning.  
The \emph{Stochastic Obstacle Scene (SOS)} problem \citep{papadimitriou1991shortest} models 
environments populated with probabilistic obstacles
whose true blockage is unknown prior to traversal.  
The \emph{Random Disambiguation Path (RDP)} problem \citep{fishkind2005} extends this setting by 
introducing a disambiguation mechanism: upon encountering a potentially obstructing region (e.g., an obstacle), 
a navigating agent (NAVA) may resolve its status—blocking or non-blocking—at the cost of a 
disambiguation action.  
The goal is to minimize expected traversal cost, balancing movement and information 
acquisition.

In the RDP framework, the environment consists of disk-shaped obstacles, each either truly blocking or not.  
Their status is inferred probabilistically from a noisy sensor model.  
The NAVA may choose to disambiguate an obstacle upon contact, incurring a nonzero cost.  
Existing policies typically assign approximate expected costs to edges and use shortest path 
algorithms such as Dijkstra’s algorithm \citep{dijkstra1959} to compute traversal plans.  
Examples include Reset Disambiguation (RD) \citep{aksakalli2011}, Distance-to-Termination (DT), and 
other penalty-based policies \citep{aksakalliari2014, alkaya2021heuristics}, which differ in how 
they account for obstacle interactions.  
These approaches, while effective in certain settings, do not explicitly model realistic limitations 
on disambiguation resources.

In practice, such actions often incur heterogeneous costs and are subject to cumulative 
resource constraints.  
For instance, in automated logistics, a fleet vehicle must avoid traffic blockages under limited fuel 
or energy reserves; in disaster response, a medical team may need to verify road access to clinics 
with limited communication capabilities; and in minefield navigation, an autonomous agent may be 
constrained by a finite supply of detection scans.  
In such cases, methods that restrict only the \emph{number} of disambiguations 
\citep{fishkind2007, alkaya2015metaheuristic, alkaya2021ctpn} fall short, as they fail to account 
for continuous resource consumption and variable disambiguation costs.

To address these shortcomings, we propose a resource-aware extension of the RDP framework.  
We formulate the \emph{RDP with Constrained Disambiguation} (RCDP) problem as a \emph{Weight 
Constrained Shortest Path Problem} (WCSPP), where the objective is to minimize traversal cost 
subject to a global constraint on disambiguation expenditure.  
This formulation captures both uncertainty in obstacle fields and cumulative limitations 
on information acquisition resources.

We develop a novel RCDP traversal policy that leverages probabilistic sensor information through 
risk-sensitive cost approximations, enabling more accurate estimation under disambiguation.  
Our method embeds the constrained optimization problem into a Lagrangian relaxation framework, 
augmented with multiple graph reduction steps that eliminate infeasible paths, reduce problem 
dimensionality, and ensure optimality by closing the duality gap.  
Unlike previous methods that use static or greedy cost assignment, our algorithm dynamically adapts 
to resource restrictions and maintains feasibility with respect to the disambiguation budget.

In addition to addressing core challenges in stochastic path planning, our framework connects 
naturally to related work in network interdiction \citep{israeli2002shortest, smith2020survey}, 
particularly in settings where uncertainty and partial observability influence adversarial 
interactions \citep{azizi2024shortest, sadeghi2024modified}.  
However, the RCDP problem focuses on the navigator's decision-making process rather than the 
interdictor’s, making it a distinct yet complementary contribution to the broader literature on 
constrained decision-making in uncertain networks.

The remainder of this paper is structured as follows.  
Section~\ref{sec:RDP-problem} formally defines the RCDP problem, introducing modeling assumptions, cost structures, and risk measures.  
Section~\ref{sec:COLOGR} presents the COLOGR optimization algorithm, which combines Lagrangian relaxation with a two-phase graph reduction scheme.  
Section~\ref{sec:RCDP-policy} details the RCDP traversal policy, including its implementation and theoretical guarantees.  
Section~\ref{sec:MC-simulations} reports empirical results from extensive Monte Carlo simulations under varying obstacle and sensor conditions.  
Section~\ref{sec:disc-conc} concludes with a summary and directions for future research.
Appendix Section contains algorithmic pseudocode and proofs for the main theoretical results,
and also provides additional technical content omitted for brevity, 
including secondary proofs, $\alpha$-sensitivity analysis, a Bayesian LU extension, and extended simulation results.

\section{Problem Formulation}
\label{sec:RDP-problem}

We introduce a resource-constrained extension of the \emph{Random Disambiguation Path} (RDP) problem, 
a probabilistic path planning framework originally proposed by \citet{fishkind2005}, which builds on 
the foundational \emph{Stochastic Obstacle Scene} (SOS) problem of \citet{papadimitriou1991shortest}.  
In this setting, a navigating agent (NAVA) must traverse from a source location $s$ to a target location 
$t$ within a bounded planar domain $\Omega \subset \mathbb{R}^2$ that contains a set of spatially 
distributed disk-shaped obstacles whose true statuses—blocking or non-blocking—are initially unknown.

Let $X = X_T \cup X_F$ denote the set of obstacle centers, partitioned into true obstacles $X_T$ that 
block traversal and false obstacles $X_F$ that permit passage.  
Each obstacle $x \in X$ is modeled as a closed disk of fixed radius $\text{radius}(x) > 0$ centered at $x$.  
The agent does not know an obstacle's status a priori 
but instead receives a probabilistic estimate from a noisy sensor, 
represented by a function $\pi: X \rightarrow [0,1]$, 
where $\pi_x := \pi(x)$ denotes the probability that obstacle $x$ is truly blocking.  
Obstacle statuses are assumed to be independent and remain static throughout the traversal.

Upon reaching the boundary of an ambiguous obstacle $x$, the NAVA may pay a disambiguation cost 
$\delta_x > 0$ to reveal its true status.  
If $x$ is non-blocking, the agent proceeds; otherwise, it must reroute.  
Each disambiguation action consumes a portion of a cumulative resource budget, 
such as time, energy, or computational capacity.  
The goal is to minimize the expected total cost, comprising both traversal and disambiguation expenditures, 
while respecting this global constraint.

To make the problem computationally tractable, we discretize $\Omega$ using an 8-adjacency integer 
lattice, transforming the continuous domain into a finite undirected graph $G = (V, E)$.  
Each vertex $v \in V$ corresponds to a lattice point with integer coordinates $(i,j)$, and each edge 
$e \in E$ connects adjacent vertices horizontally, vertically, or diagonally.  
The edge length $\ell_e$ equals the Euclidean distance between its endpoints.  
This discretization balances representational fidelity with computational efficiency.
As discussed in Remark~\ref{rem:robustness and complexity}, finer grids offer greater path redundancy 
but also increase problem size and complexity due to more frequent obstacle-edge intersections.

An edge $e \in E$ is traversable if it does not intersect any true or unresolved obstacle.  
If it intersects an ambiguous obstacle, the agent may disambiguate it at one of the edge's endpoints before traversal.  
Let $p$ denote a valid $s$–$t$ path in $G$, and let $\mathcal{C}_p$ denote the total (random) cost incurred along $p$,
which includes both traversal distance $\ell_p$ and disambiguation cost $\delta_p$:
\[
\mathcal{C}_p = \ell_p + \delta_p,
\]
where $\ell_p$ is the sum of edge lengths along $p$, and $\delta_p = \sum_{x: p \cap D(x) \neq \emptyset} \delta_x$ 
is the total cost of disambiguating all ambiguous obstacles intersected by $p$.

Let $\mathcal{P}(s,t)$ denote the set of all feasible paths from $s$ to $t$ in $G$.
The objective is to find a path $p \in \mathcal{P}(s,t)$ that minimizes the expected total cost:
\[
\min_{p \in \mathcal{P}(s,t)} \mathbf{E}[\mathcal{C}_p].
\]
This optimization accounts for both obstacle uncertainty and the cost trade-offs induced by disambiguation.

\begin{remark}
\label{rem:robustness and complexity}
The granularity of the spatial discretization strongly influences both the computational complexity 
and robustness of the path planning model.  
Specifically, a finer grid increases the number of vertices and edges, expanding the feasible 
solution space and allowing more alternative paths between $s$ and $t$.  
This enhances resilience to disruptions, as measured by the local vertex connectivity $\kappa(s,t)$—the 
minimum number of vertices whose removal disconnects $s$ from $t$ in $G$.  
A higher $\kappa(s,t)$ implies greater robustness against obstacle insertions or adversarial interference.  
However, increased connectivity also raises combinatorial complexity, 
as the number of feasible paths and potential disambiguation points grows rapidly. $\square$
\end{remark}

In the Discrete RDP (D-RDP) model, the vertex closest to the source is selected as $s$, and the closest to the target is designated $t$.  
The cost $\mathcal{C}_p$ is inherently stochastic due to random obstacle locations and uncertain statuses.  
The RDP formulation minimizes its expected value, and in the constrained variant introduced in Section~\ref{sec:RCDPasWCSPP}, 
this optimization is carried out under a global disambiguation budget, yielding a tractable yet flexible framework 
for risk-aware path planning in uncertain environments.

\subsection{Formulating RCDP as a Weight-Constrained Shortest Path Problem}
\label{sec:RCDPasWCSPP}

We now formalize the \emph{Random Constrained Disambiguation Path} (RCDP) problem as a constrained optimization framework.  
In this extension, the navigating agent (NAVA) operates under a global disambiguation budget 
$\delta_{\max}$, representing the maximum allowable cost for resolving obstacle statuses along the path.  
The objective is to find a feasible path $p$ from the source $s$ to the destination $t$ that minimizes the expected total cost, 
while ensuring the total disambiguation expenditure does not exceed $\delta_{\max}$.

This yields the following constrained optimization problem:
\begin{equation}
\label{eqn:WCSPP}
\min_{p \in \mathcal{P}(s,t)} \mathbf{E}[\mathcal{C}_p] \quad \text{subject to} \quad 
\delta_p \leq \delta_{\max},
\end{equation}
which is a specific instance of the classical \emph{Weight Constrained Shortest Path Problem} (WCSPP), 
with the “weight” corresponding to cumulative disambiguation cost.

We consider two cost structures for disambiguation.  
In the \emph{uniform model}, all obstacles incur a fixed cost $\delta_x = \delta$ for every $x \in X$.  
In contrast, the \emph{heterogeneous model} allows $\delta_x$ to vary based on contextual factors 
such as sensor reliability, terrain conditions, or distance to the goal.

\subsection{Risk Modeling and Surrogate Cost Approximation}
\label{sec:risk-modeling}

Computing the exact expected cost $\mathbf{E}[\mathcal{C}_p]$ for a path $p$ is generally intractable 
due to the exponential number of possible obstacle realizations.  
To enable efficient planning under uncertainty, we adopt a surrogate cost approximation based on 
deterministic edge-level penalties.

Specifically, we replace $\mathbf{E}[\mathcal{C}_p]$ with an additive deterministic surrogate:
\begin{equation}
\label{eqn:surrogate-cost}
\widetilde{\mathcal{C}}_p := \sum_{e \in E(p)} \left( \ell_e + r_e \right),
\end{equation}
where $\ell_e$ is the Euclidean length of edge $e$, and $r_e$ is a penalty term that accounts for the 
expected disambiguation cost and traversal risk induced by ambiguous obstacles intersecting $e$.  

This surrogate formulation supports efficient constrained optimization and naturally incorporates 
sensor-driven uncertainty into edge costs, allowing for scalable planning in environments with 
limited information and resource constraints.

\subsubsection*{Surrogate Constrained Optimization}

Combining the deterministic risk-adjusted cost formulation with the disambiguation constraint leads to 
the following approximation of the RCDP problem:
\begin{equation}
\label{eqn:WCSPP_restate}
\min_{p \in \mathcal{P}(s,t)} \widetilde{\mathcal{C}}_p \quad \text{subject to} \quad 
\delta_p := \sum_{e \in E(p)} \delta_e \leq \delta_{\max},
\end{equation}
where $\delta_e$ is the surrogate disambiguation cost assigned to edge $e$, and $\delta_p$ 
is the total accumulated cost over path $p$.  
This formulation maintains the structure of a \emph{Weight Constrained Shortest Path Problem} (WCSPP) 
and allows for specialized solution techniques.  
It strikes a balance between computational efficiency and sensitivity to uncertainty, enabling 
scalable, resource-aware navigation under stochastic conditions.

The algorithmic solution to \eqref{eqn:WCSPP_restate}, including graph pruning and 
Lagrangian dual methods, is developed in Section~\ref{sec:COLOGR}.

\subsubsection*{Edge-Level Risk Aggregation}

For each edge $e \in E$, let $X_e := \{x \in X : D(x) \cap e \neq \emptyset\}$ denote the set of 
ambiguous obstacles intersecting the edge.  
Each obstacle $x \in X_e$ contributes a risk penalty $r_x$ and a disambiguation cost $\delta_x$, 
which are symmetrically divided across intersecting edges since disambiguation may be initiated 
from either endpoint.  
Accordingly, we define the edge-level quantities as:
\[
r_e := \frac{1}{2} \sum_{x \in X_e} r_x, \quad \delta_e := \frac{1}{2} \sum_{x \in X_e} \delta_x.
\]

The value $r_x$ reflects the perceived risk of traversing near obstacle $x$ without resolving its 
status.  
These edge-level terms enter directly into both the surrogate cost objective and the disambiguation 
budget constraint.  
Depending on the sensor model and obstacle features, various forms of $r_x$ can be adopted from the 
literature to capture uncertainty and criticality.

\subsubsection*{Risk Function Models}

We now describe several obstacle-level risk functions $r_x$ used in the surrogate edge cost formulation.  
Each function maps the disambiguation cost $\delta_x$ and the sensor-assigned probability $\pi_x$ of an obstacle being true 
to a penalty that reflects the traversal risk associated with ambiguous obstacles.  
These models, adapted from prior literature, capture different attitudes toward uncertainty and spatial exposure, 
and they influence how conservative or exploratory the resulting path will be.

\paragraph{Reset Disambiguation (RD).}
The RD model \citep{aksakalli2011} defines the risk as:
\[
r_x^{\mathrm{RD}} = \frac{\delta_x}{1 - \pi_x}.
\]
This function penalizes obstacles more heavily as $\pi_x \to 1$, encouraging early disambiguation of high-risk regions.  
It represents a pessimistic cost expectation under a greedy traversal policy and provides sharp risk separation for ambiguous areas.

\paragraph{Distance-to-Termination (DT).}
The DT model \citep{aksakalliari2014} incorporates spatial proximity to the goal:
\[
r_x^{\mathrm{DT}} = \delta_x + \left( \frac{d(x, t)}{1 - \pi_x} \right)^{-\log(1 - \pi_x)},
\]
where $d(x, t)$ is the Euclidean distance between obstacle $x$ and the target $t$.  
The formulation emphasizes obstacles near the goal whose status is uncertain, increasing the risk 
penalty nonlinearly as $\pi_x$ rises and $d(x,t)$ shrinks.

\paragraph{Linear Undesirability (LU).}
The LU model \citep{fishkind2007} uses a logarithmic transformation to model increasing risk aversion:
\[
r_x^{\mathrm{LU}} = -\alpha \log(1 - \pi_x),
\]
where $\alpha > 0$ is a tunable parameter.  
Larger values of $\alpha$ induce more conservative routing.  
We consider three instantiations:
\begin{itemize}
    \item[] $r_{15}^L$: moderate aversion with $\alpha = 15$,
    \item[] $r_{30}^L$: high-risk aversion with $\alpha = 30$,
    \item[] $r_{\delta}^L$: adaptive penalty with $\alpha = \delta_x$.
\end{itemize}
This family provides smooth control over risk sensitivity and avoids unbounded penalties near $\pi_x = 1$, 
making it useful for calibrated trade-offs between caution and efficiency.

These risk models define the edge-wise surrogate cost structure that supports optimization under the disambiguation constraint.  
Their integration into the RCDP solution process is discussed in Section~\ref{sec:COLOGR}.

\subsection{Graph Initialization with Risk and Resource Embedding}
\label{sec:graph-init}

Before solving the constrained problem in Equation~\eqref{eqn:WCSPP_restate}, we preprocess the 
traversal graph $G = (V, E)$ to assign deterministic edge-level costs and disambiguation weights.  
This transformation aggregates obstacle-level information into edge-level quantities, thereby enabling 
efficient constrained shortest path computation.

For each edge $e \in E$, let $X_e := \{x \in X : D(x) \cap e \neq \emptyset\}$ be the set of ambiguous obstacles 
intersecting $e$. Using a selected risk function (e.g., RD, DT, or LU from Section~\ref{sec:RCDPasWCSPP}), 
we compute:
\[
\widetilde{\mathcal{C}}_e := \ell_e + r_e, \quad \text{where} \quad r_e = \frac{1}{2} \sum_{x \in X_e} r_x,
\]
and assign the edge-level disambiguation weight as:
\[
\delta_e := \frac{1}{2} \sum_{x \in X_e} \delta_x,
\]
with $\ell_e$ denoting the Euclidean length of edge $e$. The factor $1/2$ reflects symmetric allocation 
since disambiguation can occur from either endpoint.

The initialization process is summarized in Algorithm~\ref{graph_initialization}, which takes as input the 
graph structure, obstacle configuration, sensor probabilities $\pi_x$, disambiguation costs $\delta_x$, 
and a specified risk function. The output is an adjusted graph with edge-wise costs and weights that embed 
both geometric and probabilistic features.

\begin{algorithm}[H]
\caption{Graph Initialization (GI)}
\label{graph_initialization}
\begin{algorithmic}[1]
\footnotesize
\Statex \textbf{Input:} Graph $G = (V,E)$; obstacles $X$; sensor probabilities $\pi_x$; disambiguation costs $\delta_x$; risk function $r_x$
\Statex \textbf{Output:} Adjusted graph $G_{\text{adj}}$ with edge costs $\widetilde{\mathcal{C}}_e$ and weights $\delta_e$
\For{each edge $e \in E$}
    \State Identify intersecting obstacles: $X_e \gets \{ x \in X : D(x) \cap e \neq \emptyset \}$
    \State Compute length: $\ell_e \gets \|v_1(e) - v_2(e)\|$
    \State Compute risk: $r_e \gets \frac{1}{2} \sum_{x \in X_e} r_x$
    \State Assign cost: $\widetilde{\mathcal{C}}_e \gets \ell_e + r_e$
    \State Assign weight: $\delta_e \gets \frac{1}{2} \sum_{x \in X_e} \delta_x$
\EndFor
\end{algorithmic}
\end{algorithm}

This preprocessing step equips the graph with risk-aware traversal costs and normalized disambiguation 
weights, facilitating efficient Lagrangian-based optimization under resource constraints. 
It bridges the probabilistic obstacle model with deterministic path planning algorithms.

\section{COLOGR: A Lagrangian \& Graph-Reduction Framework}
\label{sec:COLOGR}

We address the constrained RCDP problem in Equation~\eqref{eqn:WCSPP_restate} 
using a unified framework that combines 
\emph{Cost- and Obstacle-based Graph Reduction (COGR)} 
with \emph{Lagrangian Optimization with Graph Reduction (LOGR)}, 
jointly referred to as the \textbf{COLOGR} method.

The COGR phase initiates the process by pruning infeasible or dominated vertices 
using relaxed versions of the problem that consider cost-only and weight-only criteria.  
This reduces the size of the graph and tightens optimization bounds before invoking 
the full solver.

On the reduced graph, the LOGR phase applies a Lagrangian relaxation of the disambiguation constraint 
by incorporating it into the objective function:
\begin{equation} 
\label{eqn:phi-for-lagrange}
\Phi(\lambda) = \min_{p \in \mathcal{P}(s,t)} \left\{ \widetilde{\mathcal{C}}_p + \lambda (\delta_p - \delta_{\max}) \right\},
\end{equation} 
where $\lambda \geq 0$ is iteratively adjusted to minimize the duality gap.  
At each iteration, further vertex elimination is performed based on updated feasibility 
and dominance evaluations relative to the current best bound.

Together, these two phases implement a \emph{Two-Phase Vertex Elimination (TPVE)} strategy 
that enhances computational efficiency and preserves the optimal solution.  
The resulting \textbf{COLOGR} algorithm integrates structural simplification with 
dual-guided refinement, producing high-quality paths under resource constraints.  

\subsection{Path Planning Using Lagrangian Relaxation}
\label{sec:COLOGR-Lagrangian}

For a fixed $\lambda \geq 0$, let $p^\lambda$ denote the $s$–$t$ path that minimizes the penalized cost 
$\widetilde{\mathcal{C}}_p + \lambda \delta_p$, with corresponding traversal cost $\widetilde{\mathcal{C}}^\lambda$ 
and disambiguation cost $\delta^\lambda$.  
The function $\Phi(\lambda) := \widetilde{\mathcal{C}}^\lambda + \lambda(\delta^\lambda - \delta_{\max})$ 
serves as a valid lower bound on the optimal cost, and is maximized to identify a near-optimal 
Lagrangian multiplier and associated feasible path.

The algorithm begins by evaluating two baseline cases.  
When $\lambda = 0$, the unconstrained minimum-cost path $p^0$ is computed.  
If it satisfies $\delta^0 \leq \delta_{\max}$, it is optimal and the algorithm terminates.  
When $\lambda = \infty$, the disambiguation term dominates, yielding the minimum-weight path $p^\infty$.  
If $\delta^\infty > \delta_{\max}$, no feasible solution exists—this step constitutes the feasibility test.  
While this situation is avoided in our setting (see Section~\ref{sec:MC-simulations}), we retain the test for completeness.  
Rather than computing $p^\infty$ via random tie-breaking, we select the shortest obstacle-free path as 
the minimum-weight solution, improving efficiency.

If neither baseline path resolves the problem, we initialize the upper and lower bounds 
$\widetilde{\mathcal{C}}_U = \widetilde{\mathcal{C}}^\infty$ and 
$\widetilde{\mathcal{C}}_L = \widetilde{\mathcal{C}}^0$, and proceed with iterative updates over $\lambda$.

At each iteration, the algorithm maintains two multipliers, $\lambda^-$ and $\lambda^+$, 
corresponding to paths $p^-$ and $p^+$ such that $\delta^- < \delta_{\max} < \delta^+$.  
A new multiplier $\lambda_{i+1}$ is computed using the intersection formula \citep{Juttner2001}:
\begin{equation}
\label{eqn:lambda_new}
\lambda_{i+1} = \frac{\widetilde{\mathcal{C}}^- - \widetilde{\mathcal{C}}^+}{\delta^+ - \delta^-}.
\end{equation}
This ensures $\lambda^+ \leq \lambda_{i+1} \leq \lambda^-$ due to the monotonicity of both cost and disambiguation terms.

The updated path $p^{\lambda_{i+1}}$ is then computed.  
Depending on whether its disambiguation cost falls below or above the budget, the pair $(\lambda^-, p^-)$ or 
$(\lambda^+, p^+)$ is updated accordingly.  
This process continues until the optimal multiplier $\lambda^*$ is found, as characterized in the following result:

\begin{proposition}[Optimality Condition for Lagrange Multiplier]
\label{prop_optimal_multiplier}
Let $p^-_i$ and $p^+_i$ be the paths with disambiguation costs $\delta^-_i < \delta_{\max} < \delta^+_i$ 
and corresponding costs $\widetilde{\mathcal{C}}^-_i, \widetilde{\mathcal{C}}^+_i$.  
Let $\lambda_{i+1}$ be computed via Equation~\eqref{eqn:lambda_new}, and let 
$p_{i+1} := p^{\lambda_{i+1}}$ denote the path minimizing 
$\Phi(\lambda)$ at $\lambda = \lambda_{i+1}$.  
Then $\lambda_{i+1}$ is optimal if either:
\begin{enumerate}
    \item[(i)] $\delta_{p_{i+1}} = \delta_{\max}$;
    \item[(ii)] All three paths $p^-_i, p^+_i$, and $p_{i+1}$ attain the same penalized cost at $\lambda_{i+1}$:
    \[
    \widetilde{\mathcal{C}}_{p_{i+1}} + \lambda_{i+1} \delta_{p_{i+1}} = \widetilde{\mathcal{C}}^-_i + \lambda_{i+1} \delta^-_i = \widetilde{\mathcal{C}}^+_i + \lambda_{i+1} \delta^+_i.
    \]
\end{enumerate}
\end{proposition}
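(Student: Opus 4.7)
The plan is to exploit the standard fact that the Lagrangian dual function $\Phi(\lambda) = \min_{p \in \mathcal{P}(s,t)}\{\widetilde{\mathcal{C}}_p + \lambda(\delta_p - \delta_{\max})\}$ is the pointwise minimum of a finite family of affine functions of $\lambda$, and is therefore concave and piecewise linear; the algorithm is searching for $\lambda^{\ast} \in \arg\max_{\lambda \geq 0} \Phi(\lambda)$. For each path $p$ define the affine function $L_p(\lambda) := \widetilde{\mathcal{C}}_p + \lambda(\delta_p - \delta_{\max})$, so that $\Phi(\lambda) \leq L_p(\lambda)$ for every $p$ and every $\lambda \geq 0$. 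I would then treat the two cases separately, each by combining this universal upper bound with a matching lower bound realized at $\lambda = \lambda_{i+1}$.

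For case (i), I would invoke weak duality. Because $p_{i+1}$ is assumed to satisfy $\delta_{p_{i+1}} = \delta_{\max}$, the Lagrangian penalty term vanishes and $\Phi(\lambda_{i+1}) = L_{p_{i+1}}(\lambda_{i+1}) = \widetilde{\mathcal{C}}_{p_{i+1}}$. Since $p_{i+1}$ is feasible for the original constrained problem, the optimal constrained cost $\widetilde{\mathcal{C}}^{\ast}$ satisfies $\widetilde{\mathcal{C}}^{\ast} \leq \widetilde{\mathcal{C}}_{p_{i+1}}$. Combined with the standard weak-duality inequality $\Phi(\lambda) \leq \widetilde{\mathcal{C}}^{\ast}$ valid for all $\lambda \geq 0$, one obtains $\Phi(\lambda_{i+1}) = \widetilde{\mathcal{C}}^{\ast} = \max_{\lambda \geq 0} \Phi(\lambda)$, so $\lambda_{i+1}$ is a dual optimum and in fact the duality gap closes.

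For case (ii), the argument is a geometric envelope argument. By the standing hypothesis $\delta^-_i < \delta_{\max} < \delta^+_i$, the two affine bounds $L_{p^-_i}$ and $L_{p^+_i}$ have strictly negative and strictly positive slopes, respectively. A direct computation shows that $\lambda_{i+1}$ defined by the intersection formula~\eqref{eqn:lambda_new} is precisely the unique value at which $L_{p^-_i}(\lambda) = L_{p^+_i}(\lambda)$; together with the opposite sign of the slopes, this implies that $\min\{L_{p^-_i}(\lambda), L_{p^+_i}(\lambda)\}$ attains its maximum over $\lambda \geq 0$ at $\lambda_{i+1}$. Since $\Phi(\lambda) \leq \min\{L_{p^-_i}(\lambda), L_{p^+_i}(\lambda)\}$ for all $\lambda$, one obtains the uniform upper bound $\max_{\lambda \geq 0} \Phi(\lambda) \leq L_{p^-_i}(\lambda_{i+1}) = L_{p^+_i}(\lambda_{i+1})$. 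Condition (ii), after subtracting $\lambda_{i+1}\delta_{\max}$ from each of its three equal expressions, asserts exactly that $\Phi(\lambda_{i+1}) = L_{p_{i+1}}(\lambda_{i+1})$ coincides with this common value; hence $\lambda_{i+1}$ attains the upper envelope and must be a dual maximizer.

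The main obstacle I anticipate is not conceptual but bookkeeping: one must carefully verify that condition (ii), stated in the proposition using the unshifted penalized cost $\widetilde{\mathcal{C}} + \lambda\delta$, translates cleanly to equality of $L_{p^-_i}, L_{p^+_i}, L_{p_{i+1}}$ at $\lambda_{i+1}$, and one must justify that the strict inequalities $\delta^-_i < \delta_{\max} < \delta^+_i$ preserved by the update rule genuinely yield the opposite-sign slopes required for the envelope maximization to occur exactly at the intersection. Beyond these technicalities, the proof reduces to concavity of the dual function and weak duality, both standard.
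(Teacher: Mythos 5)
Your proposal is correct and takes essentially the same route as the paper: both rest on the concavity and piecewise linearity of the dual $\Phi(\lambda)$, with case (ii) handled by the same envelope/convex-hull bound coming from the two affine pieces $L_{p^-_i}$ and $L_{p^+_i}$ crossing exactly at $\lambda_{i+1}$. Your case (i) substitutes a weak-duality-plus-feasibility argument for the paper's terse ``$\Phi'(\lambda_{i+1})=0$'' observation, which is an equivalent justification (and in fact slightly more careful, since $\Phi$ need not be differentiable at a breakpoint and the paper's zero-derivative statement is really a zero-subgradient statement).
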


\begin{proof}
The dual function $\Phi(\lambda)$ is concave and piecewise linear, with each path $p$ contributing a 
linear segment of slope $\delta_p - \delta_{\max}$.  
If $p_{i+1}$ satisfies $\delta_{p_{i+1}} = \delta_{\max}$, then $\Phi'(\lambda_{i+1}) = 0$, 
and $\lambda_{i+1}$ is a maximizer.  
Alternatively, if $p^-_i, p^+_i$, and $p_{i+1}$ yield equal penalized cost at $\lambda_{i+1}$, 
then $\Phi(\lambda_{i+1})$ lies on the convex hull of the dual function and cannot be improved, 
ensuring optimality.
\end{proof}

Despite the method’s strength, Lagrangian-based optimization for constrained shortest path problems is 
not guaranteed to yield an optimal solution in all cases.  
Two known failure modes include: (i) indistinguishability between multiple paths with equal modified costs, 
and (ii) exclusion of the true optimal path from those explored by the iterative scheme \citep{GUO200373, Juttner2001}.  
In our setting, the COGR pruning steps (Section~\ref{sec:COLOGR-GraphReduce}) reduce such risks 
by eliminating suboptimal structures early, accelerating convergence and improving solution quality.

\subsection{Two-Phase Vertex Elimination}
\label{sec:COLOGR-GraphReduce}

To accelerate the Lagrangian optimization process and minimize duality gap risk,
we incorporate a graph reduction strategy that eliminates nonessential vertices before and during optimization.  
Inspired by vertex-pruning heuristics in \cite{Ranga2009}, our approach introduces a two-stage method, 
termed \emph{Two-Phase Vertex Elimination} (TPVE), that substantially shrinks the graph while preserving the optimal path.

The first phase, \textbf{Cost- and Obstacle-based Graph Reduction (COGR)},
operates on structural bounds and applies elimination rules prior to Lagrangian optimization.  
Since removing a vertex implicitly removes all incident edges, 
we focus on vertex elimination for greater efficiency.  
Let $\mathcal{S}_{\min}$ denote any $(s,t)$-vertex cut in $G$.  
Then the optimal surrogate cost $\widetilde{\mathcal{C}}^*$ is bounded by:

\begin{property}[Cut-Based Cost Bounds]
\label{prop:cut_bounds}
Let $\mathcal{S}_{\min}$ be a minimum $(s,t)$-vertex cut. Then
\[
\min_{\substack{p \in \mathcal{P}(s,t) \\ p \cap \mathcal{S}_{\min} \neq \emptyset}} \widetilde{\mathcal{C}}^0_p
\;\leq\;
\widetilde{\mathcal{C}}^*
\;\leq\;
\min_{\substack{p \in \mathcal{P}(s,t) \\ \delta_p \leq \delta_{\max},\; p \cap \mathcal{S}_{\min} \neq \emptyset}} \widetilde{\mathcal{C}}^\infty_p.
\]
\end{property}

The result enables targeted pruning based on vertex-local evaluations of cost and feasibility.  
Specifically, for each vertex $v$, we compute the shortest path through $v$ using both minimum weight ($\lambda = \infty$) and minimum cost ($\lambda = 0$) criteria, denoted $p_v^\infty$ and $p_v^0$, respectively.  
If $p_v^\infty$ violates the disambiguation constraint, then $v$ cannot lie on any feasible path and is removed.  
If $p_v^0$ exceeds the current best upper bound $\widetilde{\mathcal{C}}_U$, it is also pruned.  
These checks are applied dynamically during graph scanning, with bounds updated as feasible candidates are encountered.  
This phase either identifies the optimal solution $p^*$ or tightens the search interval $\left[\widetilde{\mathcal{C}}_L, \widetilde{\mathcal{C}}_U\right]$ for subsequent optimization.

To ensure the validity of this reduction process, 
it is essential to verify that the original $p^*$ is preserved within the reduced graph
(see Corollary \ref{cor:phase1-preservation}).  

The second phase, \textbf{Lagrangian Optimization with Graph Reduction (LOGR)}, 
integrates vertex pruning directly into the Lagrange multiplier update process.  
At each iteration with multiplier $\lambda_i$, 
we compute for every vertex $v$ the shortest modified-cost path $p_v^{\lambda_i}$ 
and use it to update the upper bound (if feasible) or dual lower bound (if not).  
A vertex $v$ is eliminated if its associated dual value at $\lambda_i$ exceeds the current best upper bound.  
This pruning continues throughout the optimization cycle, 
and termination occurs when either the optimal multiplier $\lambda^*$ is identified 
(per Proposition~\ref{prop_optimal_multiplier}) or the duality gap is closed
 ($\widetilde{\mathcal{C}}_L = \widetilde{\mathcal{C}}_U$).

As with COGR, results similar to Proposition \ref{prop:path-preservation}  
can be provided to ensure that the optimal path $p^*$ is not removed from the graph  
throughout the second phase of vertex elimination.

Our TPVE method differs from earlier SNE strategies \citep{Ranga2009} in three critical aspects:  
(i) it employs a pre-optimization reduction phase (COGR);  
(ii) it uses a convergence-aware stopping condition that aligns with Proposition~\ref{prop_optimal_multiplier}; and  
(iii) it retains vertices whose lower bounds match the current upper bound,
preserving potential ties in penalized cost and preventing premature elimination of optimal candidates.  
These enhancements collectively yield significant performance gains and robust convergence guarantees, 
as justified in Section~\ref{sec:theo-guarantee-complexity}
and demonstrated in Section~\ref{sec:MC-simulations}.

\subsection{COLOGR: Combining TPVE \& Lagrangian Search}
\label{sec:COLOGR-comb}

\begin{algorithm}[H]
\caption{Unified Lagrangian Optimization with Graph Reduction (COLOGR)}
\label{alg:unified}
\begin{algorithmic}[1]
\footnotesize
\Statex \textbf{Input:} Adjusted graph $G_{\text{adj}}$ (from Algorithm~\ref{graph_initialization}), source $s$, target $t$, budget $\delta_{\max}$
\Statex \textbf{Output:} Optimal or best feasible path $p^*$
\State Initialize $V_{\text{del}} \gets \emptyset$, $E_{\text{del}} \gets \emptyset$
\State Compute $p^\infty$ (min-weight path) and $p^0$ (min-cost path)
\If{$\delta_{p^0} \leq \delta_{\max}$} \Return $p^* = p^0$ \EndIf
\State Set bounds: $p_U \gets p^\infty$, $p_L \gets p^0$
\Repeat \Comment{Phase 1: Vertex Elimination (COGR)}
    \For{$v \in V(G_{\text{adj}})$}
        \State Compute $p^\infty_v$, $p^0_v$
        \If{$\delta_{p^\infty_v} > \delta_{\max}$ or $\widetilde{\mathcal{C}}_{p^0_v} > \widetilde{\mathcal{C}}_{p_U}$}
            \State Eliminate $v$: $V_{\text{del}} \gets V_{\text{del}} \cup \{v\}$;
            \State Remove incident edges: $E_{\text{del}} \gets E_{\text{del}} \cup \{e \in E: v \in e\}$
            \State Update $G_{\text{adj}} \gets G(V \setminus V_{\text{del}}, E \setminus E_{\text{del}})$
         \ElsIf{$\widetilde{\mathcal{C}}_{p^\infty_v} < \widetilde{\mathcal{C}}_{p_U}$ or $\widetilde{\mathcal{C}}_{p^0_v} < \widetilde{\mathcal{C}}_{p_U}\,\&\,\delta_{p^0_v} \leq \delta_{\max}$}
            \State Update $p_U \gets p^\infty_v$ or $p_U \gets p^0_v$
        \EndIf
    \EndFor
\Until{no further eliminations}
\While{duality gap remains open}
    \State $\lambda \gets \frac{\widetilde{\mathcal{C}}_{p_U} - \widetilde{\mathcal{C}}_{p_L}}{\delta_{p_L} - \delta_{p_U}}$
    \State Find $p_{\lambda}$ minimizing $\widetilde{\mathcal{C}}_p + \lambda \delta_p$
    \If{$\delta_{p_{\lambda}} = \delta_{\max}$} \Return $p^* = p_{\lambda}$ \EndIf
    \For{$v \in V(G_{\text{adj}})$} \Comment{Phase 2: Vertex Elimination (LOGR)}
        \State Compute $p_{\lambda,v}$ through $v$
        \If{$\widetilde{\mathcal{C}}_{p_{\lambda,v}} < \widetilde{\mathcal{C}}_{p_U}$ and $\delta_{p_{\lambda,v}} \leq \delta_{\max}$}
            \State Update $p_U \gets p_{\lambda,v}$
        \ElsIf{$\Phi(\lambda, p_{\lambda,v}) > \widetilde{\mathcal{C}}_{p_U}$}
            \State Eliminate $v$: $V_{\text{del}} \gets V_{\text{del}} \cup \{v\}$;
            \State Remove incident edges: $E_{\text{del}} \gets E_{\text{del}} \cup \{e \in E: v \in e\}$
            \State Update $G_{\text{adj}} \gets G(V \setminus V_{\text{del}}, E \setminus E_{\text{del}})$
        \EndIf
    \EndFor
    \State Update $p_L$ and bounds
\EndWhile
\State \Return $p^* = p_U$
\end{algorithmic}
\end{algorithm}

\noindent
\textbf{Remark.}  
Throughout this manuscript, we use the terms \textit{COLOGR} and 
\textit{Two-Phase Vertex Elimination (TPVE)} interchangeably.  
They refer to the same unified graph reduction and 
optimization framework introduced in Sections~\ref{sec:COLOGR-comb}--\ref{sec:COLOGR-GraphReduce}.  
Specifically, COLOGR is the algorithmic realization of TPVE, 
combining pre-optimization reduction (COGR) with in-process vertex pruning 
guided by Lagrangian dual bounds (LOGR).  
All theoretical results and simulation benchmarks reported 
under the TPVE name are based on the implementation given in Algorithm~\ref{alg:unified}. $\square$

\subsection{Guarantees and Complexity of COLOGR}
\label{sec:theo-guarantee-complexity}

We now provide guarantees on solution correctness and computational complexity. 
These results justify the effectiveness of TPVE when integrated with the Lagrangian scheme,
establishing that no part of the true optimal path is pruned despite aggressive graph reduction. 
The following proposition ensures that the constrained optimal solution is preserved through both phases of TPVE.

\begin{proposition}[Preservation of Optimal Path under TPVE]
\label{prop:path-preservation}
Let $p^*$ be the optimal solution to the RCDP problem with total approximate cost 
$\widetilde{\mathcal{C}}^*$ and disambiguation cost $\delta_{p^*} \leq \delta_{\max}$. 
Then the path $p^*$ is retained in the graph after both COGR and LOGR phases of TPVE.
\end{proposition}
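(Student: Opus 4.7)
The plan is to prove, vertex by vertex, that no vertex $v \in V(p^*)$ can trigger any of the elimination rules in Algorithm~\ref{alg:unified}. Since edges are removed only as a consequence of an endpoint being deleted, preserving the vertices of $p^*$ preserves $p^*$ as a whole. The argument relies on one running invariant and two short optimality comparisons.

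First I would establish the invariant that at every step of both phases, the current upper bound $\widetilde{\mathcal{C}}_{p_U}$ is the surrogate cost of a \emph{feasible} $(s,t)$-path, i.e.\ one with $\delta_{p_U} \leq \delta_{\max}$. This is immediate from inspection of Algorithm~\ref{alg:unified}: $p_U$ is initialized to $p^\infty$, which is feasible by the feasibility test that precedes the main loops, and every subsequent update overwrites $p_U$ only with candidates whose disambiguation cost is explicitly verified to lie within $\delta_{\max}$. From this invariant and the optimality of $p^*$ I deduce the inequality $\widetilde{\mathcal{C}}^{*} \leq \widetilde{\mathcal{C}}_{p_U}$, which holds throughout the run and drives both phases of the argument.

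For the COGR phase, fix $v \in V(p^*)$ and consider the two elimination predicates. Because $p^*$ is itself an $(s,t)$-path through $v$ with $\delta_{p^*} \leq \delta_{\max}$, the minimum-weight path through $v$ satisfies $\delta_{p^\infty_v} \leq \delta_{p^*} \leq \delta_{\max}$, so the weight-based rule cannot fire. Similarly, the minimum-cost path through $v$ satisfies $\widetilde{\mathcal{C}}_{p^0_v} \leq \widetilde{\mathcal{C}}_{p^*} = \widetilde{\mathcal{C}}^{*} \leq \widetilde{\mathcal{C}}_{p_U}$ by the invariant, so the cost-based rule cannot fire either. Hence $v$ survives Phase~1.

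For the LOGR phase, fix $v \in V(p^*)$ and any multiplier $\lambda \geq 0$ generated by the algorithm, and let $p_{\lambda,v}$ minimize $\widetilde{\mathcal{C}}_p + \lambda \delta_p$ over $(s,t)$-paths through $v$. Using $p^*$ as a competitor, optimality of $p_{\lambda,v}$ gives $\widetilde{\mathcal{C}}_{p_{\lambda,v}} + \lambda \delta_{p_{\lambda,v}} \leq \widetilde{\mathcal{C}}^{*} + \lambda \delta_{p^*}$, so
\[
\Phi(\lambda, p_{\lambda,v}) \;=\; \widetilde{\mathcal{C}}_{p_{\lambda,v}} + \lambda\bigl(\delta_{p_{\lambda,v}} - \delta_{\max}\bigr) \;\leq\; \widetilde{\mathcal{C}}^{*} + \lambda\bigl(\delta_{p^*} - \delta_{\max}\bigr) \;\leq\; \widetilde{\mathcal{C}}^{*} \;\leq\; \widetilde{\mathcal{C}}_{p_U},
\]
using $\delta_{p^*} \leq \delta_{\max}$ and $\lambda \geq 0$ for the middle step and the invariant for the last. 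Thus the Phase~2 elimination test $\Phi(\lambda, p_{\lambda,v}) > \widetilde{\mathcal{C}}_{p_U}$ fails, and $v$ is retained. The one nontrivial step—indeed the main obstacle—is carefully verifying the feasibility invariant for $p_U$ across every branch of Algorithm~\ref{alg:unified}; once that bookkeeping is pinned down, the rest of the proof reduces to the two short inequalities above.
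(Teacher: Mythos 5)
Your proof is correct and follows essentially the same route as the paper's: a vertex-by-vertex check that neither the COGR nor the LOGR elimination test can fire on a vertex of $p^*$, using the feasibility of $p^*$ and the fact that the running upper bound never drops below $\widetilde{\mathcal{C}}^*$. You simply make explicit two points the paper leaves implicit (the feasibility invariant for $p_U$ and the use of $p^*$ as a competitor in the bound $\Phi(\lambda, p_{\lambda,v}) \leq \widetilde{\mathcal{C}}^*$), which is a welcome sharpening but not a different argument.
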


\begin{proof}
Let $p^*$ be an optimal $s$–$t$ path such that $\delta_{p^*} \leq \delta_{\max}$ and 
$\widetilde{\mathcal{C}}_{p^*} = \widetilde{\mathcal{C}}^*$.  
Consider any vertex $v \in p^*$. 

In Phase 1 (COGR), $v$ is preserved if:
(i) there exists a feasible path through $v$ with $\delta_p \leq \delta_{\max}$,  
and (ii) at least one such path has cost no greater than the current upper bound $\widetilde{\mathcal{C}}_U$.  
Both conditions hold for $p^*$, so $v$ is not eliminated.

%
In Phase 2 (LOGR), $v$ is preserved if the lower bound of the optimal path cost via $v$ does not exceed the current upper bound, 
i.e., $\Phi(\lambda, p_{\lambda,v}) \leq \widetilde{\mathcal{C}}_{p_U}$.
For $v$ that lies on $p^*$, any identified lower bound $\Phi(\lambda,p_{\lambda,v})$ always satisfies
$\Phi(\lambda,p_{\lambda,v})\leq\tilde{\mathcal{C}}_{p^*}$,
and the current upper bound satisfies $\tilde{\mathcal{C}}_U\geq\tilde{\mathcal{C}}_{p^*}$,
since no feasible path can be cheaper than the optimal one.
Therefore, $v$ is not eliminated. 

Since this reasoning applies to every vertex $v \in p^*$ and both phases of TPVE, 
the entire path $p^*$ remains in the final reduced graph.
\end{proof}

The following corollary isolates the preservation guarantee under Phase 1 of TPVE.  
It formalizes the intuitive observation that the optimal path $p^*$ remains in the graph  
as long as it is both feasible and cost-competitive under initial pruning rules.

\begin{corollary}[Preservation under Phase 1 Only]
\label{cor:phase1-preservation}
Under the same assumptions as Proposition~\ref{prop:path-preservation},  
the optimal path $p^*$ is preserved after Phase 1 of TPVE.
\end{corollary}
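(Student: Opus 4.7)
The plan is to specialize the Phase 1 portion of the argument used in Proposition~\ref{prop:path-preservation} and make the upper-bound bookkeeping explicit, since that is the only subtle ingredient. I would fix an arbitrary vertex $v$ lying on the optimal path $p^*$ and examine the two elimination rules that COGR applies in Algorithm~\ref{alg:unified}: a vertex $v$ is removed only if either (a) the minimum-weight path through $v$ violates the disambiguation budget, $\delta_{p_v^\infty} > \delta_{\max}$, or (b) the minimum-cost path through $v$ exceeds the running upper bound, $\widetilde{\mathcal{C}}_{p_v^0} > \widetilde{\mathcal{C}}_{p_U}$. My goal is to show that neither triggers when $v \in p^*$.

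For (a), since $p^*$ itself is an $s$–$t$ path through $v$ with $\delta_{p^*} \leq \delta_{\max}$, the minimum-weight $s$–$t$ path through $v$ satisfies $\delta_{p_v^\infty} \leq \delta_{p^*} \leq \delta_{\max}$, so condition (a) fails. For (b), the minimum-cost $s$–$t$ path through $v$ satisfies $\widetilde{\mathcal{C}}_{p_v^0} \leq \widetilde{\mathcal{C}}_{p^*} = \widetilde{\mathcal{C}}^*$, again because $p^*$ itself is one such path through $v$. It therefore suffices to prove the invariant $\widetilde{\mathcal{C}}_{p_U} \geq \widetilde{\mathcal{C}}^*$ throughout Phase 1.

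The hard part, and really the only nontrivial step, is establishing this invariant. I would argue inductively over the pass of the \textbf{for}-loop in Phase 1. At initialization, $p_U = p^\infty$ is the minimum-weight path; since $p^*$ exists, we have $\delta_{p^\infty} = \min_p \delta_p \leq \delta_{p^*} \leq \delta_{\max}$, so $p^\infty$ is feasible and hence $\widetilde{\mathcal{C}}_{p^\infty} \geq \widetilde{\mathcal{C}}^*$. During Phase 1 the upper bound is updated only through the \textbf{ElsIf} branch, which replaces $p_U$ with either $p_v^\infty$ in the feasible case or with $p_v^0$ conditioned on $\delta_{p_v^0} \leq \delta_{\max}$; in both situations the replacement path is feasible, and so its surrogate cost is at least $\widetilde{\mathcal{C}}^*$ by optimality of $p^*$. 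Hence $\widetilde{\mathcal{C}}_{p_U} \geq \widetilde{\mathcal{C}}^*$ is preserved.

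Combining these observations, neither elimination rule fires on $v$, so $v$ survives COGR. Since $v$ was arbitrary on $p^*$ and COGR modifies the graph only by removing vertices together with their incident edges, every vertex and every edge of $p^*$ remains, and $p^*$ is intact in the graph produced at the end of Phase 1, which is precisely the claim of the corollary.
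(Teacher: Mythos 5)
Your proof is correct and follows essentially the same route as the paper, which obtains this corollary from the Phase 1 portion of the proof of Proposition~\ref{prop:path-preservation}: for each $v \in p^*$, neither COGR elimination rule can fire because $p^*$ itself witnesses feasibility and cost-competitiveness through $v$. The only difference is that you make explicit the invariant $\widetilde{\mathcal{C}}_{p_U} \geq \widetilde{\mathcal{C}}^*$ (since $p_U$ is always updated to a feasible path), a detail the paper leaves implicit, which is a welcome clarification rather than a different argument.
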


As noted earlier in Section~\ref{sec:COLOGR-Lagrangian}, 
the Lagrangian relaxation method may fail to recover the optimal path in certain degenerate cases.  
Nevertheless, in all simulation replications reported in Section~\ref{sec:MC-simulations},  
our proposed algorithm (COLOGR) consistently produced the optimal solution $p^*$.  
This section offers a structural explanation for this empirical robustness, 
focusing on cases with non-unique minimizers of the Lagrangian objective.

One such scenario involves multiple paths minimizing the modified cost  
$\widetilde{\mathcal{C}}_p + \lambda \delta_p$, 
only one of which corresponds to the true optimal path $p^*$ with respect to the original cost $\widetilde{\mathcal{C}}$.  
While the standard Lagrangian framework does not guarantee recovery of $p^*$ in this setting, 
the integration of the two-phase graph reduction process (TPVE) can resolve the ambiguity.  
The following proposition formalizes a sufficient condition for successful identification of $p^*$ in this case.

\begin{proposition}[Identification of $p^*$ via Unique Vertex Signature]
\label{prop-identify optimal case 1}
Suppose that for some Lagrange multiplier $\lambda^*$, multiple $(s,t)$-paths minimize the Lagrangian cost
\[
\widetilde{\mathcal{C}}_p + \lambda^*(\delta_p - \delta_{\max}),
\]
and that the true optimal path $p^*$ for the RCDP problem is among them.  
If there exists a vertex $v \in p^*$ such that no other minimum-cost path (at $\lambda^*$) passes through $v$, 
then the TPVE algorithm correctly identifies $p^*$.
\end{proposition}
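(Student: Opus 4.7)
The plan is to show that, at the degenerate multiplier $\lambda^*$, the LOGR vertex sweep uses the distinguishing vertex $v$ to single out $p^*$ among the tied Lagrangian minimizers, forcing the upper bound to coincide with $\widetilde{\mathcal{C}}^*$ and the returned path to realize $p^*$.

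First, I would establish the key structural fact that $p_{\lambda^*,v} = p^*$, where $p_{\lambda^*,v}$ denotes the minimum modified-cost $s$--$t$ path constrained to pass through $v$. By definition of the dual value, every $s$--$t$ path $p$ satisfies $\widetilde{\mathcal{C}}_p + \lambda^*(\delta_p - \delta_{\max}) \geq \Phi(\lambda^*)$, with equality if and only if $p$ is a Lagrangian minimizer at $\lambda^*$. Restricting attention to paths through $v$, the hypothesis that no other $\lambda^*$-minimizer passes through $v$ implies that $p^*$ is the unique such path attaining the bound, and any non-minimizer through $v$ has strictly greater modified cost. Hence $p_{\lambda^*,v} = p^*$.

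Second, I would argue that the iterative scheme of Section~\ref{sec:COLOGR-Lagrangian} actually evaluates $\lambda^*$. Since $\Phi$ is concave and piecewise linear, its breakpoints are exactly the multipliers at which two or more $s$--$t$ paths achieve the minimum modified cost, and $\lambda^*$ is one such breakpoint. The intersection formula in Equation~\eqref{eqn:lambda_new}, applied to any bracketing pair $(\lambda^-,\lambda^+)$ with $\delta^- < \delta_{\max} < \delta^+$, drives the iterates to such a breakpoint, and by Proposition~\ref{prop_optimal_multiplier}(ii) the degenerate $\lambda^*$ is precisely the optimal multiplier that the scheme targets; finitely many candidate paths produce only finitely many breakpoints, so $\lambda^*$ is reached in finitely many updates. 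Once $\lambda = \lambda^*$ is queried, LOGR enters its per-vertex sweep, and by the previous step the computation at $v$ returns $p^*$, which is feasible ($\delta_{p^*} \leq \delta_{\max}$) with cost $\widetilde{\mathcal{C}}^*$. Global optimality of $p^*$ ensures $\widetilde{\mathcal{C}}_{p_U} \geq \widetilde{\mathcal{C}}^*$, so the upper-bound update either overwrites $p_U$ with $p^*$ or leaves $p_U$ already storing a cost-$\widetilde{\mathcal{C}}^*$ path. Combined with the preservation guarantee of Proposition~\ref{prop:path-preservation}, $p^*$ survives all pruning in $G_{\text{adj}}$, so the terminal output of Algorithm~\ref{alg:unified} realizes $p^*$.

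The main obstacle is the second step: justifying that the bisection-style update actually lands on $\lambda^*$ in finite time rather than only approaching it in the limit, and reconciling the strict-inequality semantics of the $p_U$-update in Algorithm~\ref{alg:unified} so that $p^*$, and not merely a tied co-minimizer, is the path that is eventually stored. I would address the finiteness issue via the piecewise linearity of $\Phi$, which produces only finitely many breakpoints, together with the monotonic tightening of the bracket $[\lambda^+,\lambda^-]$ under Equation~\eqref{eqn:lambda_new}; and I would address the tie-breaking issue by observing that the per-vertex sweep explicitly computes $p_{\lambda^*,v} = p^*$, so $p^*$ is an available candidate for $p_U$ exactly when the strict inequality first holds, yielding identification of $p^*$ as claimed.
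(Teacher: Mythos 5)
Your proof is correct and takes essentially the same route as the paper: the decisive step in both is that the per-vertex computation at $v$ must return $p^*$, since no other Lagrangian minimizer at $\lambda^*$ passes through $v$, so the vertex signature singles out $p^*$ despite the tie. Your added material on reaching $\lambda^*$ via finitely many breakpoints and on the upper-bound/preservation bookkeeping only fills in details the paper's shorter proof leaves implicit, without changing the argument.
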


\begin{proof}
Since $p^*$ is feasible and attains the minimum relaxed cost at $\lambda^*$, 
it satisfies both the constraint $\delta_{p^*} \leq \delta_{\max}$ and the dual optimality condition.  
By assumption, vertex $v \in p^*$ is exclusive to $p^*$ among all minimum-cost paths at $\lambda^*$.  
The COLOGR algorithm (implementing the TPVE framework) evaluates, for each vertex, the best path passing through it.  
For $v$, this path must be $p^*$, as no other minimum-cost path shares that vertex.  
Thus, the vertex-specific structure uniquely identifies $p^*$, 
allowing the algorithm to recover it even in the presence of ties.
\end{proof}

The previous result demonstrates how structural uniqueness—via exclusive vertex participation—can
resolve ambiguity among multiple dual minimizers and guide the TPVE subroutine to the correct path. 

Beyond this, the COLOGR algorithm often succeeds even in the absence of structural uniqueness or zero duality gap. 
In particular, when the optimal path $p^*$ is feasible and minimizes the Lagrangian cost 
for some multiplier $\lambda \geq 0$, our algorithm still identifies $p^*$ correctly. 
This robustness explains the consistent empirical performance observed in Section~\ref{sec:MC-simulations}, 
even under conditions where strong duality is not theoretically guaranteed.

\begin{proposition}[Correct Recovery under Dual Attainment]
\label{prop-identify optimal case 2}
Suppose the optimal path $p^*$ for the RCDP problem is feasible, 
i.e., $\delta_{p^*} \leq \delta_{\max}$, 
and also minimizes the Lagrangian objective for some $\lambda \geq 0$:
\[
p^* \in \arg\min_{p \in \mathcal{P}(s,t)} \left\{ \widetilde{\mathcal{C}}_p + \lambda (\delta_p - \delta_{\max}) \right\}.
\]
Then the COLOGR algorithm correctly identifies $p^*$ as the optimal path, 
and $\lambda$ serves as an optimal Lagrange multiplier.
\end{proposition}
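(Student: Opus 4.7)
The plan is to combine weak Lagrangian duality with the vertex-wise exploration structure of TPVE and the preservation guarantee of Proposition~\ref{prop:path-preservation}.

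I would begin with weak duality: for every $\mu\ge 0$ and every feasible path $p$, $\widetilde{\mathcal{C}}_p + \mu(\delta_p-\delta_{\max})\le\widetilde{\mathcal{C}}_p$, and minimizing over feasible paths yields $\Phi(\mu)\le\widetilde{\mathcal{C}}^*$. Plugging the hypothesis into the definition of $\Phi$ and using $\widetilde{\mathcal{C}}_{p^*}=\widetilde{\mathcal{C}}^*$,
\[
\Phi(\lambda) \;=\; \widetilde{\mathcal{C}}^* + \lambda(\delta_{p^*}-\delta_{\max}),
\]
which equals $\widetilde{\mathcal{C}}^*$ under the complementary-slackness identity $\lambda(\delta_{p^*}-\delta_{\max})=0$ (covering both $\lambda=0$ and $\delta_{p^*}=\delta_{\max}$). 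Combined with the weak-duality upper bound $\Phi(\mu)\le\widetilde{\mathcal{C}}^*$ for all $\mu$, this shows $\Phi(\lambda)=\max_{\mu\ge 0}\Phi(\mu)=\widetilde{\mathcal{C}}^*$, certifying $\lambda$ as a dual-optimal multiplier and closing the duality gap.

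Next I would verify the algorithmic conclusion. By Proposition~\ref{prop:path-preservation}, $p^*$ survives every TPVE pruning step, so it remains in the reduced graph at every iteration of Algorithm~\ref{alg:unified}. When the LOGR loop evaluates, at the hypothesized $\lambda$, the Lagrangian shortest path $p_{\lambda,v}$ through each vertex $v\in p^*$, the returned $p_{\lambda,v}$ must attain the global Lagrangian minimum $\Phi(\lambda)$, since $p^*$ itself passes through $v$ and achieves that value. Rewriting,
\[
\widetilde{\mathcal{C}}_{p_{\lambda,v}} \;=\; \Phi(\lambda)-\lambda(\delta_{p_{\lambda,v}}-\delta_{\max}),
\]
so for feasible $p_{\lambda,v}$ we have $\widetilde{\mathcal{C}}_{p_{\lambda,v}}\le \Phi(\lambda)=\widetilde{\mathcal{C}}^*$; combined with the primal bound $\widetilde{\mathcal{C}}_{p_{\lambda,v}}\ge\widetilde{\mathcal{C}}^*$, equality holds. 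The upper-bound update in Algorithm~\ref{alg:unified} then sets $\widetilde{\mathcal{C}}_U\leftarrow\widetilde{\mathcal{C}}^*$, at which point $\widetilde{\mathcal{C}}_U=\Phi(\lambda)=\widetilde{\mathcal{C}}_L$, the \textbf{while} loop exits, and the algorithm returns an optimal path.

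The main obstacle I anticipate is twofold. First, the hypothesized $\lambda$ must actually be produced by the multiplier update in Equation~\eqref{eqn:lambda_new}; I would argue this from the monotone bracketing of $(\lambda^-,\lambda^+)$ described in Section~\ref{sec:COLOGR-Lagrangian} together with the concavity of $\Phi$, showing that the intersection iterates converge to a dual-optimal multiplier at which the preceding argument applies verbatim. Second, when multiple paths tie as Lagrangian minimizers at $\lambda$, the shortest-path subroutine might return one other than $p^*$ through some $v\in p^*$; the resolution is that any feasible tied minimizer is itself primal optimal by the equality calculation above, while infeasible tied minimizers are simply not used to update $\widetilde{\mathcal{C}}_U$, and scanning across every vertex of $p^*$ guarantees that at least one feasible optimal representative is eventually recorded. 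Cleanly handling these tie-breaking and iteration-attainment points—ideally without needing to impose complementary slackness as an extra hypothesis—will be the most delicate step.
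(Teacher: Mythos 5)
There is a genuine gap, and you in fact name it yourself at the end: your duality argument only goes through when $\lambda(\delta_{p^*}-\delta_{\max})=0$, i.e.\ when $\lambda=0$ or the budget is tight. But the hypothesis of the proposition also admits $\lambda>0$ together with $\delta_{p^*}<\delta_{\max}$, and this is precisely the substantive case (the paper's ``Case 2''). There your key identity fails: the computation gives
\[
\Phi(\lambda)\;=\;\widetilde{\mathcal{C}}^{*}+\lambda\bigl(\delta_{p^*}-\delta_{\max}\bigr)\;<\;\widetilde{\mathcal{C}}^{*},
\]
so the duality gap at $\lambda$ is strictly positive, the chain $\widetilde{\mathcal{C}}_{p_{\lambda,v}}\le\Phi(\lambda)=\widetilde{\mathcal{C}}^{*}$ collapses, and your termination argument via $\widetilde{\mathcal{C}}_U=\Phi(\lambda)=\widetilde{\mathcal{C}}_L$ (closing the \textbf{while} loop through gap closure) no longer applies. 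Since you defer exactly this case as ``the most delicate step'' rather than resolving it, the proof as written does not establish the proposition on its full hypothesis.

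The paper's own proof splits on whether the constraint is active. When $\delta_{p^*}=\delta_{\max}$ it argues as you do: the penalty vanishes, $\Phi(\lambda)=\widetilde{\mathcal{C}}_{p^*}$, strong duality holds, and COLOGR terminates with $p^*$. When $\delta_{p^*}<\delta_{\max}$ it does \emph{not} try to certify a zero gap at $\lambda$; instead it argues via upper-bound stabilization: because $p^*$ is feasible and is among the vertex-constrained minimum-cost paths evaluated in Phase~2, its cost updates $\widetilde{\mathcal{C}}_U$ to $\widetilde{\mathcal{C}}_{p^*}$, no feasible path can improve this bound, so the incumbent stabilizes at $p^*$ and the algorithm returns it (with retention guaranteed as in Proposition~\ref{prop:path-preservation}). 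To repair your proposal you would need to add this second argument (or an equivalent one) for the slack-budget case, rather than relying on complementary slackness; your discussion of ties and of feasible versus infeasible minimizers is consistent with the paper and can be kept for the case you do cover.
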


\begin{proof}
Assume $p^*$ is feasible and minimizes the relaxed cost function at $\lambda$, i.e.,
\[
\widetilde{\mathcal{C}}_{p^*} + \lambda(\delta_{p^*} - \delta_{\max}) = \Phi(\lambda),
\]
where $\Phi(\lambda)$ is the Lagrangian dual function. We consider two cases:

\textbf{Case 1:} $\delta_{p^*} = \delta_{\max}$.  
Then the penalty term vanishes, and $\Phi(\lambda) = \widetilde{\mathcal{C}}_{p^*}$.  
Since $p^*$ is feasible and attains this minimum, strong duality holds, and COLOGR terminates with $p^*$ as the optimal path.

\textbf{Case 2:} $\delta_{p^*} < \delta_{\max}$.  
Although the penalty term becomes negative, $p^*$ still minimizes the Lagrangian objective at $\lambda$.  
Because $p^*$ is feasible, its cost updates the best known upper bound $\widetilde{\mathcal{C}}_U$.  
Since no subsequent path can improve upon this bound, 
the upper bound stabilizes at $\widetilde{\mathcal{C}}_{p^*}$, and COLOGR eventually terminates with $p^*$.  
Moreover, as COLOGR evaluates minimum-cost paths through all vertices in Phase 2, 
and $p^*$ is among these, it is guaranteed to be retained.

Thus, regardless of whether the constraint is active at optimality, the COLOGR algorithm recovers $p^*$ and identifies the corresponding $\lambda$ as optimal.
\end{proof}

Building on Proposition~\ref{prop-identify optimal case 2}, 
we now examine a complementary setting in which 
the optimal RCDP path $p^*$ satisfies $\delta_{p^*} = \delta_{\max}$ 
but is not among the minimizers of the Lagrangian objective at the corresponding dual optimum.
Suppose under the optimal multiplier $\lambda^*$, 
there exist two paths $p_1$ and $p_2$ satisfying:
\begin{equation} \label{optimality_case2}
\widetilde{\mathcal{C}}_{p_1}+\lambda^*(\delta_{p_1}-\delta_{\max})=
\widetilde{\mathcal{C}}_{p_2}+\lambda^*(\delta_{p_2}-\delta_{\max})<\widetilde{\mathcal{C}}^*,
\end{equation}
with $\delta_{p_1} > \delta_{\max}$ and $\delta_{p_2} < \delta_{\max}$. 
Although $p_1$ violates the disambiguation budget and $p_2$ underutilizes it, 
neither satisfies the constraint with equality as $p^*$ does; 
hence, $p^*$ is excluded from the argmin of the dual objective at $\lambda^*$, 
despite being optimal under the original constraint.
The propositions below show that despite this, 
the proposed TPVE algorithm correctly identifies $p^*$ in both special and general cases.

\begin{remark}
\label{rem:assumptions-find-optimal}
\textbf{(Assumptions for Propositions~\ref{prop-identify opt case2-special}--\ref{prop:general-B-disamb})}
We first state some assumptions and introduce notation to be used 
in Propositions~\ref{prop-identify opt case2-special}--\ref{prop:general-B-disamb} below.
Let $X = \{x_1, \dots, x_k\}$ be a finite set of $k \geq 2$ ambiguous obstacles, 
each with disambiguation cost $\delta_x > 0$.
Let $\delta_{\max} > 0$ be the total disambiguation budget, 
and let $B$ be the maximal number of disambiguations permitted under this budget; 
i.e., at most $B$ obstacles $x$ can be disambiguated such that 
$\sum_{x \in S} \delta_x \leq \delta_{\max}$ for all $S \subseteq X$ with $|S| \leq B$.
Suppose there exists a path $p^*$ that intersects a subset of obstacles $X^* \subseteq X$ such that:
(i) $|X^*| \leq B$, and $\sum_{x \in X^*} \delta_x \leq \delta_{\max}$;
(ii) any other path $p \neq p^*$ either intersects a set $X_p$ of obstacles 
    with $\sum_{x \in X_p} \delta_x > \delta_{\max}$ (infeasible), 
    or avoids all obstacles but incurs strictly higher cost than $p^*$.
\end{remark}

\begin{proposition}[Correct Identification in a Two-Obstacle Scenario]
\label{prop-identify opt case2-special}
In an RCDP instance under the assumptions in Remark \ref{rem:assumptions-find-optimal} with $k=2$ and $|X^*| = B = 1$ 
(i.e supposing there are $k=2$ ambiguous obstacles $X = \{x_1, x_2\}$ in the environment, 
with $\min\{\delta_{x_1}, \delta_{x_2}\} \le \delta_{\max} < \max\{\delta_{x_1}, \delta_{x_2}\}$
and $p^*$ intersects exactly one of the obstacles, 
while any other path $p \neq p^*$ intersects both (infeasible) or neither but more costly than $p^*$)
the COLOGR algorithm correctly eliminates non-optimal paths and identifies $p^*$ as optimal.
\end{proposition}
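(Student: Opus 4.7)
The strategy is to combine the preservation guarantee of Proposition~\ref{prop:path-preservation} with a case analysis tailored to the two-obstacle setting, ultimately reducing to the correctness guarantees already established in Propositions~\ref{prop-identify optimal case 1} and~\ref{prop-identify optimal case 2}.

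First I would relabel obstacles so that $\delta_{x_1}\le\delta_{\max}<\delta_{x_2}$, which forces $p^*$ to cross $x_1$ alone. Under the assumptions of Remark~\ref{rem:assumptions-find-optimal}, every $(s,t)$-path then falls into exactly one of three classes: class (a) paths crossing only $x_1$, to which $p^*$ belongs and is the cheapest; class (b) obstacle-free paths, all feasible but with cost strictly greater than $\widetilde{\mathcal{C}}^*$; and class (c) paths crossing $x_2$ (possibly together with $x_1$), all infeasible because $\delta_{x_2}>\delta_{\max}$.

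Next I would trace Algorithm~\ref{alg:unified} through this structure. At initialization $p^\infty$ lies in class (b) with weight $0$, hence is feasible, so $p_U\gets p^\infty$ with $\widetilde{\mathcal{C}}_{p_U}>\widetilde{\mathcal{C}}^*$; the path $p^0$ is either $p^*$ itself (terminating on Line~3) or a class (c) path, in which case $p_L\gets p^0$ is infeasible and the algorithm enters Phase~1. By Corollary~\ref{cor:phase1-preservation} and Proposition~\ref{prop:path-preservation}, every vertex of $p^*$ survives both COGR and LOGR, so $p^*$ is always available in the reduced graph throughout the execution.

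The crux is to show that some iteration actually evaluates $p^*$ and thereby lowers $p_U$ to $\widetilde{\mathcal{C}}^*$. Because only three cost-classes appear, the Lagrangian dual $\Phi(\lambda)$ is a concave piecewise-linear function with at most three segments (one per class), and the multiplier update~(\ref{eqn:lambda_new}) converges in a constant number of steps. On any interval of $\lambda\ge 0$ on which class (a) strictly dominates classes (b) and (c)---which exists generically, since the three slopes $\delta_{x_1}-\delta_{\max}$, $-\delta_{\max}$, and $\delta_{x_1}+\delta_{x_2}-\delta_{\max}$ are pairwise distinct---the path $p^*$ attains the unique minimum of $\widetilde{\mathcal{C}}_p+\lambda\delta_p$, and Proposition~\ref{prop-identify optimal case 2} applies to conclude that COLOGR returns $p^*$. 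In the boundary (tie) regime, Proposition~\ref{prop-identify optimal case 1} instead applies once one exhibits a vertex of $p^*$ not shared with any tying competitor. The main obstacle I expect is precisely this degenerate regime: producing an exclusive vertex requires an exchange argument, since if every vertex of $p^*$ lay on some tying class (b) path, splicing in a short obstacle-free detour around $x_1$ at one of them would yield a class (b) path of cost at most $\widetilde{\mathcal{C}}^*$, contradicting assumption (ii) of Remark~\ref{rem:assumptions-find-optimal}. The remaining bookkeeping---monotonicity of the bounds $\widetilde{\mathcal{C}}_L,\widetilde{\mathcal{C}}_U$ and finiteness of the multiplier iteration---follows directly from Proposition~\ref{prop_optimal_multiplier}.
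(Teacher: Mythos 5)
There is a genuine gap at the crux of your argument. Your plan reduces the claim to Propositions~\ref{prop-identify optimal case 2} (dual attainment) and~\ref{prop-identify optimal case 1} (ties with an exclusive vertex), asserting that "generically" there is an interval of $\lambda$ on which the class-(a) line (the path $p^*$) strictly dominates the other two classes because the three slopes are pairwise distinct. Distinct slopes do not give this: with three lines $\widetilde{\mathcal{C}}_p+\lambda(\delta_p-\delta_{\max})$, the one with the intermediate slope (here $p^*$, with $\widetilde{\mathcal{C}}_{p_1}<\widetilde{\mathcal{C}}^*<\widetilde{\mathcal{C}}_{p_2}$ and $\delta_{p_2}<\delta_{p^*}\le\delta_{\max}<\delta_{p_1}$) can lie strictly above the lower envelope of the other two for \emph{every} $\lambda\ge 0$; this happens precisely when the obstacle-free line overtakes the $p^*$ line before the $p^*$ line overtakes the infeasible line, i.e.\ when $(\widetilde{\mathcal{C}}_{p_2}-\widetilde{\mathcal{C}}^*)/\delta_{p^*}<(\widetilde{\mathcal{C}}^*-\widetilde{\mathcal{C}}_{p_1})/(\delta_{p_1}-\delta_{p^*})$. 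That is exactly the duality-gap configuration of Equation~\eqref{optimality_case2}, which is the very setting this proposition is introduced to handle ("$p^*$ is excluded from the argmin of the dual objective at $\lambda^*$"). In that case neither of your two branches applies: Proposition~\ref{prop-identify optimal case 2} needs $p^*$ to minimize the relaxed objective at some $\lambda$, and Proposition~\ref{prop-identify optimal case 1} needs $p^*$ to be among the tied minimizers at $\lambda^*$; your exchange argument for an exclusive vertex is only invoked inside the tie branch, so it does not rescue the non-attainment case.

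The paper's proof does not go through dual attainment at all; it works directly with the vertex-level scanning of TPVE. Infeasible paths (those meeting both obstacles) are killed in Phase~1 because any vertex forced to carry both disambiguation costs fails the feasibility test $\delta_{p_v^\infty}\le\delta_{\max}$; for a vertex $v\in p^*$ not shared with the obstacle-free competitors, the best \emph{feasible} path through $v$ is $p^*$ itself, so the per-vertex evaluation drives the incumbent upper bound down to $\widetilde{\mathcal{C}}_{p^*}$; and since no feasible path is cheaper, the obstacle-free paths can never displace it, so COLOGR terminates with $p_U=p^*$ even though $p^*$ never minimizes the penalized cost. Your opening moves (the three-class decomposition, tracing the initialization, and invoking Proposition~\ref{prop:path-preservation}/Corollary~\ref{cor:phase1-preservation} for preservation) are consistent with the paper, and your splicing argument for an exclusive vertex is a useful ingredient; but to close the proof you would need to replace the "class (a) dominates on some interval" step with an argument, as in the paper, that the upper-bound updates through a vertex exclusive to $p^*$ recover $p^*$ regardless of whether it ever attains the Lagrangian minimum.
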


\begin{proposition}[Identification under Single Disambiguation Budget]
\label{prop:single-disamb-induction}
In an RCDP instance under the assumptions in Remark \ref{rem:assumptions-find-optimal} with $k \ge 2$ and $|X^*| = B = 1$ 
(i.e. supposing there are $k \ge 2$ ambiguous obstacles $X = \{x_1, \ldots, x_k\}$, 
and the disambiguation budget satisfies 
$\delta_{\max} < \min_{x_i + x_j} \delta_{x_i} + \delta_{x_j}$ for all distinct $x_i, x_j \in X$, 
and that there exists a feasible path $p^*$ intersecting exactly one obstacle $x^*$ 
such that $\delta_{x^*} \leq \delta_{\max}$)
and all other paths satisfy condition (ii) of Remark~\ref{rem:assumptions-find-optimal}.
COLOGR algorithm eliminates all such 
non-optimal paths and identifies $p^*$ as the optimal path.
\end{proposition}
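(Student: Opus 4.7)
The plan is to argue by induction on the number of ambiguous obstacles $k$, with Proposition~\ref{prop-identify opt case2-special} supplying the base case $k = 2$. For the inductive step with $k \ge 3$, the argument proceeds in three stages: first classify all feasible $(s,t)$-paths, then show that Phase~1 (COGR) preserves $p^*$ while eliminating every competitor, and finally invoke Proposition~\ref{prop-identify optimal case 2} to close out the Lagrangian phase.

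For the classification stage, the budget hypothesis $\delta_{\max} < \min_{i \neq j}(\delta_{x_i} + \delta_{x_j})$ forces every feasible path to intersect at most one obstacle. Combined with condition~(ii) of Remark~\ref{rem:assumptions-find-optimal}, the feasible paths reduce to exactly two types: $p^*$ itself, which intersects the single obstacle $x^*$ with $\delta_{x^*} \le \delta_{\max}$, and obstacle-free paths whose surrogate costs strictly exceed $\widetilde{\mathcal{C}}_{p^*}$. Any other candidate either intersects two or more obstacles (infeasible by the pairwise budget bound) or is a single-obstacle path whose disambiguation cost exceeds $\delta_{\max}$.

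For the pruning stage, Corollary~\ref{cor:phase1-preservation} guarantees that every vertex of $p^*$ is retained throughout Phase~1. For any $v \notin p^*$, I would split into cases. If every path through $v$ is infeasible, then $p_v^\infty$ satisfies $\delta_{p_v^\infty} > \delta_{\max}$, so the feasibility test of Algorithm~\ref{alg:unified} eliminates $v$. Otherwise some obstacle-free path passes through $v$ and $p_v^0$ is an obstacle-free path distinct from $p^*$; once the outer repeat loop processes a vertex of $p^*$ and updates $\widetilde{\mathcal{C}}_{p_U}$ to $\widetilde{\mathcal{C}}_{p^*}$, the dominance test $\widetilde{\mathcal{C}}_{p_v^0} > \widetilde{\mathcal{C}}_{p_U}$ fires and $v$ is removed. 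The inductive hypothesis is invoked on any residual subinstance in which one obstacle's disk has been entirely pruned: such a subinstance satisfies the premises with $k-1$ obstacles and the same $p^*$, so COLOGR is already known to identify it.

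Once Phase~1 stabilizes, the reduced graph contains $p^*$ together with, at most, vertices through which $p^*$ is the unique feasible path. If $p^*$ is isolated outright, it is returned directly. Otherwise Phase~2 (LOGR) begins with $\widetilde{\mathcal{C}}_{p_U} = \widetilde{\mathcal{C}}_{p^*}$, and Proposition~\ref{prop-identify optimal case 2}, under the dual-attainment argument with $p^*$ feasible and cost-minimizing on the pruned graph, guarantees correct termination with $p^*$ identified. The main obstacle I anticipate is scheduling: the dominance test only fires after $\widetilde{\mathcal{C}}_{p_U}$ tightens to $\widetilde{\mathcal{C}}_{p^*}$, yet the per-sweep ordering within COGR is unspecified. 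I would resolve this by appealing to the outer repeat loop of Algorithm~\ref{alg:unified}, which iterates until no further elimination occurs; after finitely many passes every vertex's best feasible path has been compared against the tightest attained upper bound, so all dominated vertices are removed while $p^*$ is preserved by Corollary~\ref{cor:phase1-preservation}.
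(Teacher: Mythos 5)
Your overall scaffolding (induction on $k$ with Proposition~\ref{prop-identify opt case2-special} as the base case, then a Phase-1/Phase-2 analysis) mirrors the paper, but the inductive step contains a genuine gap. In your "pruning stage" you assert that for a vertex $v \notin p^*$ through which some obstacle-free path passes, $p_v^0$ is an obstacle-free path, so the dominance test $\widetilde{\mathcal{C}}_{p_v^0} > \widetilde{\mathcal{C}}_{p_U}$ eventually fires and $v$ is removed. This is not true: $p_v^0$ is the \emph{unconstrained} minimum-surrogate-cost path through $v$, and in the very regime these propositions are meant to handle (Equation~\eqref{optimality_case2}) there is an infeasible path $p_1$ crossing two or more obstacles with $\widetilde{\mathcal{C}}_{p_1} < \widetilde{\mathcal{C}}^*$. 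A vertex $v$ lying on such a cheap infeasible path and also on some obstacle-free path fails both elimination tests ($\delta_{p_v^\infty} \le \delta_{\max}$ because an obstacle-free path passes through $v$, and $\widetilde{\mathcal{C}}_{p_v^0} < \widetilde{\mathcal{C}}_{p_U}$), so Phase~1 does \emph{not} eliminate every competitor, and your claim that after Phase~1 the graph contains only $p^*$ and vertices through which $p^*$ is the unique feasible path is unfounded.

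This matters because your closing step leans on Proposition~\ref{prop-identify optimal case 2}, whose hypothesis is dual attainment: $p^*$ must minimize the Lagrangian objective at some $\lambda \ge 0$. That is exactly the condition the paper states can fail in this setting (the discussion preceding Remark~\ref{rem:assumptions-find-optimal}), and being cost-minimal among \emph{feasible} paths on the pruned graph does not imply it; with cheap infeasible paths surviving Phase~1, the argument becomes circular. The paper's proof avoids both problems: it argues that infeasible paths can never become the incumbent (feasibility is checked before any update of $p_U$), that obstacle-free competitors never push the upper bound below $\widetilde{\mathcal{C}}_{p^*}$, and that in Phase~2 the lower bounds $\Phi(\lambda, p_{\lambda,v})$ at vertices of $p^*$ never exceed the upper bound, so $p^*$ is retained and eventually returned as $p_U$ — without claiming that Phase~1 clears all off-path vertices or that dual attainment holds. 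Your appeal to the inductive hypothesis on a "residual subinstance with one obstacle's disk entirely pruned" is also not justified (pruning need not remove a whole obstacle, and the reduced graph is not shown to satisfy the premises), though this is secondary; the paper's induction in fact uses a direct argument in the inductive step. To repair your write-up, replace the "Phase~1 eliminates every competitor" claim and the invocation of Proposition~\ref{prop-identify optimal case 2} with an incumbent/lower-bound argument of the paper's type.
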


\begin{proposition}[Identification under $B$-Budget Disambiguation]
\label{prop:general-B-disamb}
Under the assumptions of Remark~\ref{rem:assumptions-find-optimal} 
with general $k \ge 2$, $B \ge 1$ and $|X^*| \le B$
%
COLOGR algorithm eliminates all such 
non-optimal paths and identifies $p^*$ as the optimal path.
\end{proposition}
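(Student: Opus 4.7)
The plan is to establish the general $B \ge 1$ result by induction on $|X^*|$, using Proposition~\ref{prop:single-disamb-induction} as the base case and Proposition~\ref{prop:path-preservation} to guarantee that $p^*$ survives both phases of TPVE at every step. Since $p^*$ is feasible by assumption and, by condition (ii) of Remark~\ref{rem:assumptions-find-optimal}, no other feasible path has strictly smaller cost, the preservation guarantee immediately reduces the problem to showing that every non-optimal competitor path is eventually eliminated or dominated by the upper bound $\widetilde{\mathcal{C}}_U$ maintained by Algorithm~\ref{alg:unified}.

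For the base case $|X^*| \le 1$, Proposition~\ref{prop:single-disamb-induction} applies directly regardless of the nominal value of $B$, since the feasibility and cost-dominance hypotheses on $p^*$ are unchanged. For the inductive step, I would partition the alternatives into two classes: infeasible paths whose intersected obstacles exceed the budget, and feasible paths of strictly higher cost than $p^*$. In the COGR phase, any vertex $v$ lying exclusively on infeasible paths has $\delta_{p_v^\infty} > \delta_{\max}$ and is removed by the first elimination branch of Algorithm~\ref{alg:unified}; any vertex $v$ simultaneously on $p^*$ is retained by Corollary~\ref{cor:phase1-preservation}. For the feasible-but-costlier class, as soon as the Phase-2 evaluation reaches a vertex exclusive to $p^*$, the upper bound is updated to $\widetilde{\mathcal{C}}_U = \widetilde{\mathcal{C}}_{p^*}$, after which vertices $v$ satisfying $\widetilde{\mathcal{C}}_{p_v^0} > \widetilde{\mathcal{C}}_U$ are eliminated by the corresponding LOGR branch.

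The main obstacle will be the degenerate Lagrangian subcase captured by \eqref{optimality_case2}, in which $p^*$ fails to minimize $\widetilde{\mathcal{C}}_p + \lambda(\delta_p - \delta_{\max})$ for any $\lambda \ge 0$, yet remains optimal for the original constrained problem because the alternative minimizers $p_1, p_2$ are either infeasible or underutilize the budget. To resolve this, I would invoke the per-vertex structure of Phase 2: for each surviving vertex $v \in p^*$, COLOGR computes the best \emph{feasible} path through $v$, and condition (ii) of Remark~\ref{rem:assumptions-find-optimal} forces that minimum to coincide with $p^*$, since no feasible competitor achieves strictly smaller cost. Proposition~\ref{prop-identify optimal case 1} further supplies a tie-breaking mechanism whenever $p^*$ contains a vertex not shared by any competing Lagrangian minimizer, and Proposition~\ref{prop-identify optimal case 2} covers the complementary situation in which $p^*$ itself attains dual optimality.

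Combining these ingredients, the inductive argument terminates with $\widetilde{\mathcal{C}}_U = \widetilde{\mathcal{C}}_{p^*}$ and $p_U = p^*$, so COLOGR returns $p^*$ as claimed. The key technical care is in sequencing the vertex updates so that the upper-bound drop to $\widetilde{\mathcal{C}}_{p^*}$ happens before any vertex of $p^*$ could be mistakenly pruned; this is precisely what Proposition~\ref{prop:path-preservation} ensures, since the preservation conditions are satisfied throughout both COGR and LOGR regardless of the iterate order.
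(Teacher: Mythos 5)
Your proposal is correct and follows essentially the same route as the paper's own proof: an induction anchored at the single-disambiguation case (Proposition~\ref{prop:single-disamb-induction}), combined with the identical two-phase case analysis in which infeasible competitors are pruned by the Phase-1 feasibility test, feasible-but-costlier paths never displace the upper bound $\widetilde{\mathcal{C}}_{p^*}$, and $p^*$ itself is retained via the preservation guarantees. The only cosmetic difference is that you induct on $|X^*|$ while the paper inducts on the budget parameter $B$; in both versions the inductive hypothesis does little substantive work and the core argument is the same direct elimination/dominance analysis.
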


\begin{remark}
Note that in practice, we assume $B \leq k$, since at most $k$ distinct obstacles can be disambiguated. 
If $B > k$, then $B$ becomes redundant, and the only active constraint is whether the total disambiguation cost 
$\sum_{x \in S} \delta_x$ for any $S \subseteq X$ satisfies $\sum_{x \in S} \delta_x \leq \delta_{\max}$.
\end{remark}

The results in Propositions~\ref{prop-identify opt case2-special}--\ref{prop:general-B-disamb} 
(proof provided in the Appendix (Section \ref{app: proofs_sec3_1}))
establish that even when the Lagrangian objective does not favor the optimal path $p^*$, 
the structural pruning performed by TPVE preserves and ultimately identifies it. 
This includes scenarios where dual-optimal paths $p_1$ and $p_2$ mislead the optimizer 
despite being infeasible or suboptimal, as formalized in Equation~\eqref{optimality_case2}.

To further illustrate the utility of structural pruning, consider the representative case from 
Proposition~\ref{prop-identify opt case2-special}, where three competing paths exist:  
$p_1$ (infeasible with lower cost), $p_2$ (obstacle-free but costly), and $p^*$ 
(feasible and optimal under the constraint).  
Although $p_1$ violates the disambiguation budget ($\delta_{p_1} > \delta^*$) 
and $p_2$ is feasible but longer ($\delta_{p_2} < \delta^*$, 
$\widetilde{\mathcal{C}}_{p_2} > \widetilde{\mathcal{C}}^*$), 
the penalty structure in the dual formulation can cause both to appear superior to $p^*$ 
under certain multipliers, as illustrated in Equation~\eqref{optimality_case2}.

To mitigate this distortion and improve alignment between the dual and primal solutions, 
we adopt a strategy that increases the relative influence of the true traversal cost 
$\widetilde{\mathcal{C}}_p$ over the penalty term $\lambda(\delta_p - \delta_{\max})$. 
This is achieved by tuning the risk function parameters so that the risk term $r_p$ more sharply 
reflects obstacle uncertainty. In the RCDP setting, $r_p$ is determined by the structure of the 
environment and a user-defined exploration parameter (e.g., $\alpha$ in the linear undesirability model), 
along with the blockage probability marks $\pi_X$.

While disambiguation costs and obstacle probabilities are externally specified, 
the planner can still regulate the sensitivity of risk through the choice of risk function and its parameters. 
To formalize this idea, we define the \emph{risk blockage gradient (RBG)} as a measure of how steeply 
the path risk $r_p$ increases with blockage probability. 
This formulation generalizes the role of tuning parameters like $\alpha$ in the LU model 
by framing sensitivity in terms of a gradient concept applicable across risk models.

A higher RBG imposes steeper penalties on ambiguous paths, 
improving the separation between feasible and infeasible solutions in the dual objective. 
This enhances the convergence of COLOGR and reduces the likelihood of spurious dual optima.

Similar arguments apply when $\delta^* < \delta_{\max}$, 
though this case is often ruled out by structural dominance. 
If a feasible path with $\delta_p = \delta_{\max}$ has higher cost than $p^*$, 
and $p^*$ requires less disambiguation, then $p^*$ dominates such paths in both feasibility and cost. 
Realistically, adversarial environments are designed to avoid such dominance relationships, 
further supporting the utility of dual-guided reduction.

This intuition is formalized in the following property (proof provided in the Appendix (Section \ref{app: proofs_sec3_2})).

\begin{property}[Spurious Dual Optima Vanish under High RBG]
\label{prop:risk-blockage-gradient}
Let $p^*$ denote the optimal path for the RCDP problem, and let $\lambda^*$ be a Lagrange multiplier 
such that multiple paths minimize the relaxed objective $\widetilde{\mathcal{C}}_p + \lambda^*(\delta_p - \delta_{\max})$. 
Suppose some of these minimizers are either infeasible or suboptimal under the original constraint.  
Then, as the risk blockage gradient (RBG) increases—i.e., as the risk function becomes more sensitive 
to blockage probabilities—the likelihood that such spurious paths remain dual-optimal diminishes.  
For sufficiently high RBG, the surrogate cost $\widetilde{\mathcal{C}}_p$ dominates the dual objective, 
aligning the relaxed minimizers with the true constrained optimum $p^*$.
\end{property}

We next analyze the worst-case time complexity of the proposed COLOGR algorithm.

\begin{theorem}[Time Complexity of COLOGR Algorithm]
\label{thm:rdp-complexity}
Let $ G = (V, E) $ be a spatial grid graph with $ n_v = |V| $ vertices and $ n_e = |E| $ edges. 
Then the COLOGR algorithm solves the RCDP problem with worst-case time complexity:
\[
\mathcal{O}(n_v n_e^2 \log n_e (n_e + n_v \log n_v)).
\]
\end{theorem}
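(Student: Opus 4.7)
The plan is to decompose the total runtime of Algorithm~\ref{alg:unified} into its three principal computational blocks---graph initialization, Phase~1 vertex elimination (COGR), and the Lagrangian outer loop with its in-loop LOGR sweep---then bound each block in terms of the single shortest-path primitive cost $T_{\mathrm{SP}} = \mathcal{O}(n_e + n_v \log n_v)$ (Dijkstra with a Fibonacci heap), and finally verify that the claimed product dominates.

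First I would dispatch the easy parts. Graph initialization is a single pass over $E$ that aggregates intersecting obstacles into $\widetilde{\mathcal{C}}_e$ and $\delta_e$, costing $\mathcal{O}(n_e \cdot |X|)$, which is absorbed. For Phase~1 (COGR), every pass of the Repeat-loop requires, for each vertex $v$, two through-$v$ shortest-path queries, namely $p_v^{0}$ under $\widetilde{\mathcal{C}}_e$ and $p_v^{\infty}$ under $\delta_e$. Each through-$v$ query splits into an $s\!\to\!v$ and a $v\!\to\!t$ Dijkstra call and costs $\mathcal{O}(T_{\mathrm{SP}})$, so one sweep costs $\mathcal{O}(n_v\,T_{\mathrm{SP}})$. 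Because each pass must either terminate or eliminate at least one vertex, the Repeat-loop executes at most $\mathcal{O}(n_v)$ times, yielding a COGR bound of $\mathcal{O}(n_v^2\,T_{\mathrm{SP}})$. This is strictly dominated by the Lagrangian block analyzed next, so it does not enter the final bound.

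The core of the proof is the outer While-loop. Its cost decomposes as (number of Lagrangian updates of $\lambda$) $\times$ (per-iteration work). The per-iteration work is one sweep over $V$, with each vertex requiring one through-$v$ shortest-path query under the penalized edge cost $\widetilde{\mathcal{C}}_e + \lambda \delta_e$ plus the single unconstrained Dijkstra call that produced $p_\lambda$; this totals $\mathcal{O}(n_v\,T_{\mathrm{SP}})$ per Lagrangian iteration. To count the iterations, I would appeal to the breakpoint structure of the dual function $\Phi(\lambda)$ defined in~\eqref{eqn:phi-for-lagrange}: $\Phi$ is the lower envelope of one affine function per $s$--$t$ path in $G$, each of slope $\delta_p - \delta_{\max}$, so $\Phi$ is concave and piecewise linear, and the update rule~\eqref{eqn:lambda_new} visits each breakpoint of this envelope at most once in the spirit of Handler--Zang and J\"uttner et al.~\citep{Juttner2001}. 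Standard counting of the distinct realizable $(\widetilde{\mathcal{C}}_p,\delta_p)$ pairs arising from shortest paths in a grid bounds the number of breakpoints by $\mathcal{O}(n_e^2)$, and the heap-based maintenance of the active candidate set contributes an additional $\log n_e$ factor, yielding an iteration count of $\mathcal{O}(n_e^2 \log n_e)$.

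Multiplying the per-iteration cost $\mathcal{O}(n_v\,T_{\mathrm{SP}})$ by the $\mathcal{O}(n_e^2 \log n_e)$ iterations produces the dominant term
\[
\mathcal{O}\!\left(n_v\,n_e^2\,\log n_e\,(n_e + n_v \log n_v)\right),
\]
into which initialization and COGR are absorbed. The main obstacle will be making the $\mathcal{O}(n_e^2 \log n_e)$ breakpoint bound airtight: I would argue it in two stages---first an envelope-complexity argument bounding the number of linear pieces of $\Phi$ realized as shortest paths in a grid, then a monotonicity argument showing that the intersection update in~\eqref{eqn:lambda_new} cannot revisit an eliminated breakpoint. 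A secondary subtlety is guarding against the dynamic deletion of vertices distorting the breakpoint count; but since LOGR only shrinks the admissible path set and Proposition~\ref{prop:path-preservation} ensures $p^*$ is retained, the pre-pruning count remains a valid upper bound throughout, completing the worst-case analysis.
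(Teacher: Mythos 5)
Your overall decomposition mirrors the paper's: Phase~1 (COGR) is bounded by a number of through-vertex Dijkstra sweeps at cost $\mathcal{O}(n_e + n_v\log n_v)$ each and is dominated, and Phase~2 is bounded as (number of Lagrangian iterations) $\times$ $\mathcal{O}(n_v(n_e+n_v\log n_v))$ per-iteration work, giving the stated product. The paper's proof is exactly this, with the iteration count $\mathcal{O}(n_e^2\log n_e)$ taken directly from the known strongly polynomial bound for LARAC-type dual searches \citep{juttner2005resource}.

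The genuine gap is in your justification of that iteration count, which you yourself flag as the main obstacle. Your proposed argument---that ``standard counting of the distinct realizable $(\widetilde{\mathcal{C}}_p,\delta_p)$ pairs'' bounds the number of breakpoints of $\Phi(\lambda)$ by $\mathcal{O}(n_e^2)$, with an extra $\log n_e$ attributed to heap maintenance---does not go through as stated. The dual function $\Phi$ is the lower envelope of one line per $s$--$t$ path, and the number of paths (hence candidate lines, hence potential envelope pieces) is not polynomially bounded; indeed, for parametric shortest paths the number of breakpoints can be superpolynomial in general (Carstensen-type constructions), so no ``standard counting'' of cost--weight pairs yields $\mathcal{O}(n_e^2)$ pieces, and restricting to a grid graph does not by itself rescue the claim. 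The $\mathcal{O}(n_e^2\log n_e)$ figure in the literature is not an envelope-complexity bound at all: it bounds the number of iterations of the intersection update \eqref{eqn:lambda_new} (a discrete Newton/bisection scheme on the concave dual), proved by a Radzik-style analysis of how fast the bracketing pairs $(\lambda^-,\lambda^+)$ can change; the $\log n_e$ factor is intrinsic to that analysis, not a data-structure overhead. So either invoke that result as the paper does, or replace your two-stage ``envelope complexity plus monotonicity'' plan with an actual Newton-iteration argument. Your secondary point---that dynamic vertex deletion only shrinks the admissible path set, so a pre-pruning iteration bound remains valid and Proposition~\ref{prop:path-preservation} keeps $p^*$ in play---is a sound and worthwhile observation that the paper leaves implicit. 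Your more conservative $\mathcal{O}(n_v^2(n_e+n_v\log n_v))$ accounting for the COGR repeat-loop (versus the paper's single factor of $n_v$) is harmless since it is still dominated by Phase~2.
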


\begin{proof}
The COLOGR algorithm proceeds in two phases:

In \textbf{Phase 1}, COGR applies vertex pruning based on feasibility and cost bounds.  
Each iteration involves:
\begin{enumerate}
    \item Computing shortest paths with respect to weight and cost via Dijkstra’s algorithm: $ \mathcal{O}(n_e + n_v \log n_v) $.
    \item Evaluating elimination criteria and updating bounds: $ \mathcal{O}(n_v + n_e) $.
\end{enumerate}
This phase may iterate over all $n_v$ vertices, leading to a total cost of $ \mathcal{O}(n_v (n_e + n_v \log n_v)) $.

In \textbf{Phase 2}, LOGR solves the Lagrangian relaxation using breakpoint updates over the multiplier $\lambda$.
The number of breakpoints is bounded by $ \mathcal{O}(n_e^2 \log n_e) $ \citep{juttner2005resource}, 
and each iteration involves a shortest path computation and vertex-level updates, costing $ \mathcal{O}(n_v (n_e + n_v \log n_v)) $.
Hence, the total cost of LOGR is:
\[
\mathcal{O}(n_v n_e^2 \log n_e (n_e + n_v \log n_v)).
\]

The overall complexity is dominated by Phase 2, establishing the stated bound.
\end{proof}

\noindent
In practice, the initial pruning in COGR substantially reduces the graph size, leading to faster convergence in the Lagrangian phase.  
This yields a notable improvement over baseline methods such as \textsc{SNE}, which incur higher complexity of 
$\mathcal{O}(n_v^2 n_e^2 \log^4 n_e (n_e + n_v \log n_v))$.

\section{Design and Analysis of the RCDP Traversal Policy}
\label{sec:RCDP-policy}

Building on the COLOGR algorithm introduced in Section~\ref{sec:COLOGR-comb}, 
we now present the complete \emph{RCDP traversal protocol}, 
which integrates duality-based optimization with two-phase graph reduction. 
This section formalizes the traversal strategy, 
establishes theoretical guarantees, and illustrates its behavior using a stylized grid-based example. 
The decision process is visualized in Figure~\ref{fig:flowchart}, and 
appendix provides detailed path visualizations and execution logs for reproducibility.

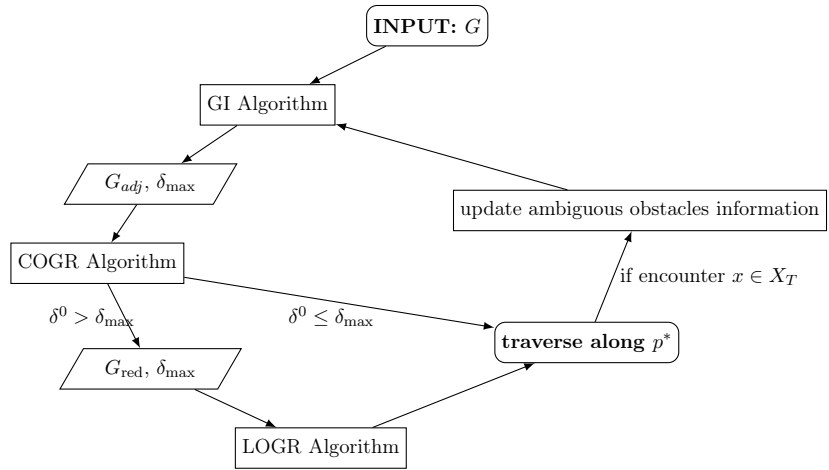
\begin{figure}[h]
	\centering
\begin{tikzpicture}[scale=0.7,transform shape]
    \node [terminator, fill=white!20] at (0.0, 0) (start) {\textbf{INPUT: $G$}};
    \node [process, fill=white!20] at (-3.0, -1.5) (process1) {GI Algorithm};
    \node [data, fill=white!20] at (-5.2, -3) (data1) {$G_{adj}$, $\delta_{\max}$};
    \node [process, fill=white!20] at (-6.2, -4.5) (process2) {COGR Algorithm};

    \node [data, fill=white!20] at (-5.2, -6.5) (data2) {$G_{\text{red}}$, $\delta_{\max}$};
    \node [process, fill=white!20] at (-2.0, -8) (process3) {LOGR Algorithm};

    \node [terminator, fill=white!20] at (3, -6) (result2) {\textbf{traverse along $p^*$}};
    \node [process, fill=white!20] at (4, -3.5) (process4) {update ambiguous obstacles information};

    \node[draw=none] at (-6.3, -5.5) (no) {$\delta^0 > \delta_{\max}$};
    \node[draw=none] at (-1.8, -5.5) (yes1) {$\delta^0 \leq \delta_{\max}$};
    \node[draw=none] at (5.3, -4.75) (yes2) {if encounter $x \in X_T$};

    \path [connector] (start) -- (process1);
    \path [connector] (process1) -- (data1);
    \path [connector] (data1) -- (process2);
    \path [connector] (process2) -- (data2);
    \path [connector] (data2) -- (process3);
    \path [connector] (process3) -- (result2);
    \path [connector] (process2) -- (result2); 
    \path [connector] (result2) -- (process4);
    \path [connector] (process4) -- (process1);
\end{tikzpicture}

    \caption{Schematic overview of the RCDP traversal protocol (Algorithm~\ref{alg:RCDP}).}
	\label{fig:flowchart}
\end{figure}

\subsection{Algorithmic Framework}
\label{sec:policy-description}

The RCDP policy operates on the adjusted graph $G_{\text{adj}}$ generated by Algorithm~\ref{graph_initialization}.  
At each step, it solves the constrained surrogate optimization problem in Equation~\eqref{eqn:WCSPP_restate}, 
examines the first edge of the resulting path, and decides whether to disambiguate based on the available budget.  
If disambiguation is feasible, the agent updates obstacle information; otherwise, it re-plans accordingly.

The full traversal logic is provided in Algorithm~\ref{alg:RCDP}.  
The subroutines comprising COLOGR are detailed in the Appendix.

\begin{algorithm}[H]
\caption{RCDP Traversal Policy}
\label{alg:RCDP}
\begin{algorithmic}[1]
\footnotesize
\Statex \textbf{Input:} $G_{\text{adj}}$, start $s$, target $t$; risk function $r_x$; disambiguation budget $\delta_{\max}$
\Statex \textbf{Output:} Traversal path $p^{*}$
\State $v \gets s$;\; $p \gets \{s\}$
\While{$v \neq t$}
  \State Compute path $p^{\text{next}}$ from $v$ to $t$ via Equation~\eqref{eqn:WCSPP_restate}
  \State Let $e = (v,u)$ be the first edge on $p^{\text{next}}$
  \If{$e$ intersects ambiguous obstacle $x$}
      \State Disambiguate $x$;\; $\delta_{\max} \gets \delta_{\max} - \delta_x$
      \If{$x$ is blocking} \textbf{continue} \Comment{Re-plan due to blocked edge}
      \EndIf
  \EndIf
  \State $v \gets u$;\; $p \gets p \cup \{u\}$
\EndWhile
\State \Return $p^{*} = p$
\end{algorithmic}
\end{algorithm}

\subsection{Theoretical Guarantees and Benchmark Comparison}
\label{sec:theory-policy}

In this section, we establish key theoretical guarantees for the proposed RCDP policy.  
We focus on two aspects: (i) feasibility with respect to the disambiguation budget constraint,  
and (ii) optimality of the surrogate cost under the selected risk function.  
We then compare the performance of RCDP and greedy traversal policies under a unified risk 
modeling framework.

\noindent
\textbf{(i) Feasibility:}
Let $\delta_p$ denote the total disambiguation cost incurred along path $p$.  
By construction, Algorithm~\ref{alg:RCDP} ensures that 
the disambiguation budget constraint $\delta_{p^*} \leq \delta_{\max}$ is respected throughout the traversal.  
This guarantees that the disambiguation budget constraint is never violated throughout the 
traversal.

\noindent
\textbf{(ii) Approximate Optimality in Surrogate Metric:}
At each iteration, the solver for the Lagrangian-relaxed problem in Equation~\eqref{eqn:WCSPP_restate} 
returns a globally optimal path on the reduced graph, minimizing the surrogate cost 
$\widetilde{\mathcal{C}}_p$ over the current feasible set.  
The graph reduction procedure, executed via COLOGR Algorithm, 
preserves $s$–$t$ connectivity and ensures that surrogate cost comparisons 
among retained paths remain valid, thereby supporting consistent surrogate optimization after graph reduction.

\begin{theorem}[Feasibility and Surrogate Optimality of COLOGR]
\label{thm:policy-correctness}
Let $G_{\text{adj}} = (V, E)$ be the reduced graph produced by the COLOGR algorithm for the RCDP problem, 
and let $r_x$ be a fixed risk function used to compute the surrogate edge cost $r_e$ for each edge $e \in E$. 
Let $\widetilde{\mathcal{C}}_p = \sum_{e \in p} (\ell_e + r_e)$ denote
the surrogate cost for path $p \in \mathcal{P}(s,t; G_{\text{adj}})$, as introduced in Equation~\eqref{eqn:surrogate-cost}.
Assume that there exists a path $p^* \in \mathcal{P}(s,t; G_{\text{adj}})$ that 
both minimizes the Lagrangian-relaxed objective at the optimal multiplier $\lambda^*$ 
and satisfies the budget constraint $\delta_{p^*} \le \delta_{\max}$. 
Then $p^*$ is a feasible surrogate-optimal solution:
\[
\delta_{p^*} \leq \delta_{\max}, \quad \text{and} \quad 
\widetilde{\mathcal{C}}_{p^*} = \min \left\{ \widetilde{\mathcal{C}}_p : 
p \in \mathcal{P}(s,t; G_{\text{adj}}), \; \delta_p \leq \delta_{\max} \right\}.
\]
\end{theorem}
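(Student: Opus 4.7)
The theorem asks for two conclusions: feasibility $\delta_{p^*} \le \delta_{\max}$ and surrogate optimality $\widetilde{\mathcal{C}}_{p^*} = \min\{\widetilde{\mathcal{C}}_p : p \in \mathcal{P}(s,t;G_{\text{adj}}),\ \delta_p \le \delta_{\max}\}$. Feasibility is essentially handed over by hypothesis; the plan is simply to record that Algorithm~\ref{alg:RCDP} enforces a running deduction of $\delta_x$ from the remaining budget each time a disambiguation occurs, so the inequality $\delta_{p^*} \le \delta_{\max}$ is a loop invariant. The substantive content lies in surrogate optimality, and I would proceed in three steps.

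\emph{Step 1 (graph-level retention).} First, I would invoke Proposition~\ref{prop:path-preservation}, together with Corollary~\ref{cor:phase1-preservation}, to guarantee that $p^*$ survives both COGR and LOGR and therefore still belongs to $\mathcal{P}(s,t;G_{\text{adj}})$. Without this retention the minimization on the right-hand side of the claim would not even contain $p^*$, so this step is not cosmetic — it is precisely what entitles us to compare $\widetilde{\mathcal{C}}_{p^*}$ against the surviving feasible paths.

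\emph{Step 2 (weak duality).} Next I would carry out the standard Lagrangian duality inequality. Fix any feasible competitor $q \in \mathcal{P}(s,t;G_{\text{adj}})$ with $\delta_q \le \delta_{\max}$. Since $p^*$ minimizes $\widetilde{\mathcal{C}}_p + \lambda^*(\delta_p - \delta_{\max})$,
\[
\widetilde{\mathcal{C}}_{p^*} + \lambda^*(\delta_{p^*} - \delta_{\max}) \;\le\; \widetilde{\mathcal{C}}_q + \lambda^*(\delta_q - \delta_{\max}) \;\le\; \widetilde{\mathcal{C}}_q,
\]
where the last inequality uses $\lambda^* \ge 0$ and $\delta_q \le \delta_{\max}$. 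This reduces surrogate optimality to controlling the nonnegative slack $\lambda^*(\delta_{\max} - \delta_{p^*})$ on the left-hand side.

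\emph{Step 3 (case split).} I would close the argument by splitting on whether the budget is tight at $p^*$. If $\delta_{p^*} = \delta_{\max}$, the slack vanishes and the display above directly yields $\widetilde{\mathcal{C}}_{p^*} \le \widetilde{\mathcal{C}}_q$, which is standard strong duality at an active constraint. The delicate case is $\delta_{p^*} < \delta_{\max}$, where weak duality alone leaves a positive residual; this is the main obstacle of the proof. I would handle it by invoking Proposition~\ref{prop-identify optimal case 2} (Case~2), which is tailored to exactly this degenerate regime: because $p^*$ is feasible, Lagrangian-minimizing at $\lambda^*$, and preserved in $G_{\text{adj}}$, the vertex-wise best-through-$v$ sweep in Phase~2 of Algorithm~\ref{alg:unified} must discover $p^*$ and set the incumbent $\widetilde{\mathcal{C}}_U \leftarrow \widetilde{\mathcal{C}}_{p^*}$, after which no subsequent feasible path can undercut that bound (any path with strictly smaller surrogate cost would have to update $\widetilde{\mathcal{C}}_U$ first, contradicting that $p^*$ minimizes the Lagrangian). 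The subtlety I expect to warrant the most care is ruling out that a \emph{different} relaxed minimizer at $\lambda^*$ with $\delta_p > \delta_{\max}$ somehow displaces $p^*$ from the search; this is precisely where the explicit feasibility check preceding every upper-bound update in Algorithm~\ref{alg:unified} does the work, and I would cite it in passing.

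Combining the two cases gives $\widetilde{\mathcal{C}}_{p^*} \le \widetilde{\mathcal{C}}_q$ for every feasible $q \in \mathcal{P}(s,t;G_{\text{adj}})$, which together with the feasibility of $p^*$ yields the claimed equality and completes the proof. The hard part is unambiguously the strict-slack case, where pure convex duality breaks down and we must lean on the algorithmic invariants of TPVE established in Propositions~\ref{prop:path-preservation}--\ref{prop-identify optimal case 2}.
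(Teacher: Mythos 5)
Your Steps 1 and 2 are unobjectionable: Step 1 is in fact superfluous, since the theorem's hypothesis already places $p^*$ in $\mathcal{P}(s,t;G_{\text{adj}})$, and Step 2 is exactly the weak-duality inequality that the paper's own (very short) proof relies on. The genuine gap is in Step 3, the strict-slack case $\delta_{p^*}<\delta_{\max}$. First, Proposition~\ref{prop-identify optimal case 2} cannot be invoked there without circularity: its hypothesis is that $p^*$ \emph{is} the optimal constrained path for the RCDP problem, and its conclusion is that COLOGR recovers it; it does not assert that a feasible Lagrangian minimizer is surrogate-optimal, which is precisely what you are trying to establish. Second, the parenthetical contradiction you lean on --- that a feasible path with strictly smaller surrogate cost ``would have to update $\widetilde{\mathcal{C}}_U$ first, contradicting that $p^*$ minimizes the Lagrangian'' --- is not a contradiction at all: a feasible $q$ can have $\widetilde{\mathcal{C}}_q<\widetilde{\mathcal{C}}_{p^*}$ and yet a larger Lagrangian value at $\lambda^*$, because its penalty $\lambda^*(\delta_q-\delta_{\max})$ is less negative when $\delta_q>\delta_{p^*}$. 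Schematically, take three $s$--$t$ paths with (surrogate cost, weight) equal to $(10,0)$, $(9,4)$, $(2,6)$ and $\delta_{\max}=4$: the dual maximizer is $\lambda^*=4/3$, the path $(10,0)$ is feasible with strict slack and minimizes the Lagrangian at $\lambda^*$, yet the feasible path $(9,4)$ is strictly cheaper. So the incumbent-update/TPVE invariants do not close the slack case; the only way to reach the claimed equality there is the complementary-slackness identity $\lambda^*(\delta_{p^*}-\delta_{\max})=0$.

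For comparison, the paper's proof takes the same dual-attainment route as your Step 2 and then simply asserts that ``the penalty term vanishes'' because $\delta_{p^*}\le\delta_{\max}$ --- i.e., it implicitly assumes complementary slackness (valid only when $\lambda^*=0$ or the constraint is tight), rather than appealing to algorithmic behavior. Your tight-constraint case is correct and matches the paper; your slack case replaces the paper's (unstated) complementary-slackness step with an algorithmic argument that does not go through, so as written the proposal does not prove the slack case, and no citation of Propositions~\ref{prop:path-preservation}--\ref{prop-identify optimal case 2} repairs it.
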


\noindent\textit{Proof provided in the Appendix (Section \ref{app: proofs_sec4}).}

\paragraph{Comparison with Greedy Policies.} 
In low-density environments or when the disambiguation budget is relatively high  
(e.g., $n = 20$ or $40$ obstacles, and 
maximum number of disambiguation allowed is $N_{\max} = 3$ or 
a total cost limit $\delta_{\max} = 4,6$),  
some greedy policies may appear competitive with RCDP variants.  
This is expected, as the advantage of informed disambiguation is less pronounced  
in such scenarios.  
Nonetheless, across a wide range of settings and under consistent risk modeling,  
the RCDP policy demonstrably yields lower expected traversal costs  
than its greedy counterparts.

We now formalize this performance guarantee under the assumption that the risk function  
$r_x$ used to define $\widetilde{\mathcal{C}}_p$ is nondecreasing in the disambiguation cost  
$\delta_x$ and nondecreasing in the probability of blockage $(\pi_x)$.

\begin{property}
\label{prop:conditional-dominance}
\textbf{(Conditional Dominance of RCDP Policy)}
Under suitable conditions on the surrogate risk model—specifically, when the risk function is non-decreasing in the probability of obstacle blockage, and surrogate cost ordering is consistent with expected cost ordering—the RCDP policy achieves lower expected traversal cost than any greedy baseline policy that ignores resource constraints in its initial plan.
\end{property}
\noindent
\textit{Formal Statement and Proof.} The full proposition and its proof are presented in the Appendix (Section \ref{app: proofs_sec4}).
This property provides a theoretical justification for the empirical performance advantage of the RCDP policy observed in our experiments (Section~\ref{sec:MC-simulations}). While the result relies on strong assumptions regarding surrogate model fidelity, it illustrates how constraint-aware optimization can outperform greedy heuristics, especially in environments with heterogeneous risk and cost structures.

\subsubsection*{Bayesian Tuning of Linear Undesirability Risk Function and Consistency Result}
To operationalize the linear undesirability (LU) risk function, 
we examined the impact of the tuning parameter $\alpha$ on traversal performance 
across varying obstacle densities and true obstacle proportions.
Our results indicate that higher $\alpha$ values are beneficial in environments with denser true obstacles, 
as they enhance the policy's ability to penalize risky paths.
We also propose a Bayesian-adjusted extension in which $\alpha$ is obstacle-specific, 
determined via posterior estimates of being a true obstacle.
Full details, including simulation results and a principled derivation 
of the Bayesian linear undesirability function $r^{LB}$, 
are provided in the Appendix (Section~\ref{app: sec_alpha}).

Moreover, we show that the RCDP policy using the Bayesian LU risk function achieves asymptotic optimality: 
as sensor accuracy improves, 
its expected cost converges to that of the offline benchmark. 
This convergence result provides further justification 
for using the Bayesian-adjusted LU model in high-precision settings.

\subsection{Illustrative Example}
\label{sec:toy-example}

To illustrate the RCDP traversal policy and benchmark its performance against a greedy baseline, 
we consider a simple test case. 
Specifically, we compare it against the constrained RD policy from \citet{aksakalliari2014}. 
Extensive simulation-based comparisons are deferred to Section~\ref{sec:MC-simulations}.

We define a compact traversal region $\Omega = [,0,22] \times [,0,14]$ and discretize it using an 8-adjacency integer grid. The source and target locations are $s = (11,14)$ and $t = (11,1)$, respectively. Six disk-shaped obstacles of radius $r = 2.5$ are placed in the region, with blockage probabilities $\pi = (0.7, 0.2, 0.1, 0.3, 0.1, 0.4)$ and disambiguation costs $\delta = (1, 2, 1, 1, 2, 2)$. The traversal budget is $\delta_{\max} = 2$.

Figure~\ref{fig:toy-example} presents this scenario and the resulting traversals. The RCDP policy—solved using COLOGR with the undesirability risk function ($\alpha = 15$)—identifies the optimal path on a reduced graph (panel b). The first disambiguation occurs at obstacle 4. Depending on its status:
\begin{itemize}[,leftmargin=*]
\item[(i)] If false, the agent proceeds directly to $t$ (panel c).
\item[(ii)] If true, a replan is triggered from the disambiguation node with updated budget $\delta_{\max} = 1$ (panel d).
\end{itemize}
This results in an expected cost of:

$$
\mathbf{E}[\mathcal{C}_p] = 0.7 \cdot (19.8995 + 1) + 0.3 \cdot (23.5563 + 1) = 21.9966.
$$

By contrast, the constrained RD policy (panel e) initiates disambiguation at obstacle 3. If it is false, the agent attempts obstacle 5, but its cost exceeds the remaining budget. It then redirects to obstacle 4. If obstacle 3 is true, the agent bypasses directly to obstacle 4. The expected cost is:
\begin{align*}
\mathbf{E}[\mathcal{C}_p] = & (0.9 \cdot 0.7)(24.7280 + 2) + (0.1 \cdot 0.7)(20.4853 + 2) \\
& + (0.1 \cdot 0.3)(24.1421 + 2) + (0.9 \cdot 0.3)(31.3136 + 2) = 28.1916,
\end{align*}
which is substantially higher than under the RCDP policy.

This example highlights the key advantage of constraint-aware planning: the RCDP policy integrates the disambiguation budget directly into traversal decisions, yielding more cost-effective plans than greedy approaches that lack global constraint awareness.

\begin{figure}[htbp]
\centering
\begin{subfigure}[,b]{0.48\textwidth}
\centering
\includegraphics[,width=\textwidth]{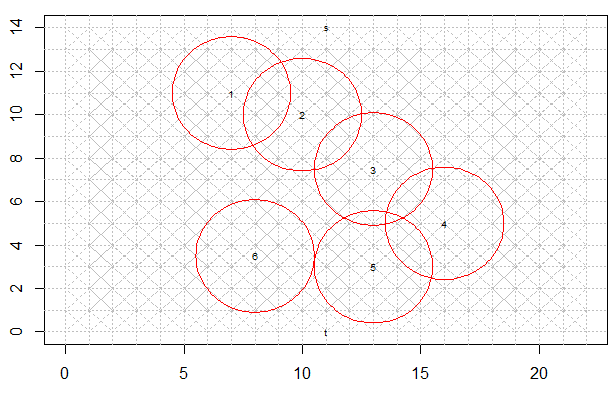}
\caption{Initial obstacle configuration}
\end{subfigure}
\hfill
\begin{subfigure}[,b]{0.48\textwidth}
\centering
\includegraphics[,width=\textwidth]{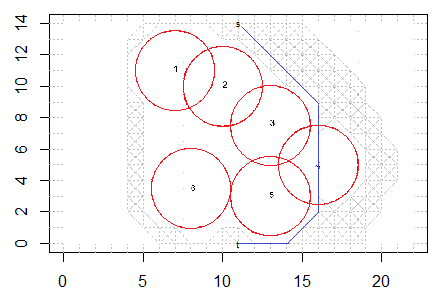}
\caption{Reduced graph and planned path (RCDP)}
\end{subfigure}
\\[0.3cm]
\begin{subfigure}[,b]{0.48\textwidth}
\centering
\includegraphics[,width=\textwidth]{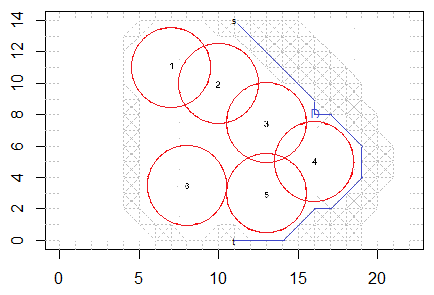}
\caption{Path after false obstacle (RCDP)}
\end{subfigure}
\hfill
\begin{subfigure}[,b]{0.48\textwidth}
\centering
\includegraphics[,width=\textwidth]{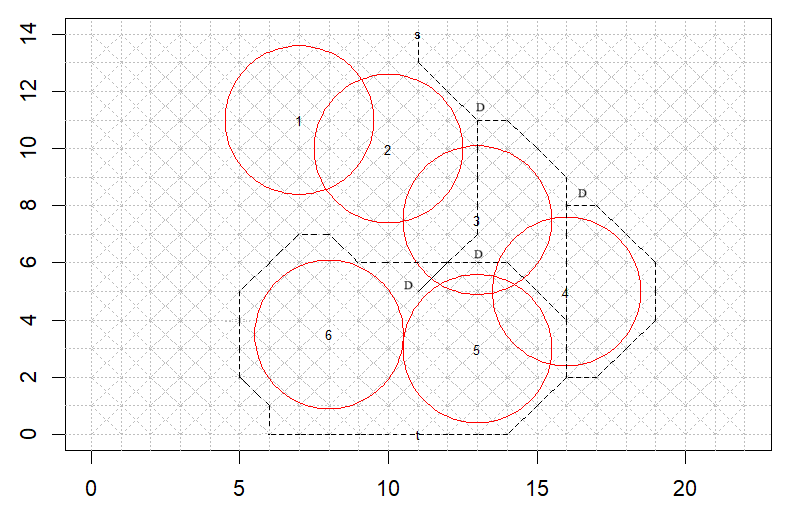}
\caption{Constrained RD policy execution}
\end{subfigure}
\caption{Toy example comparing RCDP with constrained RD policy. 
Panels (a)--(c) show the RCDP setup and execution under $\delta_{\max} = 2$. 
Panel (d) shows the less efficient behavior of the constrained RD policy.}
\label{fig:toy-example}
\end{figure}

\section{Empirical Evaluation via Monte Carlo Simulation}
\label{sec:MC-simulations}

We evaluate the performance of the proposed RCDP policy across a variety of sensor conditions, 
obstacle densities, and disambiguation budgets using a Monte Carlo simulation framework. 
The experimental setup is designed to align with a real-world dataset known as COBRA,
a minefield navigation dataset frequently used in the literature for testing traversal strategies 
(\cite{fishkind2005}, \cite{ye2010graph}, \cite{aksakalli2011}, \cite{aksakalli2016based}). 
We use a rectangular traversal region $\Omega = [0, 100] \times [0, 50]$, with start and target 
points fixed at $s = (50, 50)$ and $t = (50, 1)$, respectively.  
Obstacles are modeled as disks with centers placed in $[10, 90] \times [10, 40]$, 
promoting route selection through cluttered regions rather than entirely avoiding obstacles.




To maintain clarity, we present illustrative outcomes in the main paper and relegate detailed 
replication results, extended settings, and sensitivity analyses to the Appendix.

\subsection{Simulation Design and Experimental Setup}
\label{sec:experiment-setup}
We simulate traversal scenarios within a rectangular region $\Omega = [0,100] \times [0,50]$ 
using a discretized 8-adjacency grid. 
The start and target locations are fixed at $s=(50,50)$ and $t=(50,1)$, respectively.
Obstacle centers are placed within the subregion $[10,90] \times [10,40]$ 
to ensure adequate coverage while allowing traversable paths.

\paragraph{Obstacle Configuration.} 
We consider three density levels by varying the total number of ambiguous obstacles $n \in \{20, 40, 80\}$, with corresponding numbers of true obstacles set as $n_T = \{4, 8, 16\}$ (i.e., 20\% of the obstacles are true). Obstacles are circular with fixed radius 5 and their spatial layout follows a Strauss process with inhibition distance 7 and interaction parameter $\gamma = 0.5$ to model realistic clustering and mild repulsion \citep{baddeley2010}.

\paragraph{Sensor Precision.} 
Sensor-reported blockage probabilities $\pi_x$ follow a two-component beta model:
\[
\pi_x \sim
\begin{cases}
\text{Beta}(6,2), & \text{if } x \text{ is a true (blocking) obstacle}, \\
\text{Beta}(2,6), & \text{if } x \text{ is a false (non-blocking) obstacle}.
\end{cases}
\]
These asymmetric distributions model a sensor providing higher probabilities for true obstacles
and lower probabilities for false obstacles.

\paragraph{Disambiguation Budget.}
Two scenarios are examined:
\begin{itemize}
    \item[(i)] \textit{Simplified scenario:} All obstacles require equal cost $\delta_x = 5$; 
    the constraint is interpreted as a hard cap on the number of disambiguations $N_{\max} \in \{1,2,3\}$.
    \item[(ii)] \textit{General scenario:} Each obstacle has an individual cost $\delta_x \in \{2,3,4,5,6\}$, 
    depending on whether it is a true obstacle and its isolation; 
    the total budget is enforced via total cost constraints $\delta_{\max} \in \{4,6,8,10\}$.
\end{itemize}

\paragraph{Policies and Risk Functions.}
We evaluate seven policies:
\begin{itemize}
    \item[] \textit{Greedy baselines:} constrained RD and constrained DT policies,
    \item[] \textit{Proposed RCDP variants:} employing the reset risk function, 
    the DT risk function, and the linear undesirability function with $\alpha \in \{\delta_x, 15, 30\}$.
\end{itemize}

For each trial, we record
total traversal cost $\mathcal{C}_p$,
number of disambiguations used,
gap from offline-optimal benchmark (i.e., full-information shortest path),
constraint satisfaction rate (proportion of paths with $\delta_p \leq \delta_{\max}$).

\begin{table}[H]
\centering
\caption{Simulation setting parameters.}
\label{tab:sim-params}
\begin{tabular}{cccc}
\toprule
$n$ & $n_T$ & $N_{\max}$ & $\delta_{\max}$ \\
\midrule
20 & 4 & 1, 2, 3 & 4, 6 \\
40 & 8 & 1, 2, 3 & 6, 8 \\
80 & 16 & 1, 2, 3 & 8, 10 \\
\bottomrule
\end{tabular}
\end{table}

\subsection{Performance Comparison}

Figure~\ref{fig:performance-comparison} shows the average traversal cost and standard deviation of each risk-based policy 
across environments: different obstacle density levels, neutral sensor marks, and varying budget levels.

\begin{figure}[H]
	\centering
\includegraphics[width=0.45\textwidth]{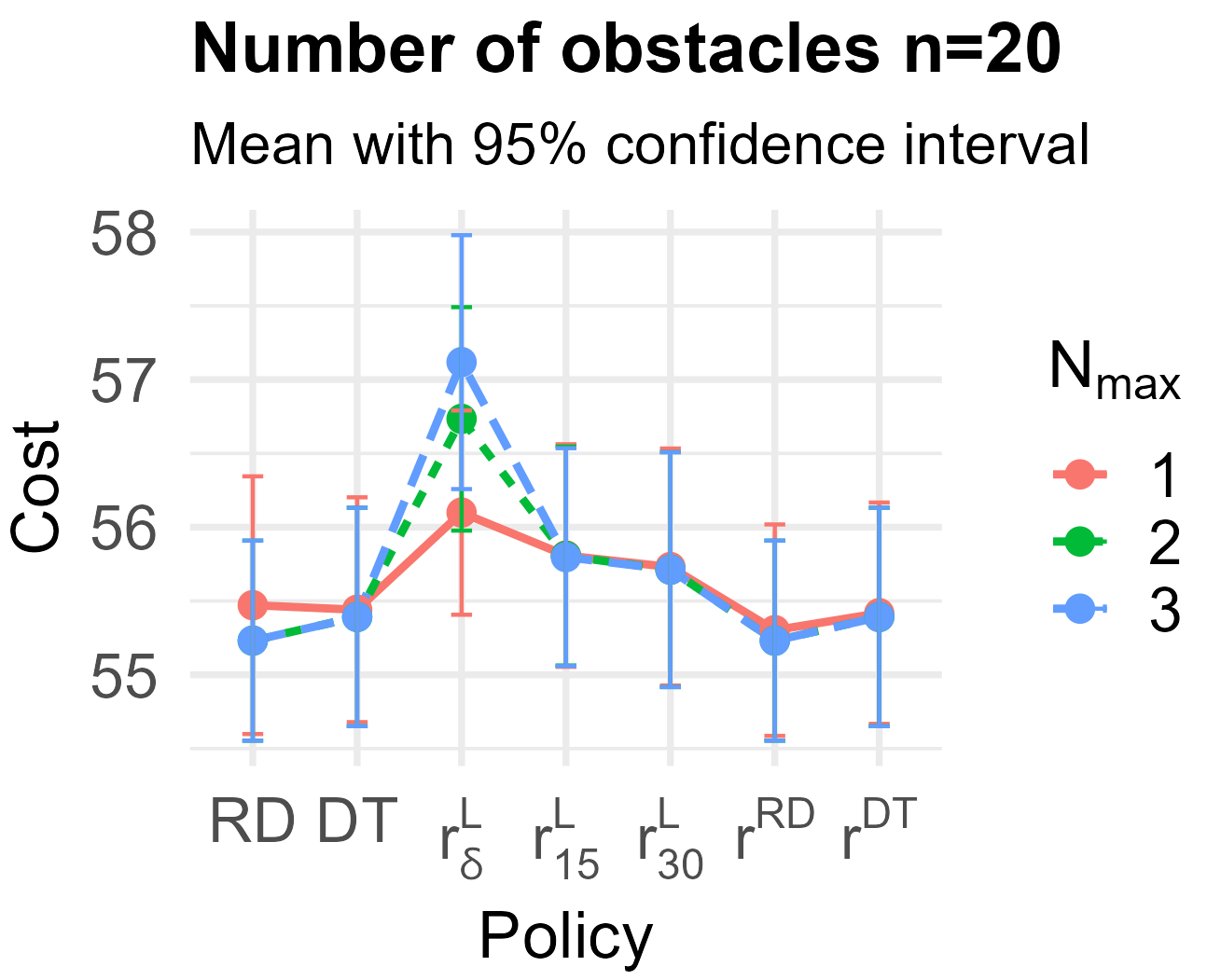} 
\includegraphics[width=0.45\textwidth]{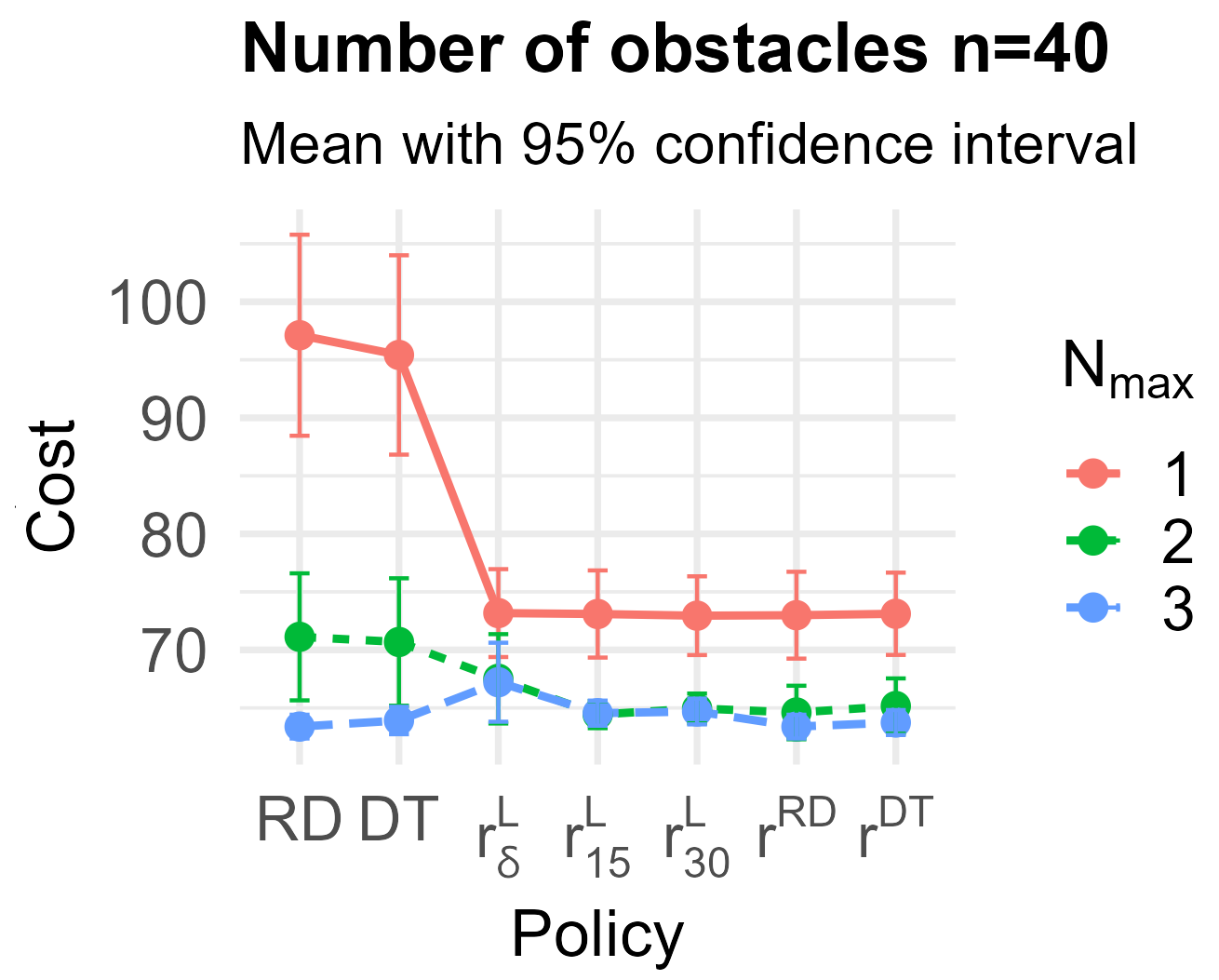} 
\includegraphics[width=0.45\textwidth]{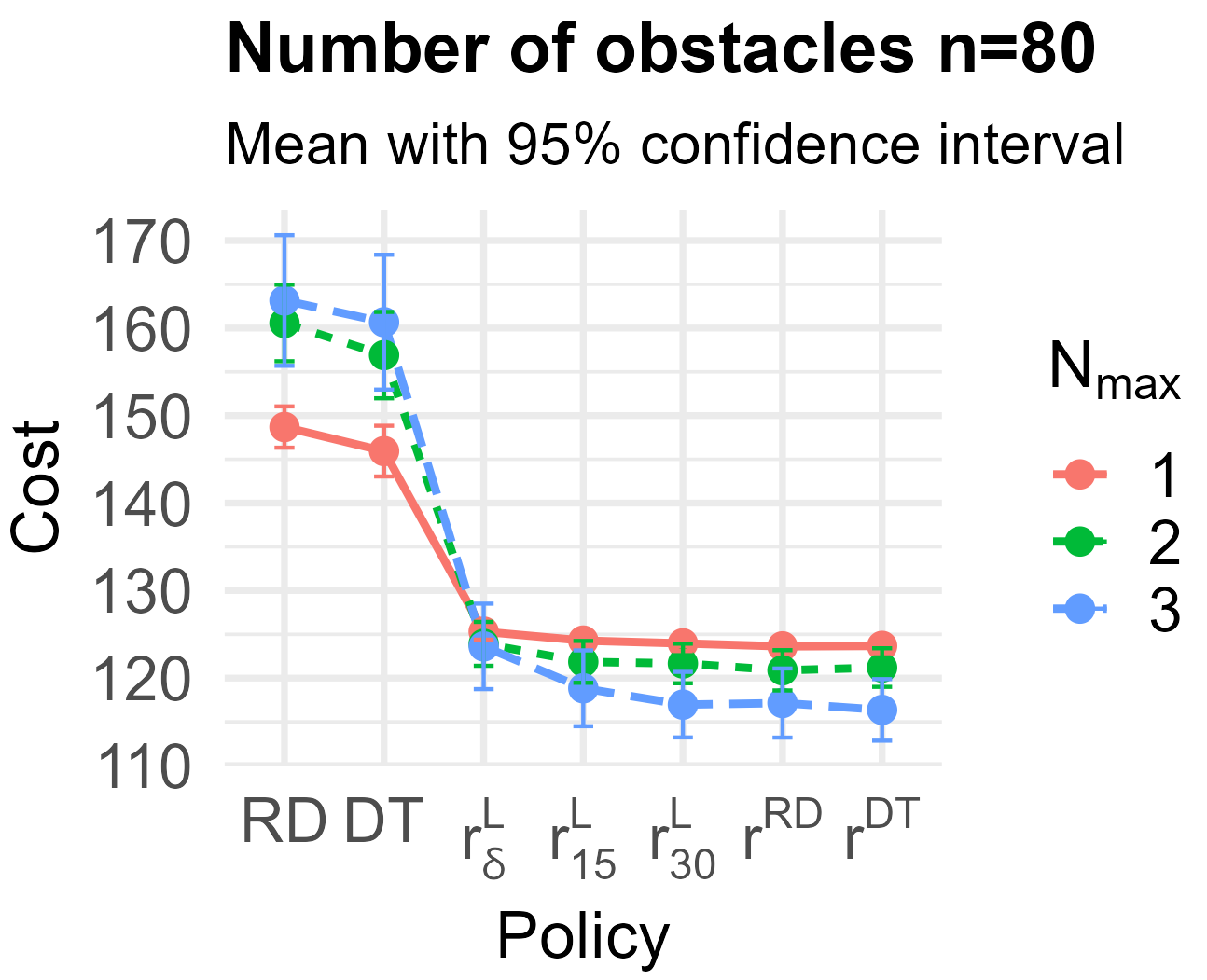} 
\includegraphics[width=0.45\textwidth]{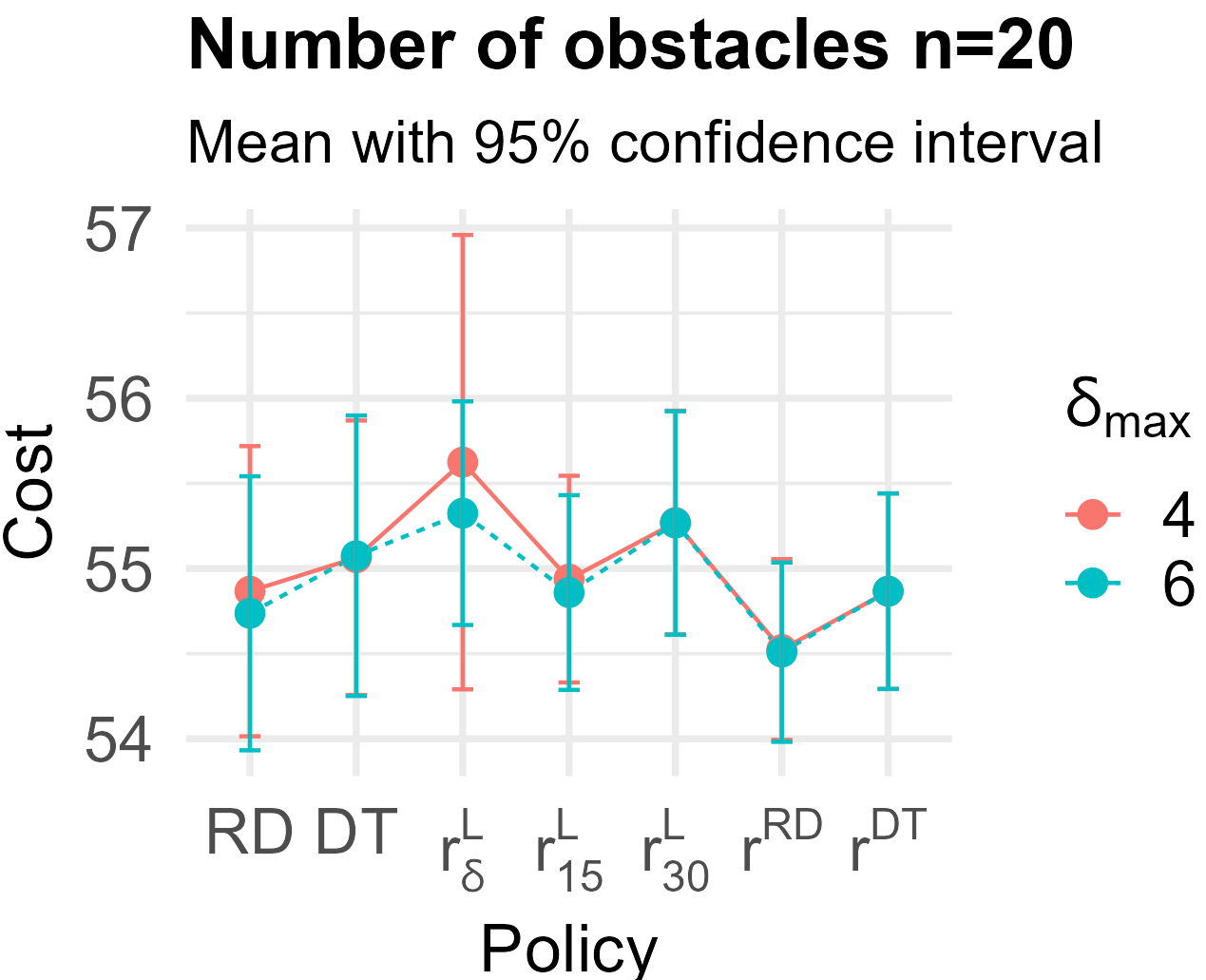} 
\includegraphics[width=0.45\textwidth]{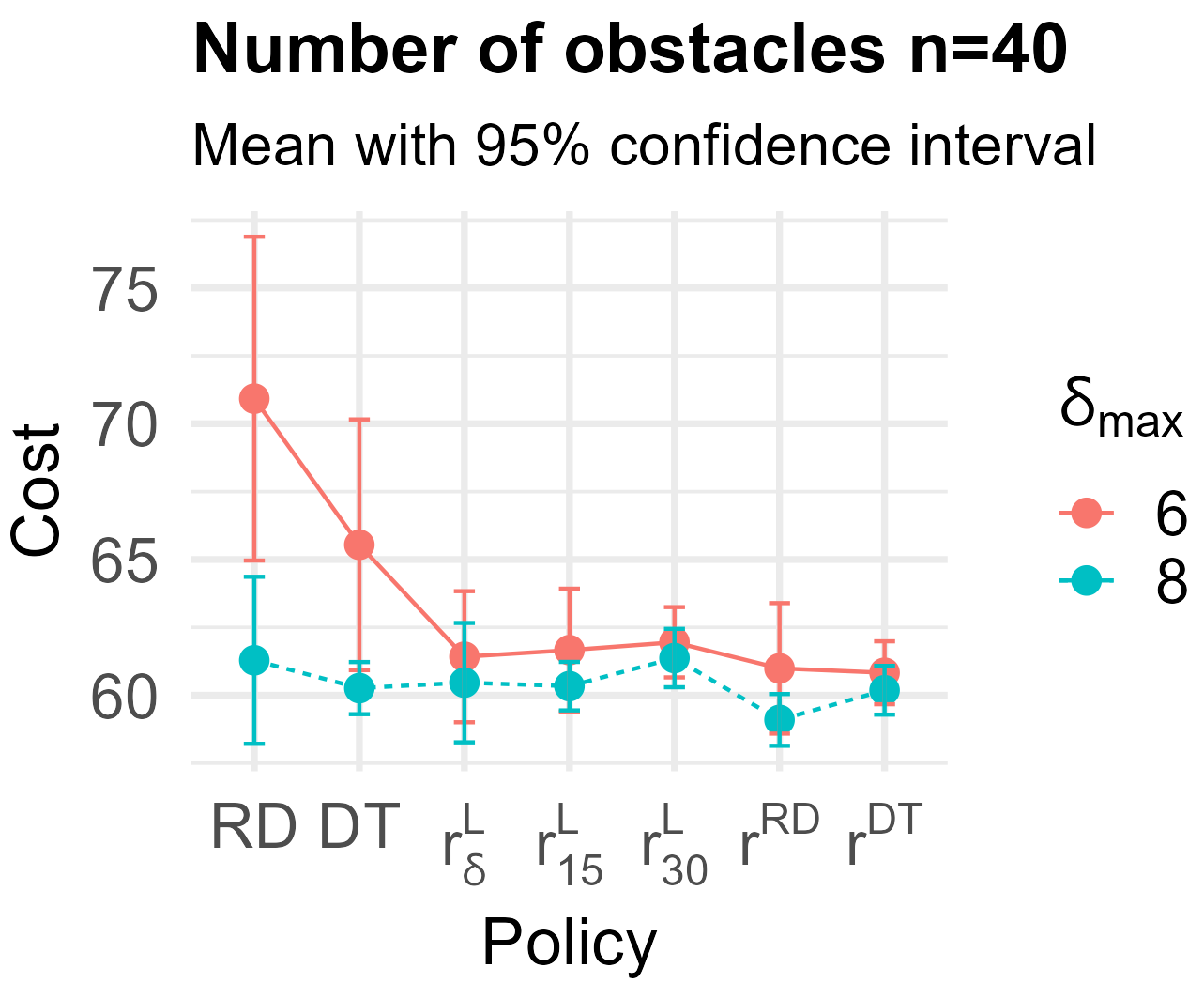} 
\includegraphics[width=0.45\textwidth]{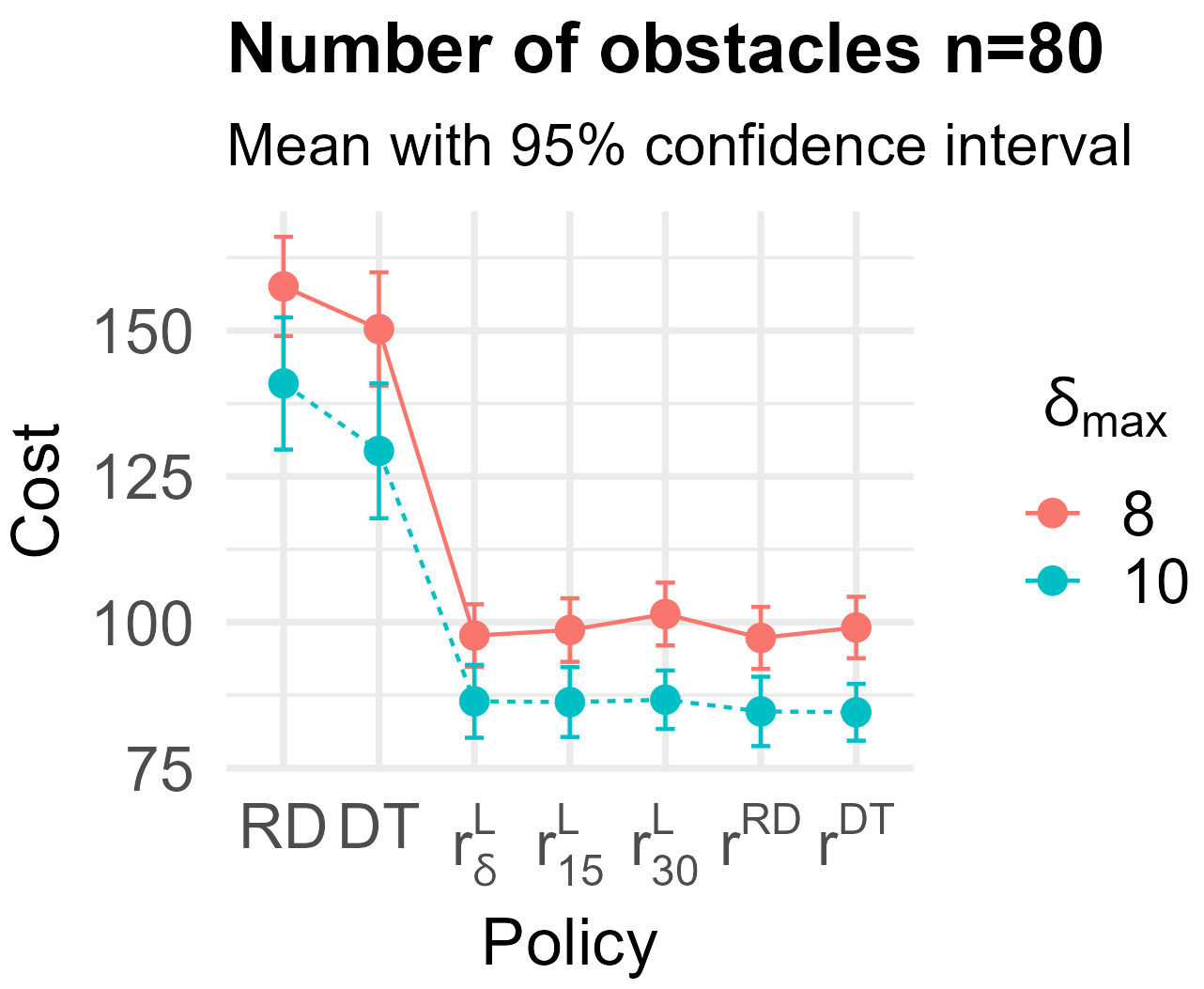}
	\caption{The average traversal cost with $95\%$ confidence interval of two greedy policies, five RCDP policies with different risk functions}
    \label{fig:performance-comparison}
\end{figure}

\begin{figure}[H]
\centering
\includegraphics[width=0.45\textwidth]{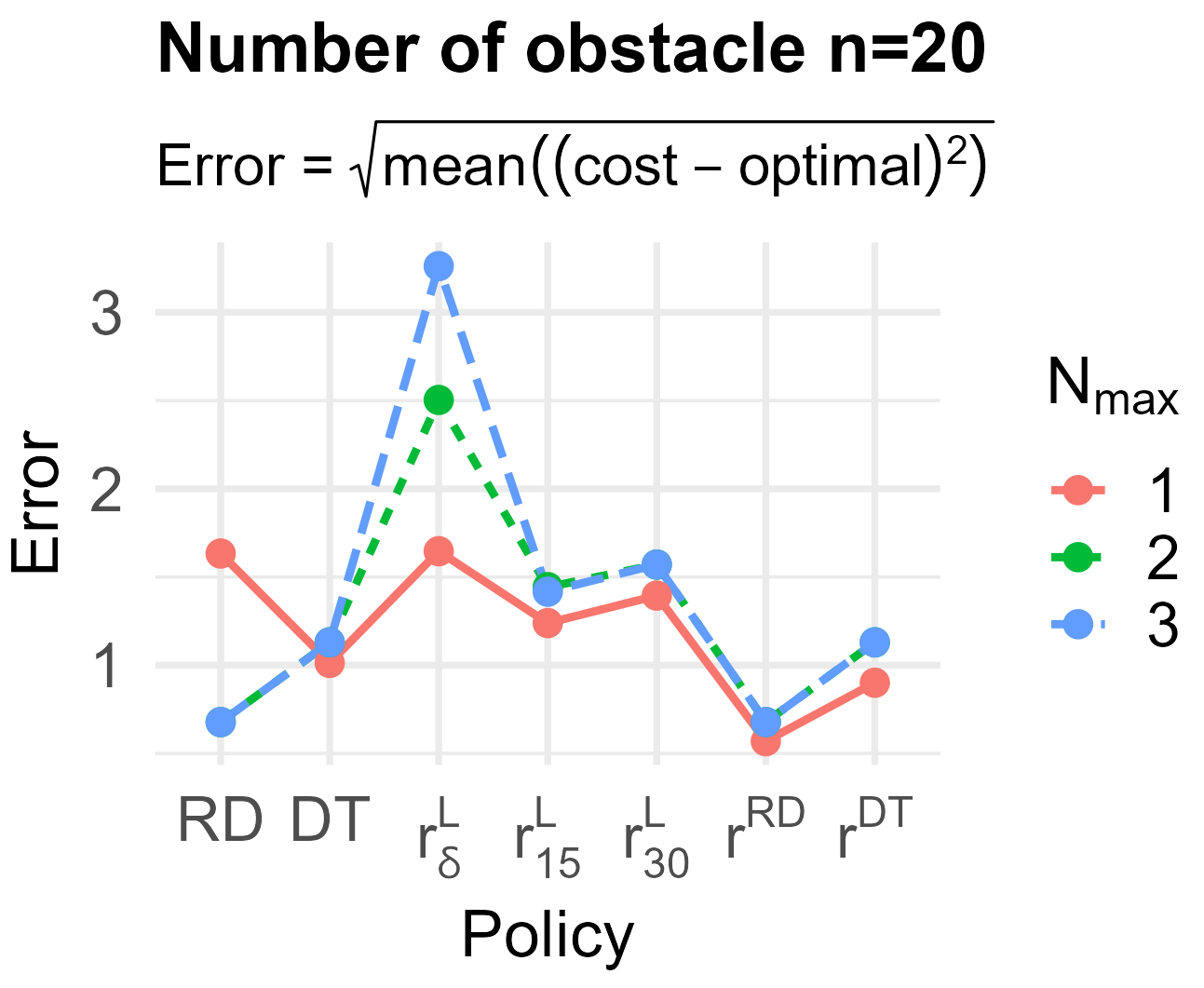} 
\includegraphics[width=0.45\textwidth]{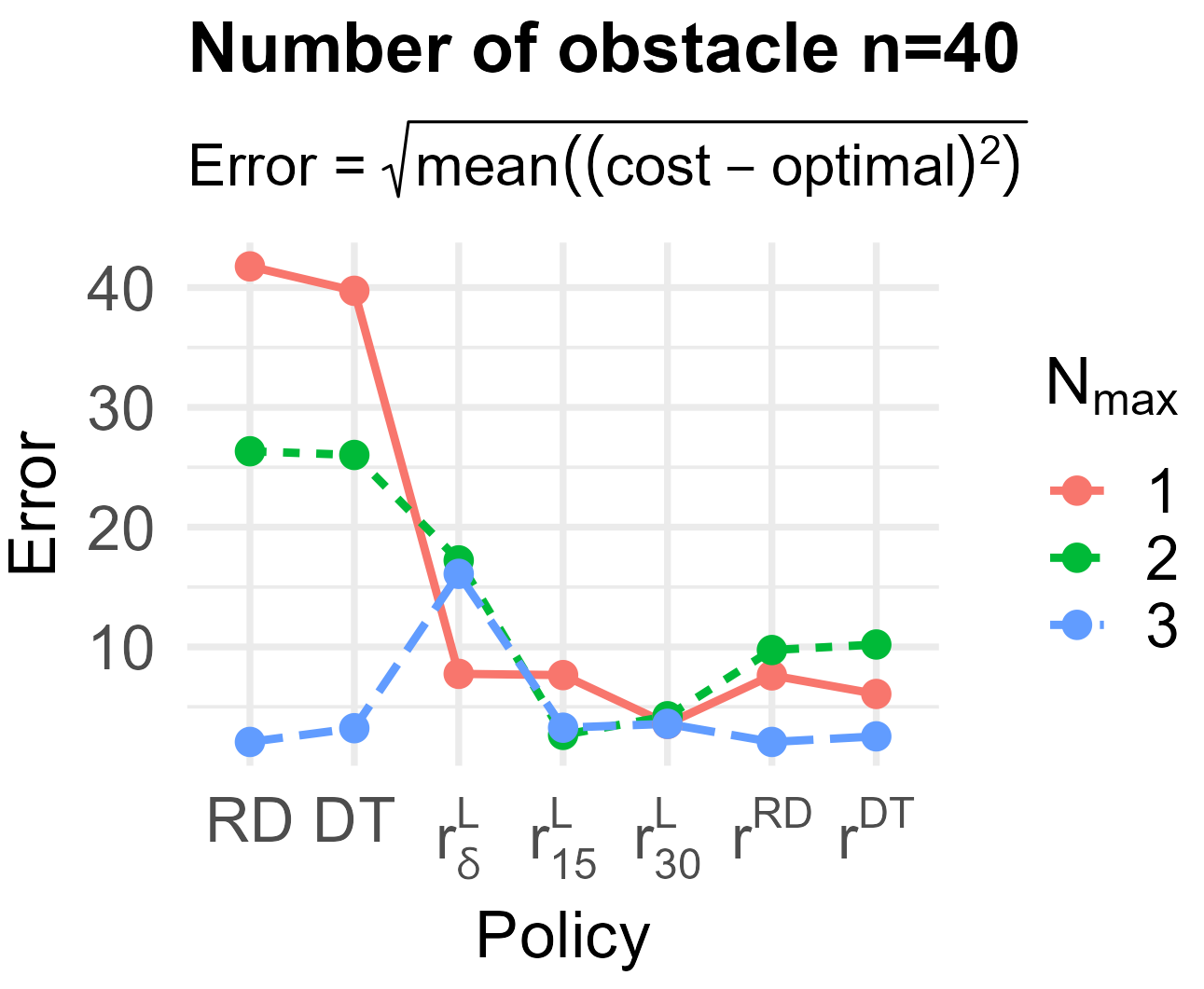} 
\includegraphics[width=0.45\textwidth]{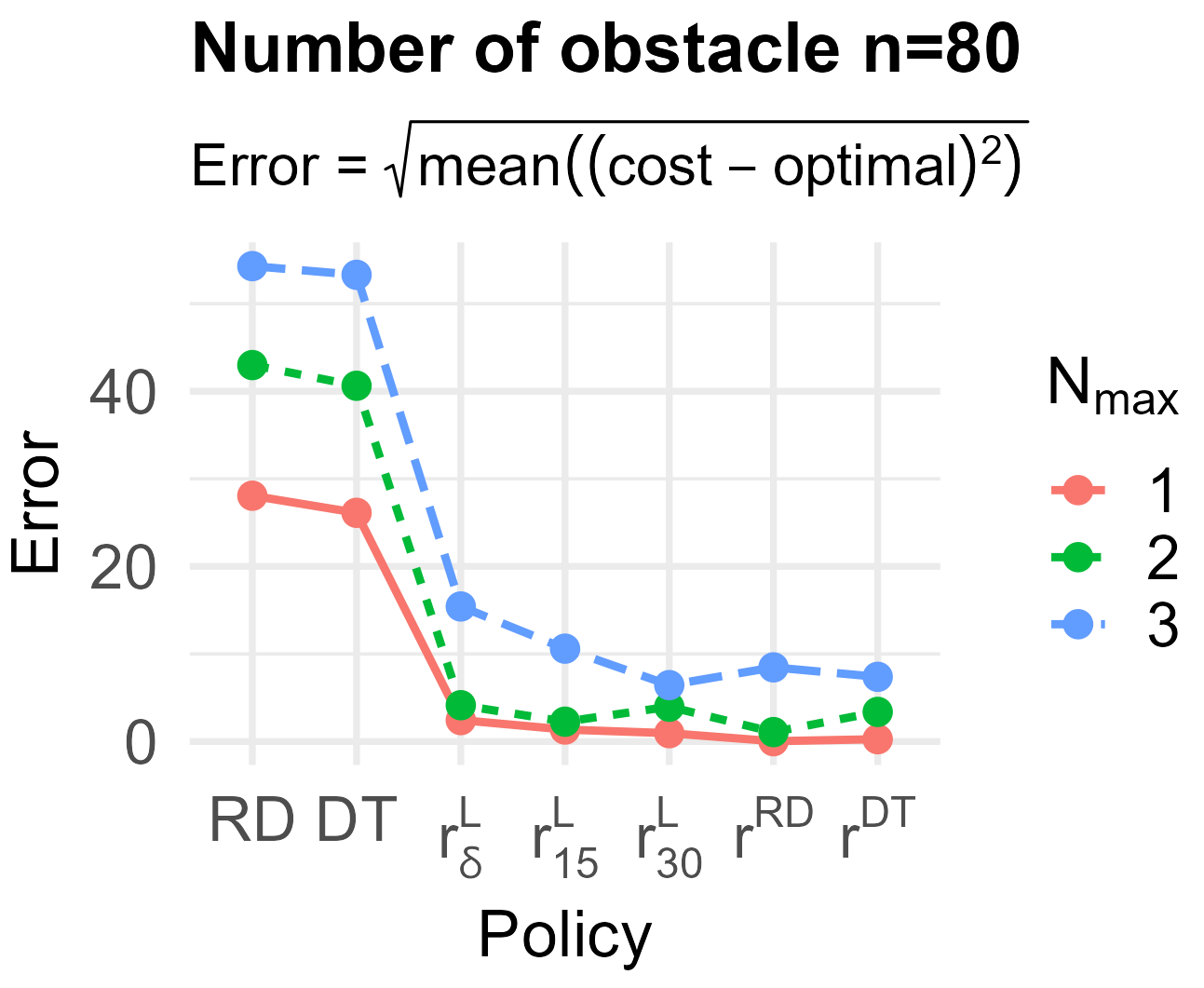} 
\includegraphics[width=0.45\textwidth]{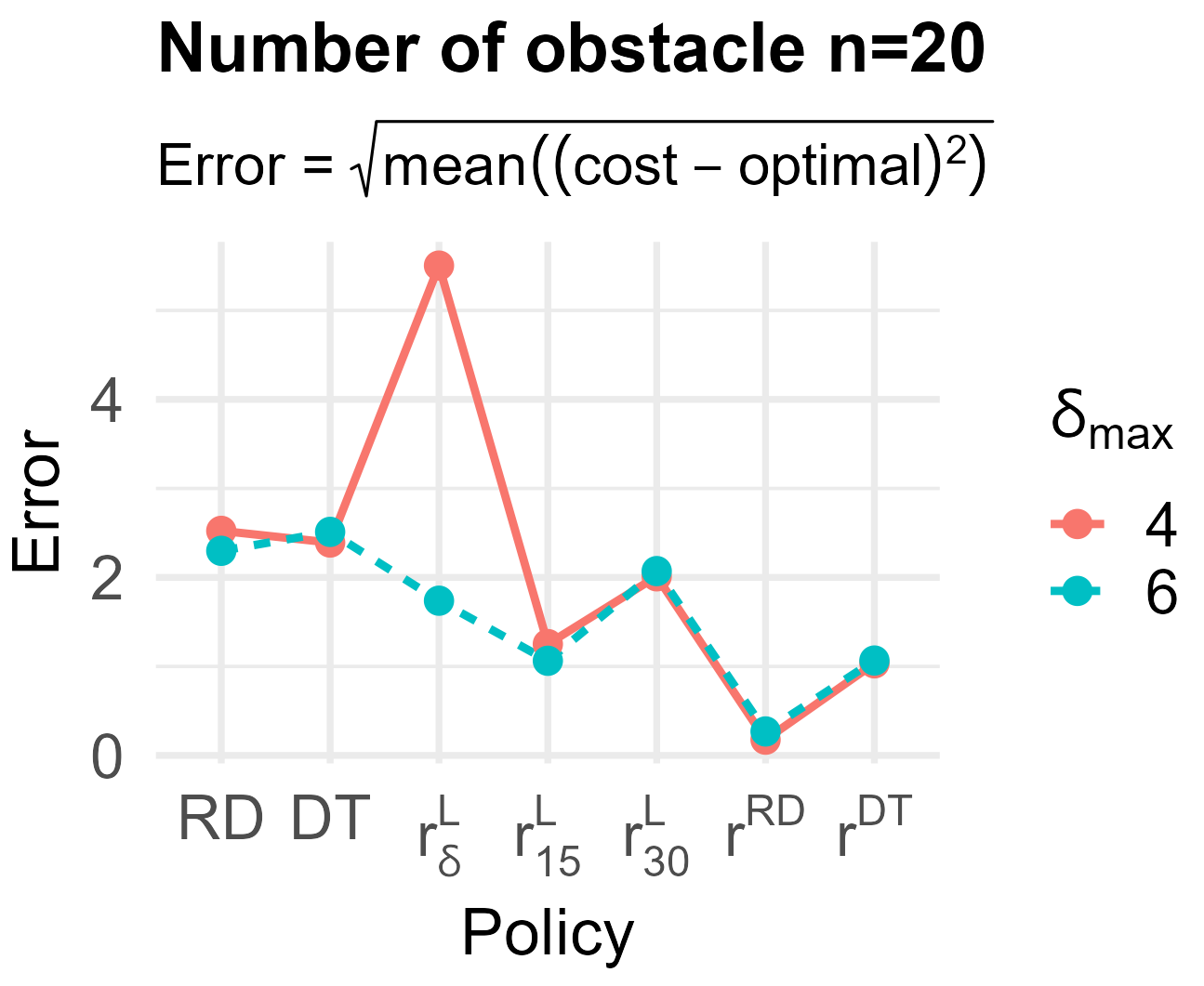} 
\includegraphics[width=0.45\textwidth]{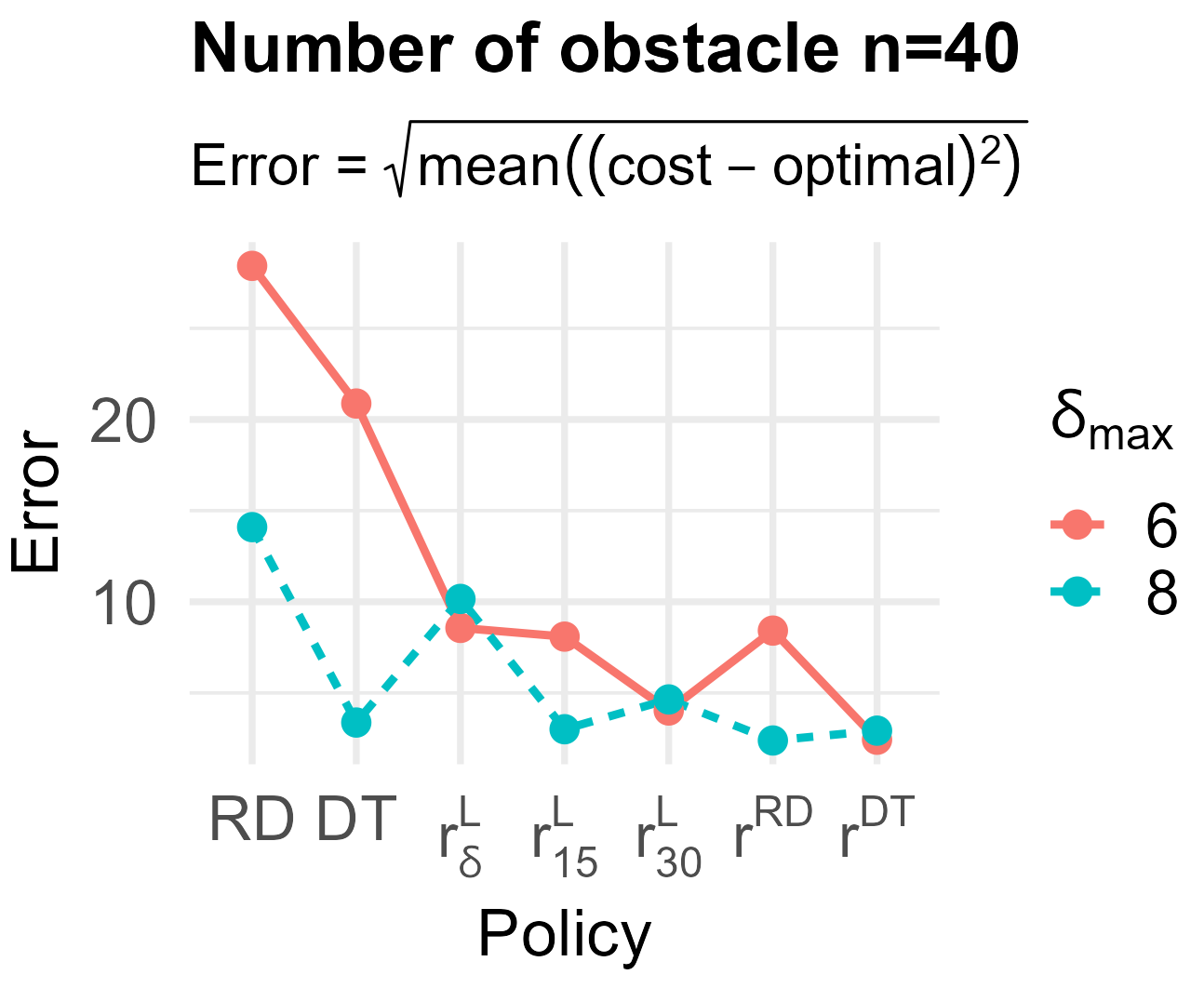} 
\includegraphics[width=0.45\textwidth]{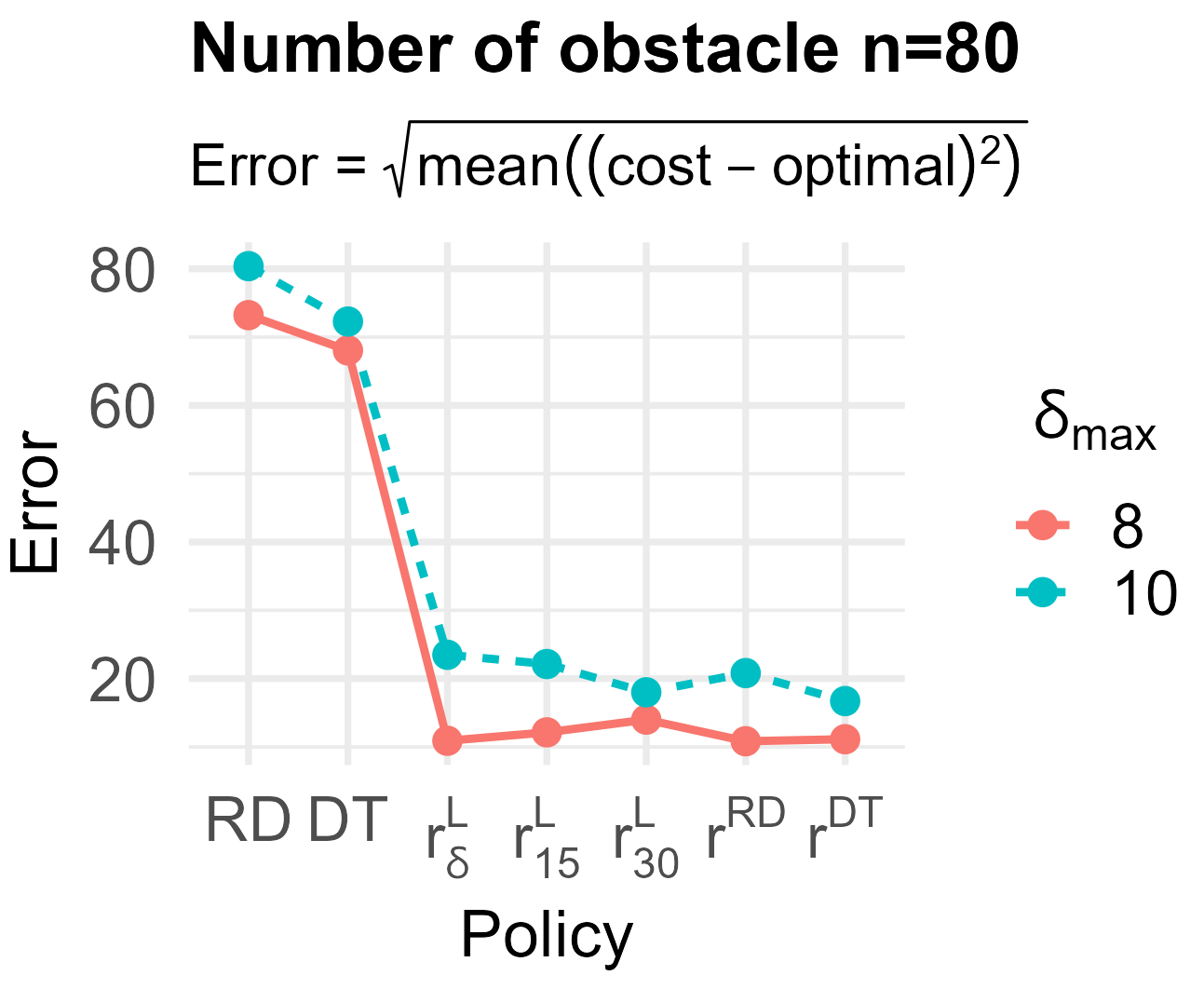}
\caption{The deviation of two greedy policies, five RCDP policies with different risk functions 
from optimal solutions}
\label{fig:error}
\end{figure}

Beyond average cost, we assess each policy’s proximity to the offline-optimal benchmark.
Figure~\ref{fig:error} illustrates the deviation from optimal across environments, confirming that RCDP policies maintain closer adherence to ideal paths, especially under dense obstacles and tight budgets.

\begin{table}[H]
\centering
	\resizebox{7cm}{!}{%
	 \begin{tabular}{@{}c|cc|ccc@{}}
			\toprule
			& \multicolumn{2}{c|}{Greedy Policies} & \multicolumn{3}{c}{WCSPP Policies} \\
			& RD   & DT   & $r^{RD}$ & $r^{DT}$ & \textbf{$r^{L}_{\alpha^*}$} \\ \midrule
			\multirow{2}{*}{$N_{\max}=1$} & 149.50  & 149,50  & 125.53   & 125.53   & \textbf{122.91} \\
			& 151.25  & 151.25  & 117.25   & 121.43   & \textbf{117.25} \\ \midrule
			\multirow{2}{*}{$N_{\max}=2$} & 183.23  & 183.23  & 140.83   & 124.36   & \textbf{121.33} \\
			& 138.98  & 138.10  & 98.46 & 98.46 & \textbf{97.10}  \\ \midrule
			\multirow{2}{*}{$N_{\max}=3$} & 184.91  & 184.91  & 165.23   & 97.05 & \textbf{97.05}  \\
			& 161.25  & 147.15  & 129.60   & 129.60   & \textbf{126.77} \\ \midrule
			\multirow{2}{*}{$\delta_{\max}=4$} & 67.13  & 67.13  & 53.14   & 53.14 & \textbf{53.00}  \\
			& 85.77  & 85.77  & 64.11   & 66.11   & \textbf{64.11} \\ \midrule
			\multirow{2}{*}{$\delta_{\max}=6$} & 111.18  & 70.11  & 70.11   & 70.11 & \textbf{69.67}  \\
			& 154.50  & 173.12  & 71.91   & 71.91   & \textbf{69.43} \\ \midrule
			\multirow{2}{*}{$\delta_{\max}=8$} & 188.40  & 193.37  & 90.59   & 90.59 & \textbf{89.67}  \\
			& 185.18  & 189.28  & 122.25   & 124.94   & \textbf{121.91} \\ \midrule
			\multirow{2}{*}{$\delta_{\max}=10$} & 171.23  & 194.10  & 80.60   & 89.43 & \textbf{80.60}  \\
			& 184.84  & 184.84  & 68.11   & 83.08  & \textbf{68.11} \\ \bottomrule
		\end{tabular}%
	}
	\caption{Traversal costs using different policies (for illustration, 
we only show 2 cases for each setting from 100 replications)}
\label{tab: comparison different functions}
\end{table}

\begin{table}[H]
\centering
 \begin{subtable}{\textwidth}
	\centering
		\resizebox{10cm}{!}{%
		 \begin{tabular}{@{}cc|cc|cccccc@{}}
				\toprule
				\multicolumn{2}{c|}{Setting} & \multicolumn{2}{c|}{Greedy Policies} & \multicolumn{6}{c}{RCDP Policies}     \\ \midrule
				$N_{\max}$ & $n$ & RD & DT   & $r^L_\delta$ & $r^L_{15}$ & $r^L_{30}$ & $r^{RD}$ & \multicolumn{1}{c|}{$r^{DT}$} & Benchmark (Optimal) \\ \midrule
				\multirow{3}{*}{1} & 20   & 0.26  & 0.20 & 0.75 & 0.54 & 0.35 & 0.24  & \multicolumn{1}{c|}{0.19}  & 0.32 \\
				& 40   & 1.31  & 1.26 & 0.93 & 0.86 & 0.78 & 0.77  & \multicolumn{1}{c|}{0.72}  & 0.82 \\
				& 80   & 2.00  & 1.87 & 0.73 & 0.38 & 0.18 & 0.07  & \multicolumn{1}{c|}{0.03}  & 0.09 \\ \midrule
				\multirow{3}{*}{2} & 20   & 0.26  & 0.20 & 1.09 & 0.62 & 0.38 & 0.26  & \multicolumn{1}{c|}{0.20}  & 0.34 \\
				& 40   & 1.42  & 1.35 & 1.70 & 1.57 & 1.37 & 1.30  & \multicolumn{1}{c|}{1.26}  & 1.33 \\
				& 80   & 2.96  & 2.75 & 1.60 & 0.78 & 0.37 & 0.24  & \multicolumn{1}{c|}{0.15}  & 0.32 \\ \midrule
				\multirow{3}{*}{3} & 20   & 0.26  & 0.20 & 1.21 & 0.63 & 0.28 & 0.26  & \multicolumn{1}{c|}{0.20}  & 0.34 \\
				& 40   & 1.42  & 1.36 & 2.25 & 1.89 & 1.53 & 1.42  & \multicolumn{1}{c|}{1.34}  & 1.43 \\
				& 80   & 3.82  & 3.54 & 2.52 & 1.47 & 0.80 & 0.74  & \multicolumn{1}{c|}{0.54}  & 0.70 \\ \bottomrule
			\end{tabular}%
		}
		\caption{Results for $N_{\max}$ values of 1, 2 and 3}
	\end{subtable}
	
    \vspace{0.5cm} 
    
    \begin{subtable}{\textwidth}
    \centering
    	\resizebox{10cm}{!}{%
    	 \begin{tabular}{@{}cc|cc|cccccc@{}}
    			\toprule
    			\multicolumn{2}{c|}{Setting} & \multicolumn{2}{c|}{Greedy Policies} & \multicolumn{6}{c}{RCDP Policies}     \\ \midrule
    			$\delta_{\max}$ & $n$ & RD & DT   & $r^L_\delta$ & $r^L_{15}$ & $r^L_{30}$ & $r^{RD}$ & \multicolumn{1}{c|}{$r^{DT}$} & Benchmark (Optimal) \\ \midrule
    			\multirow{1}{*}{4} & 20   & 1.14  & 1.16 & 2.92 & 1.66 & 1.04 & 1.10  & \multicolumn{1}{c|}{0.84}  & 1.54 \\ \midrule
    			\multirow{2}{*}{6} & 20   & 1.16  & 0.84 & 3.38 & 1.80 & 1.12 & 1.14  & \multicolumn{1}{c|}{0.85}  & 1.66 \\
    			& 40   & 4.38  & 3.74 & 5.04 & 3.78 & 3.58 & 3.72  & \multicolumn{1}{c|}{3.30}  & 4.04 \\ \midrule
    			\multirow{2}{*}{8} & 40   & 4.60  & 3.62 & 6.46 & 4.54 & 3.78 & 4.50  & \multicolumn{1}{c|}{3.60}  & 5.04 \\
    			& 80   & 8.82  & 8.72 & 7.8 & 5.46 & 4.26 & 5.78  & \multicolumn{1}{c|}{4.42}  & 6.50 \\ \midrule
    			\multirow{1}{*}{10} & 80   & 9.56  & 9.38 & 9.72 & 8.72 & 7.40 & 8.92  & \multicolumn{1}{c|}{8.78}  & 9.00 \\ \bottomrule
    		\end{tabular}%
    	}
    	\caption{Results for $\delta_{\max}$ values of 4, 6, 8 and 10}
    \end{subtable}

	\caption{The average number of intersected obstacles $\bar{N_p}$ and 
the average resource needed $\bar{\delta_p}$ on the traversal path $p$ generated two greedy policies, 
five RCDP policies with different risk functions, and the benchmark policy}
\label{tab: comparison_sevenalgs2}
\end{table}
RCDP policies also intersect fewer true obstacles and require lower disambiguation costs on average. 
Greedy policies often violate budget constraints, particularly in dense environments, 
as evident from consistently higher average $\bar{N}_p$ and $\bar{\delta}_p$ values.

\subsection{Summary of Key Empirical Findings}
\label{sec:summary-emp-find}

\begin{itemize}
    \item[] \textbf{Linear Undesirability (LU) Policies:} RCDP variants with LU risk functions—both fixed ($\alpha = 15, 30$) and adaptive ($\alpha = \delta_x$)—consistently yield lower mean traversal costs and reduced variability compared to greedy baselines. No single $\alpha$ value dominates across all settings, but $\alpha = 15$ often achieves a favorable trade-off between risk aversion and exploration.
        Further simulation experiments (see Appendix Section \ref{sup-sec:Linear-Experiments}) validate that a Bayesian version of the undesirability function, denoted $r^{LB}$, provides improved performance and asymptotic convergence to the benchmark as sensor precision improves.
    
    \item[] \textbf{Reset Disambiguation (RD) and Distance-to-Termination (DT) Risk:} 
    RCDP policies using these risk functions exhibit competitive performance, 
    with the reset variant ($r^{RD}$) often outperforming others in sparse and moderately dense environments. 
    The DT variant performs better under generous budgets but degrades under tight constraints.

    \item[] \textbf{Greedy Baselines:} 
    Both constrained RD and DT policies frequently violate the disambiguation budget, especially in dense settings. 
    Their lack of integrated budget reasoning leads to higher traversal cost, greater variability, 
    and in some cases, longer-than-optimal detours.

    \item[] \textbf{Budget Impact:} 
    Increasing the disambiguation budget generally reduces traversal cost and error, 
    while performance gaps between policies widen under tighter constraints. 
    RCDP policies maintain constraint compliance and outperform greedy counterparts 
    even under severe budget limitations.

    \item[] \textbf{Robustness and Percentile Behavior:} 
    RCDP policies demonstrate better performance not only on average 
    but also in worst-case (75th percentile) and favorable (25th percentile) scenarios. 
    This robustness is particularly pronounced in obstacle-dense environments ($n=80$).
\end{itemize}

In addition to average cost, RCDP policies also demonstrate superior robustness under uncertainty. 
Their traversal cost distributions exhibit lower standard deviation and narrower confidence intervals than greedy baselines, 
as seen in Appendix Figures \ref{sup-fig:avg with CI}-\ref{sup-fig:error}. 
Notably, their 75th percentile cost remains bounded even under high obstacle density and tight budgets, 
suggesting consistent performance in worst-case scenarios.

\subsection{Evaluation of Policy Robustness}

We assess policy robustness via \emph{relative efficiency}, defined as:
\[
\text{Efficiency}(r) = \frac{\mathcal{C}_{\text{offline}}}{\mathcal{C}_r},
\]
where $\mathcal{C}_{\text{offline}}$ is the expected cost under full obstacle knowledge, 
and $\mathcal{C}_r$ is the average cost under risk model $r$.

Relative efficiency offers a normalized performance comparison against an idealized offline solution. 
We compute this metric across all simulation regimes and for each risk-based policy variant.

A detailed box plot visualization of relative efficiency is provided in the Appendix (Figure~\ref{fig:efficiency-boxplot}). 
These results highlight that risk-aware path planning with adaptive disambiguation policies 
yields consistent improvements over naive baselines. 
While RD and DT serve specialized roles, LU-based policies—with either fixed or 
cost-weighted scaling—deliver the most balanced performance under realistic sensing 
and resource constraints.

Full simulation configurations, additional tables (e.g., mean and variance of $\mathcal{C}_p$ 
across 15 regimes), runtime statistics, and complete visualizations are available in the Appendix.

\section{Discussion and Conclusions}
\label{sec:disc-conc}

This paper addresses a resource-constrained generalization of the Random Disambiguation 
Path (RDP) problem, extending the classical Stochastic Obstacle Scene (SOS) framework 
to reflect realistic operational constraints. 
In the RDP setting, a navigating agent (NAVA) seeks an optimal route through a spatial 
domain populated with disk-shaped obstacles of uncertain blockage. 
The agent may actively query (disambiguate) obstacle status, but incurs a cost for 
doing so. 
Unlike earlier work that neglects or simplifies resource limitations, we introduce an 
explicit disambiguation budget, formulating the resulting \textit{RDP with Constrained 
Disambiguation} (RCDP) problem as a \emph{Weight Constrained Shortest Path Problem} 
(WCSPP).

Our contribution is twofold: a theoretical reformulation of RCDP into a constrained 
optimization framework, and a practical algorithmic solution that integrates cost 
approximation with constraint-aware search. 
We develop surrogate cost models based on additive risk functions, combining Euclidean 
path length with sensor-informed obstacle risk. 
These approximations allow the constrained problem to be approached via a Lagrangian 
relaxation method, transforming the hard constraint into a penalized objective. 
To accelerate computation while preserving optimality, we introduce a two-phase vertex 
elimination (TPVE) strategy, which prunes non-promising vertices through feasibility 
and dominance tests.

Empirical validation across diverse simulation regimes demonstrates that the proposed 
RCDP policies consistently outperform greedy baselines, yielding lower expected 
traversal costs and reduced variability. 
The comparative analysis spans multiple risk functions—Reset Disambiguation (RD), 
Distance-to-Termination (DT), and Linear Undesirability (LU)—and evaluates performance 
using average cost, error rates, variability measures, and quantile-based statistics. 
The LU-based policies, particularly those with moderate or cost-sensitive scaling, 
exhibit strong robustness across a wide range of obstacle densities and budget levels.

The proposed TPVE-enhanced Lagrangian framework not only improves runtime efficiency 
but often eliminates the duality gap altogether, delivering provably optimal solutions 
in practice. 
We further support these findings with theoretical guarantees, including complexity 
bounds, feasibility preservation, and dominance over baseline policies.

\paragraph{Limitations and Future Work.}
Despite the effectiveness of our approach, several avenues remain open for further 
exploration:
(i) \textbf{Multi-constraint extension:} Incorporating multiple disambiguation 
    budgets (e.g., time, energy, or risk exposure) transforms the problem into a 
    multi-dimensional WCSPP, necessitating new algorithmic designs.
(ii) \textbf{Correlated obstacle statuses:} Our current model assumes independence 
    across obstacles. 
    Future work could explore spatial dependency structures (e.g., Markov random fields) 
    and adapt pathfinding policies accordingly.
(iii) \textbf{General obstacle geometries:} We currently model obstacles as uniform 
    disks with binary status. 
    More nuanced models could include irregular shapes or continuous risk fields, 
    capturing partial blockage.
(iv) \textbf{Dynamic and adversarial environments:} Introducing moving or 
    adversarial obstacles would align the model with real-time navigation scenarios 
    and call for adaptive or online decision-making under uncertainty.
(v) \textbf{Tuning the $\alpha$ parameter for linear undesirability (LU) risk function:}
While this study employs a fixed linear undesirability (LU) risk function with manually selected $\alpha$, 
we have explored data-driven strategies for tuning $\alpha$ and 
adapting it through a Bayesian adjustment based on prior information about obstacle types and sensor accuracy. 
Although these extensions were not incorporated into the core simulation experiments, 
they offer a principled mechanism to enhance adaptability and robustness across diverse environments.
The methodological details and empirical illustrations are provided
in the Appendix (Section \ref{app: sec_alpha}). 
Incorporating these adaptive risk formulations into future simulation pipelines remains a promising direction.

Overall, this work bridges constrained optimization, probabilistic reasoning, and 
graph-based navigation in uncertain environments. 
The RCDP formulation and solution framework offer a principled foundation for future 
developments in autonomous planning under partial observability and limited resources.

\section*{Acknowledgements}
Most of the Monte Carlo simulations presented in this article
were executed at Easley HPC Laboratory of Auburn University.
LZ and EC were supported by Office of Naval Research Grant N00014-22-1-2572
and EC were supported by NSF Award \# 2319157.


\appendix
\renewcommand{\thesection}{A\arabic{section}}
\renewcommand{\thesubsection}{A\arabic{section}.\arabic{subsection}}
\renewcommand{\thesubsubsection}{A\arabic{section}.\arabic{subsection}.\arabic{subsubsection}}

\renewcommand{\thefigure}{A\arabic{figure}}
\renewcommand{\thetable}{A\arabic{table}}

\section{Pseudocode for COGR and LOGR Algorithms}
\label{app:pseudo-code}

%

\begin{algorithm}[H]
	\caption{\textbf{C}ost- and \textbf{O}bstacle-based \textbf{G}raph \textbf{R}eduction (COGR Algorithm)}
	\label{Lac_ini_reduc}
	\begin{algorithmic}[1]
		\footnotesize
		\Statex \textnormal{\textbf{Input:} A graph with adjusted costs and weights $G_{adj}$ from GI Algorithm, the start vertex $s$, the target vertex $t$, the disambiguation budget $\delta_{\max}$}
		\Statex \textnormal{\textbf{Output:} Reduced graph $G_{\text{red}}$, optimal path $p^*$ (or path with the lowest upper bound $p_{U}$ and path with the highest lower bound $p_{L}$)}
		\State \textnormal{Initialize $V_{\text{del}} \gets \emptyset$ and $E_{\text{del}} \gets \emptyset$}
		\State \textnormal{Find the shortest obstacle-free path $p^\infty$ using Dijkstra's algorithm with $\lambda = \infty$}
		\State \textnormal{Find the minimum cost path $p^0$ using Dijkstra's algorithm with $\lambda = 0$}
		\If{\textnormal{$\delta^0 \leq \delta_{\max}$}}
		\State \textbf{Output} \textnormal{$p^* = p^0$, \textbf{STOP}}
		\EndIf
		\State \textnormal{$\widetilde{\mathcal{C}}_{U} \gets \widetilde{\mathcal{C}}^\infty$; $p_{U} \gets p^\infty$}
		\Repeat
		\For{\textnormal{$v \in V(G_{adj})$}}
		\State \textnormal{Find paths $p_{v}^\infty$ and $p_{v}^0$ through $v$ with updated costs and weights}
		\If{\textnormal{$\delta_{v}^\infty > \delta_{\max}$}}
		\State \textnormal{$V_{\text{del}} \gets V_{\text{del}} \cup \{v\}$}
		\State \textnormal{$E_{\text{del}} \gets E_{\text{del}} \cup \{e : e \text{ is incident to } v\}$}
		\State \textnormal{$G_{adj} \gets (V \setminus V_{\text{del}}, E \setminus E_{\text{del}})$}
		\ElsIf{\textnormal{$\widetilde{\mathcal{C}}_{v}^\infty < \widetilde{\mathcal{C}}_{U}$}}
		\State \textnormal{$\widetilde{\mathcal{C}}_{U} \gets \widetilde{\mathcal{C}}_{v}^\infty$; $p_{U} \gets p_{v}^\infty$}
		\State \textnormal{Find $p_{v}^0$ through $v$ with updated costs and weights}
		\If{\textnormal{$\widetilde{\mathcal{C}}_{v}^0 < \widetilde{\mathcal{C}}_{U}$ and $\delta_{v}^0 \leq \delta_{\max}$}}
		\State \textnormal{$\widetilde{\mathcal{C}}_{U} \gets\widetilde{\mathcal{C}}_{st,v}^0$; $p_{U} \gets p_{v}^0$}
		\ElsIf{\textnormal{$\widetilde{\mathcal{C}}_{v}^0 > \widetilde{\mathcal{C}}_{U}$}}
		\State \textnormal{$V_{\text{del}} \gets V_{\text{del}} \cup \{v\}$}
		\State \textnormal{$E_{\text{del}} \gets E_{\text{del}} \cup \{e : e \text{ is incident to } v\}$}
		\State \textnormal{$G_{adj} \gets (V \setminus V_{\text{del}}, E \setminus E_{\text{del}})$}
		\EndIf
		\EndIf
		\EndFor
		\Until{\textnormal{$V_{\text{del}} = \emptyset$}}
		\State \textnormal{$p^0 \gets \min_{v} p_{v}^0$; $p^\infty \gets \min_{v} p_{v}^\infty$}
		\If{\textnormal{$\delta^0 \leq \delta_{\max}$}}
		\State \textbf{Output} \textnormal{$\{p^* = p^0\}$}
		\Else
		\State \textbf{Output} \textnormal{$\{p_{U} = p^\infty, p_{L} = p^0\}$}
		\EndIf
	\end{algorithmic}
\end{algorithm}

\begin{algorithm}[H]
	\caption{\textbf{L}agrangian \textbf{O}ptimization and \textbf{G}raph \textbf{R}eduction (LOGR Algorithm)}
	\label{modified_SNE}
	\begin{algorithmic}[1]
		\footnotesize
		\Statex \textnormal{\textbf{Input:} The reduced-size graph $G_{\text{red}}$, path with the smallest upper bound $p_{U} = p^\infty$, path with the greatest lower bound $p_{L} = p^0$, the disambiguation budget $\delta_{\max}$}
		\Statex \textnormal{\textbf{Output:} Optimal path (or approximately optimal path) $p^*$}
		\State \textnormal{Initialize $i\gets0$}
		\State \textnormal{Set initial bounds: $\widetilde{\mathcal{C}}_i^+ \gets \widetilde{\mathcal{C}}_L$; $\delta_{i}^+ \gets \delta_L$; $\widetilde{\mathcal{C}}_{i}^- \gets \widetilde{\mathcal{C}}_U$; $\delta_{i}^- \gets \delta_U$}
		\While{\textnormal{$p^*$ is not found}}
		\State \textnormal{$\lambda_{i+1} \gets \frac{\widetilde{\mathcal{C}}_{i}^- - \widetilde{\mathcal{C}}_{i}^+}{\delta_{i}^+ - \delta_{i}^-}$}
		\State \textnormal{Find $p_{i+1}$ in $G_{\text{red}}$ with cost $\widetilde{\mathcal{C}}_{i+1} + \lambda_{i+1} \delta_{i+1}$}
		\If{\textnormal{$\delta_{i+1} = \delta_{\max}$}}
		\State \textbf{Output} \textnormal{$p^* = p_{i+1}$, \textbf{break}}
		\EndIf
		\For{\textnormal{$v \in V(G_{\text{red}})$}}
		\State \textnormal{Find $p_{i+1,v}$ via $v$}
		\If{\textnormal{$\widetilde{\mathcal{C}}_{i+1,v} < \widetilde{\mathcal{C}}_{U}$ and $\delta_{i+1,v} \leq \delta_{\max}$}}
		\State \textnormal{$\widetilde{\mathcal{C}}_{U} \gets \widetilde{\mathcal{C}}_{i+1,v}$; $p_{U} \gets p_{i+1,v}$}
		\EndIf
		\If{\textnormal{$\Phi(\lambda_{i+1},p_{i+1,v}) > \widetilde{\mathcal{C}}_{U}$}}
		\State \textnormal{$V_{\text{del}} \gets V_{\text{del}} \cup \{v\}$}
		\State \textnormal{$E_{\text{del}} \gets E_{\text{del}} \cup \{e : e \text{ is incident to } v\}$}
		\EndIf
		\EndFor
		\State \textnormal{$\widetilde{\mathcal{C}}_L \gets \min_{v \notin V_{\text{del}}} \Phi(\lambda_{i+1},p_{i+1,v})$}
		\If{\textnormal{$\widetilde{\mathcal{C}}_L = \widetilde{\mathcal{C}}_{U}$ or $\widetilde{\mathcal{C}}_{i+1} + \lambda_{i+1}\delta_{i+1} = \widetilde{\mathcal{C}}_{i}^+ + \lambda_{i+1}\delta_{i}^+$ (or $\widetilde{\mathcal{C}}_{i+1} + \lambda_{i+1}\delta_{i+1} = \widetilde{\mathcal{C}}_{i}^- + \lambda_{i+1}\delta_{i}^-$)}}
		\State \textbf{Output} \textnormal{$p^* = p_{U}$, \textbf{break}}
		\EndIf
		\State \textnormal{$G_{\text{red}} \gets (V \setminus V_{\text{del}}, E \setminus E_{\text{del}})$}
		\If{\textnormal{$\delta_{i+1} < \delta_{\max}$}}
		\State \textnormal{Update: $\widetilde{\mathcal{C}}_{i+1}^- \gets \widetilde{\mathcal{C}}_{i+1}$, $\delta_{i+1}^- \gets \delta_{i+1}$}
		\Else
		\State \textnormal{Update: $\widetilde{\mathcal{C}}_{i+1}^+ \gets \widetilde{\mathcal{C}}_{i+1}$, $\delta_{i+1}^+ \gets \delta_{i+1}$}
		\EndIf
		\EndWhile
	\end{algorithmic}
\end{algorithm}

\section{Proofs of Main Theoretical Results}
\subsection{Cut-Based Cost Bounds}
\subsubsection*{Proof of Property \ref{prop:cut_bounds}}
\begin{proof}
	Let $\mathcal{S}_{\min}$ be a minimum cardinality $(s,t)$-vertex cut in $G$.  
	By definition, any path $p \in \mathcal{P}(s,t)$ must pass through at least one vertex in $\mathcal{S}_{\min}$.  
	Therefore, the cost of the optimal constrained path $p^*$ satisfies:
	\[
	\widetilde{\mathcal{C}}^* \geq \min_{\substack{p \in \mathcal{P}(s,t)\\ p \cap \mathcal{S}_{\min} \neq \emptyset}} \widetilde{\mathcal{C}}_p^0,
	\]
	as this cost must be no less than that of the best unconstrained path through the cut.
	
	For the upper bound, suppose there exists a feasible path $p^\dagger$ with $\delta_{p^\dagger} \leq \delta_{\max}$.  
	Since $p^\dagger$ must intersect $\mathcal{S}_{\min}$, it follows that:
	\[
	\widetilde{\mathcal{C}}^* \leq \min_{\substack{p \in \mathcal{P}(s,t)\\ \delta_p \leq \delta_{\max},\; p \cap \mathcal{S}_{\min} \neq \emptyset}} \widetilde{\mathcal{C}}_p^\infty,
	\]
	establishing the upper bound.
	
	The cut $\mathcal{S}_{\min}$ need not be unique.  
	Any $(s,t)$-vertex cut suffices to derive valid bounds,  
	though using a minimum cut often yields tighter estimates.
	
	Finally, analogous arguments apply to edge cuts:  
	one may replace $\mathcal{S}_{\min}$ with a minimum set of edges that disconnect $s$ from $t$,  
	with the same bounding principles applied to edge-disjoint paths.
\end{proof}
\subsection{Identification Results under Lagrangian Ambiguity}
\label{app: proofs_sec3_1}
This section presents detailed proofs for 
Propositions~\ref{prop-identify opt case2-special}--\ref{prop:general-B-disamb}, 
which establish the correctness of the TPVE algorithm 
under increasingly general disambiguation budget scenarios.

\begin{proof} (Proof of Proposition \ref{prop-identify opt case2-special})
We begin by analyzing the feasibility of the three candidate paths:
\begin{itemize}
    \item $p_1$ is infeasible since it intersects both obstacles and 
    $\delta_{p_1} = \delta_{x_1} + \delta_{x_2} > \delta_{\max}$.
    Therefore, during Phase 1 of TPVE, any vertex $v \in p_1$ lying in the intersection of $x_1$ and $x_2$ 
    fails the feasibility test and is eliminated.
    Consequently, $p_1$ is pruned.
    
    \item $p_2$ is feasible with $\delta_{p_2} = 0$ and may appear attractive due to its low cost.
    However, if $p^*$ has higher modified cost but better satisfies the constraint 
    (i.e., utilizes part of the budget for informative disambiguation), 
    then $p_2$ cannot update the incumbent upper bound $\widetilde{\mathcal{C}}_U$.
    Specifically, for any $v \in p^* \setminus p_2$, the shortest feasible path through $v$ 
    will necessarily involve $x^*$ and yield cost $\widetilde{\mathcal{C}}_{p^*}$, 
    which defines $\widetilde{\mathcal{C}}_U$.
    
    \item Since $p^*$ intersects only one obstacle, say $x^*$, and satisfies 
    $\delta_{p^*} = \delta_{x^*} \leq \delta_{\max}$, it passes the feasibility test.
    Furthermore, the dual cost 
    $\widetilde{\mathcal{C}}_{p^*} + \lambda (\delta_{p^*} - \delta_{\max})$ 
    can become optimal at some $\lambda \geq 0$, depending on the cost-risk tradeoff.
    In Phase 2, such vertices on $p^*$ remain active, and the path $p^*$ will be recovered either directly 
    or as the vertex-constrained minimum modified cost path.
\end{itemize}

Thus, both $p_1$ (infeasible) and $p_2$ (suboptimal) are eliminated by the algorithm, 
while $p^*$ survives and is ultimately selected as the optimal path.
This concludes the proof.
\end{proof}

\begin{proof} (Proof of Proposition \ref{prop:single-disamb-induction})
We proceed by induction on the number of obstacles $k \geq 2$.

\textbf{Base case ($k = 2$):} This case corresponds to Proposition~\ref{prop-identify opt case2-special}.
Under the given assumptions, the TPVE algorithm eliminates the infeasible path 
(intersecting both obstacles) and the obstacle-free path (if suboptimal), 
and successfully identifies $p^*$ as the unique feasible and optimal path.

\textbf{Inductive step:} Assume the proposition holds for $k = m \geq 2$ obstacles.
That is, if at most one obstacle can be disambiguated and a path $p^*$ exists 
intersecting exactly one such obstacle while all competing paths either:
(i) intersect multiple obstacles, making them infeasible, or
(ii) avoid all obstacles but incur higher cost,
then $p^*$ is retained and identified by TPVE.

\textbf{Now consider $k = m+1$ obstacles.} Let $X = \{x_1, x_2, \dots, x_{m+1}\}$, and suppose again that:
- $p^*$ intersects a single obstacle $x^* \in X$ such that $\delta_{x^*} \leq \delta_{\max}$,
- all other paths either intersect more than one obstacle 
(i.e., $\delta_p > \delta_{\max}$), or avoid all obstacles 
(i.e., $\delta_p = 0$ but $\widetilde{\mathcal{C}}_p > \widetilde{\mathcal{C}}_{p^*}$).

Let $V(p)$ denote the set of vertices on path $p$.
During Phase 1 of TPVE:
- All paths that intersect multiple obstacles are pruned due to feasibility failure, and
- All paths avoiding obstacles are retained but do not update the current upper bound 
$\widetilde{\mathcal{C}}_U = \widetilde{\mathcal{C}}_{p^*}$, since they are suboptimal.

During Phase 2:
- Any vertex on $p^*$ lies on a feasible path (namely, $p^*$) whose cost equals 
the current best known cost $\widetilde{\mathcal{C}}_U$.
- Competing paths either do not improve the bound or are infeasible, 
hence do not eliminate vertices on $p^*$.

Thus, all non-optimal paths are eliminated by TPVE, while all vertices on $p^*$ are retained.
Therefore, $p^*$ is preserved and ultimately identified as the optimal solution.

By the principle of mathematical induction, the result holds for all $k \geq 2$.
\end{proof}

\begin{proof} (Proof of Proposition \ref{prop:general-B-disamb})
We prove the claim by induction on the disambiguation budget parameter $B \geq 1$.

\textbf{Base case ($B=1$):} This corresponds to Proposition~\ref{prop:single-disamb-induction}, already established.
Under the assumptions, the TPVE algorithm prunes all infeasible or suboptimal paths and preserves $p^*$.

\textbf{Inductive step:} Suppose the result holds for disambiguation budgets up to $B = m$ obstacles.
That is, if a path $p^*$ intersects at most $m$ disjoint obstacles 
whose total disambiguation cost does not exceed $\delta_{\max}$, 
and all other paths either exceed budget or are cost-suboptimal, 
then TPVE preserves $p^*$ and eliminates all competing paths.

\textbf{Now consider $B = m+1$.} Let $p^*$ intersect obstacles $X^* = \{x_{i_1}, \dots, x_{i_{m+1}}\}$ such that:
\[
\sum_{x \in X^*} \delta_x \leq \delta_{\max},
\]
and $p^*$ is the minimal-cost such feasible path.

During Phase 1 of TPVE:
\begin{itemize}
    \item Any vertex $v \in p^*$ belongs to a feasible path (namely $p^*$).
    Since $|X^*| = m+1$ and the total cost of disambiguation is within budget, 
    the feasibility test does not eliminate $v$.
    \item Any path $p$ intersecting a set $X_p$ of more than $m+1$ obstacles, 
    or with $\sum_{x \in X_p} \delta_x > \delta_{\max}$, is deemed infeasible and eliminated.
    \item Any path avoiding all obstacles is retained, 
    but does not improve the upper bound cost 
    $\widetilde{\mathcal{C}}_U = \widetilde{\mathcal{C}}_{p^*}$.
\end{itemize}

During Phase 2:
\begin{itemize}
    \item Since $p^*$ is optimal and feasible under budget $\delta_{\max}$, 
    its vertices remain in the graph.
    \item Competing paths that either violate feasibility or fail to improve the modified cost 
    are not retained.
\end{itemize}

Thus, $p^*$ is never eliminated and is ultimately selected by the algorithm.

By induction, the result holds for all $B \geq 1$.
\end{proof}

\subsection{Spurious Dual Optima Vanish under High RBG}
\label{app: proofs_sec3_2}
\subsubsection*{Proof of Property \ref{prop:risk-blockage-gradient}}
\begin{proof} 
	Suppose paths $p_1$ and $p_2$ satisfy conditions (i)–(iii). 
	Let $p^*$ be the optimal path,
	with cost $\widetilde{\mathcal{C}}^*$ and disambiguation cost $\delta^* \leq \delta_{\max}$.
	
	From (iii), we have:
	\[
	0 < \widetilde{\mathcal{C}}_{p_2} - \widetilde{\mathcal{C}}^* < \lambda^*(\delta^* - \delta_{p_2}),
	\]
	implying that the apparent suboptimality of $p_2$ is offset under the dual formulation by its lower disambiguation cost.
	
	As RBG increases, the risk component $r_p$ increases for any path intersecting uncertain obstacles. 
	In particular, the gap $r^* - r_{p_2}$ increases, and for sufficiently high RBG:
	\[
	\ell_{p_2} - \ell^* < r^* - r_{p_2} \quad \Rightarrow \quad \widetilde{\mathcal{C}}_{p_2} < \widetilde{\mathcal{C}}^*,
	\]
	contradicting the optimality of $p^*$.
	
	For $p_1$, since it violates the disambiguation constraint ($\delta_{p_1} > \delta_{\max}$), 
	it is infeasible, yet (iii) implies:
	\[
	0 < \lambda^*(\delta_{p_1} - \delta_{\max}) < \widetilde{\mathcal{C}}^* - \widetilde{\mathcal{C}}_{p_1}.
	\]
	As RBG increases, so does $r_{p_1}$ relative to $r^*$, and for large enough RBG:
	\[
	\ell^* - \ell_{p_1} < r_{p_1} - r^* \quad \Rightarrow \quad \widetilde{\mathcal{C}}_{p_1} < \widetilde{\mathcal{C}}^*,
	\]
	again contradicting optimality.
	
	Thus, under high RBG, the coexistence of such dual-optimal 
	but distinct paths becomes increasingly unlikely. 
\end{proof}

\subsection{ Theorems and Propositions in Section \ref{sec:theory-policy}}
\label{app: proofs_sec4}
\subsubsection*{Proof of Theorem \ref{thm:policy-correctness}}
\begin{proof}
We divide the argument into two parts: feasibility and surrogate optimality.

\noindent
\textbf{(i) Feasibility.}
The COLOGR algorithm computes the optimal path over $G_{\text{adj}}$ subject to the disambiguation constraint $\delta_p \le \delta_{\max}$. This constraint is explicitly enforced during the graph reduction and optimization steps (e.g., via constrained Dijkstra variants). Thus, the selected solution $p^*$ is guaranteed to be feasible.

\noindent
\textbf{(ii) Surrogate Optimality.}
Let $\lambda^* \geq 0$ be the optimal Lagrange multiplier that maximizes the dual function:
\[
\Phi(\lambda) = \min_{p \in \mathcal{P}(s,t; G_{\text{adj}})} \left\{ 
\widetilde{\mathcal{C}}_p + \lambda (\delta_p - \delta_{\max}) 
\right\}.
\]
Since $\Phi(\lambda)$ is piecewise linear and concave, standard results from Lagrangian relaxation ensure that, if a path $p^*$ is both feasible and achieves the dual minimum at $\lambda^*$, then
\[
\widetilde{\mathcal{C}}_{p^*} + \lambda^*(\delta_{p^*} - \delta_{\max}) = \Phi(\lambda^*).
\]
Because $\delta_{p^*} \le \delta_{\max}$, the penalty term vanishes, and $p^*$ minimizes the surrogate cost over all feasible paths:
\[
\widetilde{\mathcal{C}}_{p^*} = \min \left\{ \widetilde{\mathcal{C}}_p : 
p \in \mathcal{P}(s,t; G_{\text{adj}}),\; \delta_p \le \delta_{\max} \right\}.
\]
\end{proof} 

\section{Tuning the Scaling Parameter $\alpha$ in the Linear Undesirability Function}
\label{app: sec_alpha}
To tailor traversal behavior to environmental uncertainty, 
we incorporate an $\alpha$-selection process into the policy. 
Since true obstacle status is unknown until disambiguation, 
each obstacle is modeled as a random variable with blockage probability $\pi_x$. 
Due to this stochasticity and the large number of obstacles, 
brute-force simulation across many $\alpha$ values is impractical.

Instead, we investigate how the optimal $\alpha$ varies with prior information accessible before traversal, including:
(i) disambiguation resources $\delta_x$,  
(ii) the proportion $\rho_T$ of true obstacles, and  
(iii) the probability distribution over obstacle types.

Assuming perfect sensing ($\pi_x = 1$ for true, $\pi_x = 0$ for false obstacles), 
true obstacles incur infinite penalty, while false obstacles ideally impose risk proportional to $\delta_x$. 
However, the standard linear undesirability function,
\[
r_p^L = -\sum_{x \in X: p \cap x \neq \emptyset} \alpha \log(1 - \pi_x),
\]
assigns negligible risk as $\pi_x \to 0$, underweighting the cost of false obstacle traversal. 
Thus, incorporating $\delta_x$ into the base penalty yields a better approximation.

We assess how the optimal $\alpha$ varies with $\rho_T$. 
Using the same simulation setup as in Section \ref{sec:MC-simulations}, 
we fix $n = 20, 40, 80$ potential obstacles and vary $\rho_T \in \{0, 0.1, 0.2, 0.4, 0.8, 1\}$. 
Figure \ref{sup-fig:different alpha and no} shows the average traversal cost across 100 replications per setting.

As expected, traversal cost increases with $\rho_T$ due to reduced path availability. 
Moreover, optimal $\alpha$ also increases with $\rho_T$. When $\rho_T$ is small, 
the constraint drives disambiguation selection toward low-risk paths, and a small $\alpha$ suffices. 
As $\rho_T$ rises, avoiding true obstacles becomes critical, 
and larger $\alpha$ values are needed to accentuate risk differences.

\begin{figure}[H]
	\centering
	\begin{tabular}{c}
		\begin{subfigure}[b]{0.333\textwidth}
			\centering
			\includegraphics[scale=0.333]{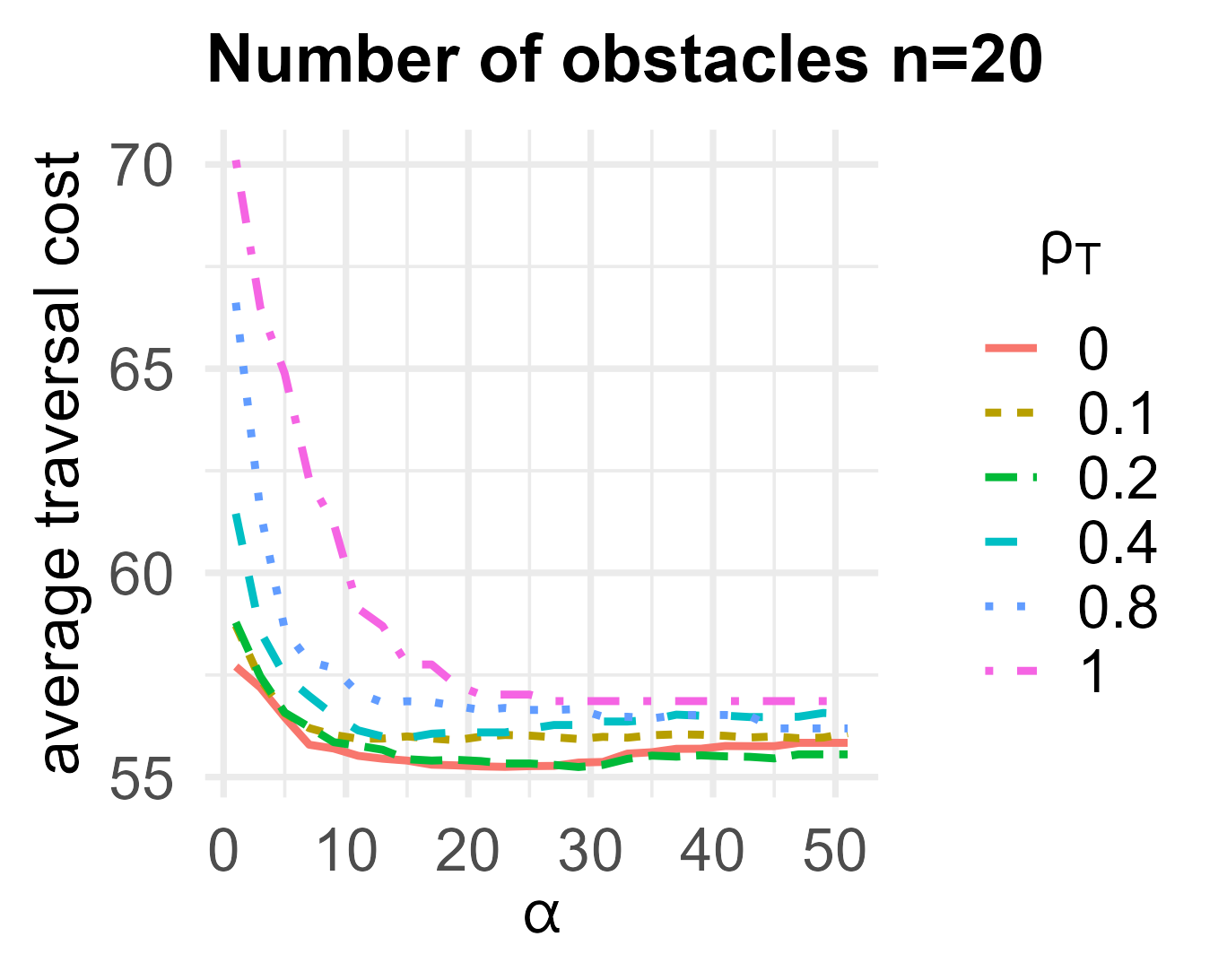} \caption{}
		\end{subfigure}
		\begin{subfigure}[b]{0.333\textwidth}
			\centering
			\includegraphics[scale=0.333]{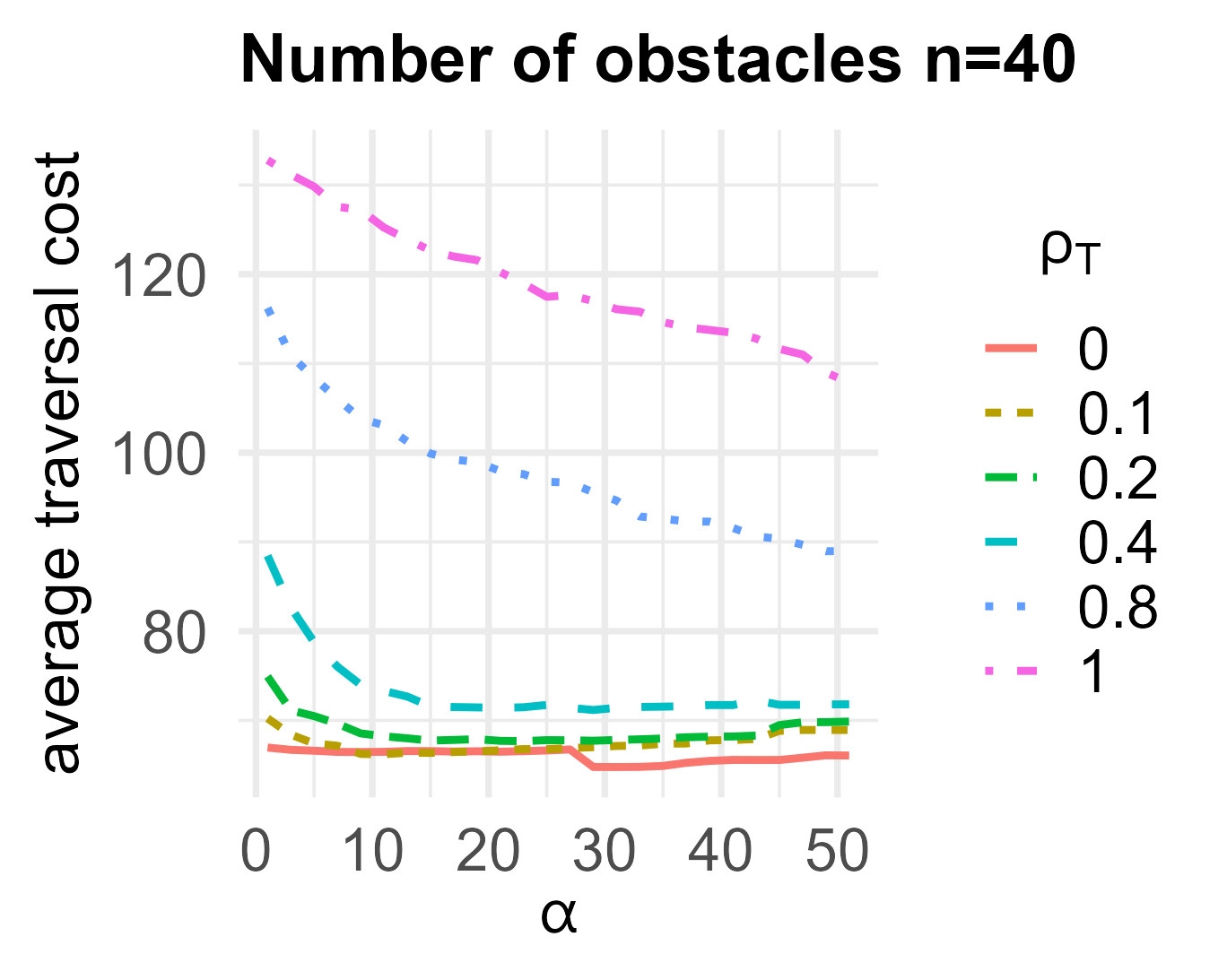} \caption{}
		\end{subfigure}
		\begin{subfigure}[b]{0.333\textwidth}
			\centering
			\includegraphics[scale=0.333]{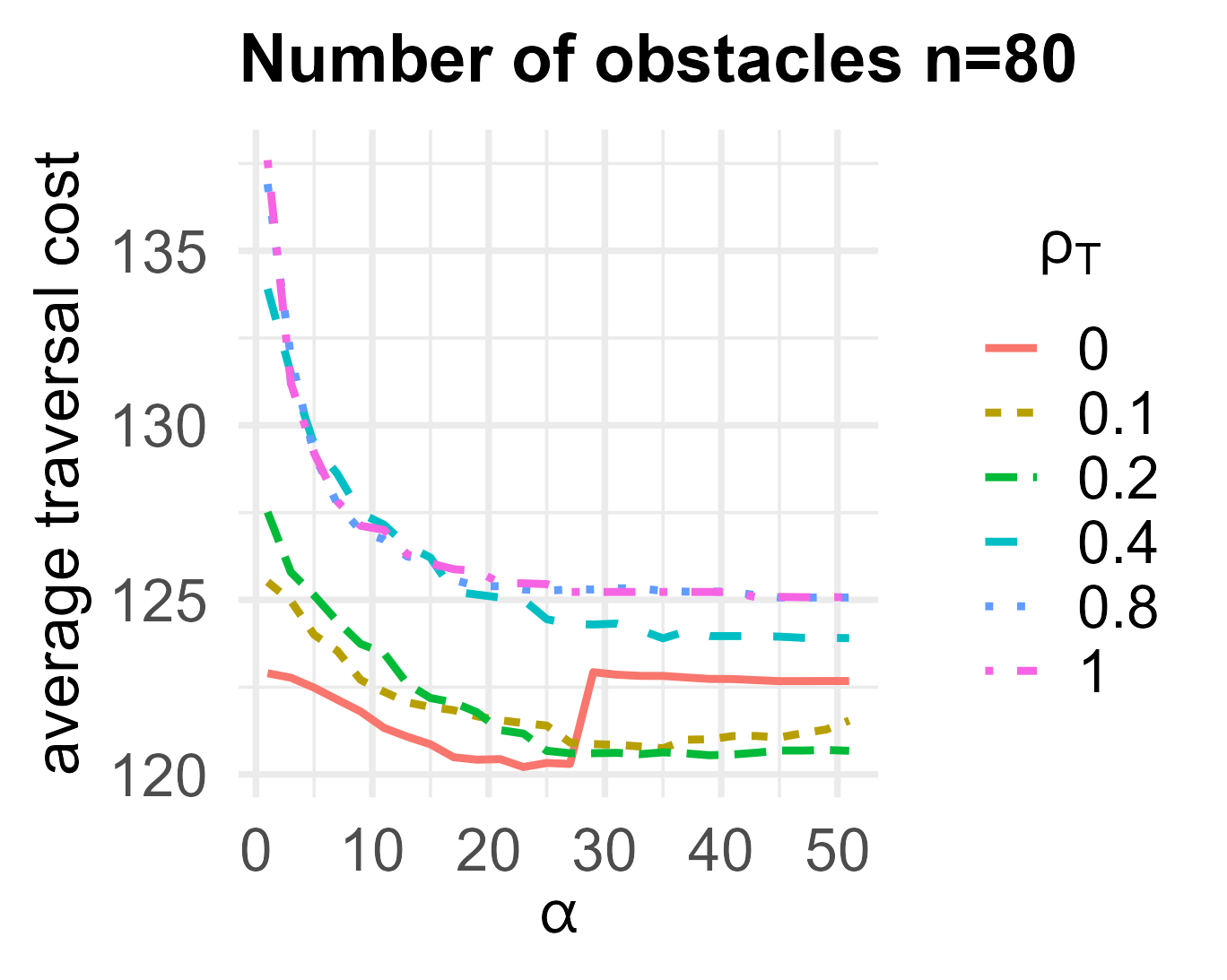} \caption{}
		\end{subfigure}
	\end{tabular}
	\caption{Effect of $\alpha$ and true obstacle proportion $\rho_T$ on mean traversal cost. Subplots: (a) $n = 20$, (b) $n = 40$, (c) $n = 80$.}
	\label{sup-fig:different alpha and no}
\end{figure}

To improve adaptivity, we propose a modified risk function:
\[
r_p = \sum_{x \in X: p \cap x \neq \emptyset} \left(\delta_x - \alpha_x \log(1 - \pi_x)\right),
\]
where $\alpha_x$ is a local scaling parameter. 
This form satisfies $r_p \to \sum \delta_x$ as $\pi_x \to 0$, 
ensuring alignment with disambiguation cost for low-risk obstacles.

The key challenge is determining $\alpha_x$. 
We propose a Bayesian framework that uses prior estimates of sensor reliability and 
obstacle prevalence to compute adjusted probabilities $\pi_{x,\text{adj}}$. 
Let $\bar{\pi}$ denote the average prior probability of obstacles being true. 
If the sensor produces outputs drawn from $f_T$ for true and $f_F$ for false obstacles, 
then the adjusted probability becomes:
\[
\pi_{x,\text{adj}} = P(f_T \mid \pi_x) = 
\frac{L(\pi_x \mid f_T) \cdot \bar{\pi}}{L(\pi_x \mid f_T) \cdot \bar{\pi} + L(\pi_x \mid f_F) \cdot (1 - \bar{\pi})}.
\]
Assuming a preselected upper bound $\alpha_{\max}$, we set:
\[
\alpha_x = \pi_{x,\text{adj}} \cdot \alpha_{\max},
\]
yielding the Bayesian linear undesirability function:
\[
r_p^{LB} = \sum_{x \in X: p \cap x \neq \emptyset} \left(\delta_x - \alpha_x \log(1 - \pi_x)\right).
\]

This formulation improves traversal policy fidelity by modulating risk sensitivity based on both obstacle-level uncertainty and prior knowledge, with performance benefits validated in our simulations (Section \ref{app: simulation_comp}).

\subsection{Convergence to the Benchmark under Perfect Sensor Precision}
We now show that the RCDP policy equipped with the Bayesian undesirability function 
asymptotically achieves the benchmark traversal cost as sensor accuracy becomes perfect. 

\begin{theorem}[Convergence to Benchmark under Perfect Sensing]
	\label{thm:convergence}
	Let $ \mathcal{C}_p^{LB} $ denote the traversal cost of the path selected by the RCDP policy 
	using the Bayesian linear undesirability risk function $ r^{LB} $, 
	and let $ \mathcal{C}_p^{bm} $ denote the cost of the optimal path under perfect obstacle knowledge.
	
	Assume:
	\begin{enumerate}[label=(\alph*)]
		\item Sensor probabilities converge pointwise: $\pi_x \to \mathbb{I}\{x \in X_T\}$ for all $x \in X$.
		\item $\alpha_x$ is uniformly bounded and independent of $\pi_x$.
		\item Edge lengths $\ell_e$ and disambiguation costs $\delta_e$ are fixed.
		\item For all $p$, the actual traversal cost $\mathcal{C}_p$ is bounded above.
	\end{enumerate}
	
	Then:
	\[
	\mathbb{E}[\mathcal{C}_{p^{LB}}] \to \mathcal{C}_{p^{bm}},
	\]
	where $p^{LB}$ is any path selected by the RCDP policy under $r^{LB}$.
\end{theorem}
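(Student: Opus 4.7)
The plan is to establish the convergence in three interlocking steps: (i) analyze the pointwise limit of the Bayesian risk function $r^{LB}$ as sensing becomes perfect, (ii) show that the surrogate-optimal path produced by COLOGR on each replanning instance converges in structure to the benchmark path, and (iii) pass from this structural convergence to convergence of expected realized cost using assumption (d).

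First, I would study the edge-level risk $r_x^{LB} = \delta_x - \alpha_x \log(1-\pi_x)$ under assumption (a). For any true obstacle $x \in X_T$, $-\log(1-\pi_x) \to +\infty$ while $\alpha_x$ stays uniformly bounded and independent of $\pi_x$ by (b), so the risk contribution of $x$ diverges; consequently, the surrogate cost $\widetilde{\mathcal{C}}_p$ of any $s$--$t$ path intersecting a true obstacle grows without bound. For any false obstacle $x \in X_F$, $-\log(1-\pi_x) \to 0$, so $r_x^{LB} \to \delta_x$, and $\widetilde{\mathcal{C}}_p$ collapses to the deterministic sum of edge lengths plus the disambiguation costs of the false obstacles actually crossed. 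This sharp dichotomy implies that, once the sensing error is small enough, any path that minimizes $\widetilde{\mathcal{C}}_p$ subject to $\delta_p \leq \delta_{\max}$ must avoid all true obstacles, provided such a feasible path exists---this is precisely the structural characterization of the benchmark $p^{bm}$ under perfect information.

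Second, I would invoke Theorem~\ref{thm:policy-correctness} to conclude that, on each invocation, COLOGR returns a feasible surrogate-optimal path on the current adjusted graph. Denote by $p^{LB}_\varepsilon$ the trajectory selected under precision level $\varepsilon := \|\pi_x - \mathbb{I}\{x \in X_T\}\|_\infty$. By the divergence on true-obstacle edges and the continuity of $\widetilde{\mathcal{C}}_p$ on the remaining edges, for all sufficiently small $\varepsilon$ the selected path contains no true-obstacle intersection, so the RCDP loop in Algorithm~\ref{alg:RCDP} incurs no forced reroute: each disambiguation encountered along the path reveals a false obstacle and execution proceeds without backtracking. Hence the realized cost $\mathcal{C}_{p^{LB}_\varepsilon}$ equals the deterministic sum of edge lengths and disambiguation costs along a path that, by surrogate minimality among true-obstacle-free feasible paths, attains $\mathcal{C}_{p^{bm}}$.

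Third, combining this structural convergence with the uniform boundedness of $\mathcal{C}_p$ from assumption (d), the bounded convergence theorem yields $\mathbb{E}[\mathcal{C}_{p^{LB}}] \to \mathcal{C}_{p^{bm}}$. The main obstacle will be handling the adaptive, online nature of the policy: each replanning step uses updated information, so the trajectory is a random sequence of COLOGR solutions on progressively reduced graphs. To control this, I would induct on the number of ambiguous obstacles encountered along $p^{bm}$, arguing at each step that the probability of a forced reroute (i.e., of the selected first edge intersecting a true obstacle) vanishes as $\varepsilon \to 0$, using the divergence of $r_x^{LB}$ on $X_T$ together with the cut-based bounds of Property~\ref{prop:cut_bounds} to exclude pathological cases in which the benchmark-feasible class is itself pruned at finite precision. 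The remaining estimates are routine continuity and monotonicity arguments on the surrogate objective.
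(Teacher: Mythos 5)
Your proposal is correct in substance and shares the paper's opening step --- the dichotomy of $r^{LB}$ under perfect sensing (divergence of $-\alpha_x\log(1-\pi_x)$ on $X_T$, collapse to $\delta_x$ on $X_F$) --- but the second half takes a genuinely different route. The paper stays entirely at the surrogate level: it shows the edge costs $\widetilde{\mathcal{C}}_e^{LB}$ converge uniformly to the benchmark edge costs $\mathcal{C}_e^{bm}$, notes that the feasibility constraint is unaffected, and then invokes continuity of path cost in edge costs together with Berge's Maximum Theorem to get stability of the argmin, concluding directly that $\mathbb{E}[\mathcal{C}_{p^{LB}}]\to\mathcal{C}_{p^{bm}}$. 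You instead argue structurally: via Theorem~\ref{thm:policy-correctness}, for sufficiently small sensing error the surrogate-optimal feasible path cannot intersect a true obstacle, so the online RCDP loop suffers no forced reroute, the realized cost becomes deterministic and (in the limit) matches the benchmark, and bounded convergence under assumption (d) finishes the expectation claim; you further propose an induction over replanning steps and Property~\ref{prop:cut_bounds} to control the adaptive execution. What each approach buys: the paper's argument is shorter and cleaner but quietly identifies the surrogate minimizer with the executed trajectory, glossing over the gap between surrogate and realized cost and the sequential replanning; your argument pays for extra length but explicitly closes that gap, which is arguably where the real content of the theorem lies. Two small points to tighten: the claim that the surrogate minimizer among true-obstacle-free paths ``attains'' $\mathcal{C}_{p^{bm}}$ holds only in the limit (at finite precision it may be a different path whose cost exceeds the benchmark by a vanishing amount --- finiteness of the path set makes this routine, but say so), and note that both your argument and the paper's implicitly need $\alpha_x$ bounded away from zero on $X_T$, since assumption (b) alone (upper boundedness) does not force $-\alpha_x\log(1-\pi_x)$ to diverge.
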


\begin{proof}
	Let $G = (V,E)$ denote the spatial graph. Under perfect sensing, edge costs become:
	\[
	\mathcal{C}_e^{bm} = 
	\begin{cases}
		\ell_e + \frac{1}{2} \sum_{x \in X_F: x \cap e \neq \emptyset} \delta_x, & \text{if } e \cap X_T = \emptyset, \\
		\infty, & \text{otherwise}.
	\end{cases}
	\]
	
	Now consider the Bayesian surrogate cost:
	\[
	\widetilde{\mathcal{C}}_e^{LB} = \ell_e + \frac{1}{2} \sum_{x \in X: x \cap e \neq \emptyset} 
	\left( \delta_x - \alpha_x \log(1 - \pi_x) \right).
	\]
	
	As $\pi_x \to \mathbb{I}\{x \in X_T\}$, we have:
	- For $x \in X_T$, $-\log(1 - \pi_x) \to \infty$ $\Rightarrow$ $\widetilde{\mathcal{C}}_e^{LB} \to \infty$ if $e$ intersects a true obstacle.
	- For $x \in X_F$, $-\log(1 - \pi_x) \to 0$ $\Rightarrow$ $\widetilde{\mathcal{C}}_e^{LB} \to \mathcal{C}_e^{bm}$.
	
	Hence, $\widetilde{\mathcal{C}}_e^{LB} \to \mathcal{C}_e^{bm}$ uniformly over $e$.
	
	By continuity of path cost with respect to edge costs, this implies:
	\[
	\widetilde{\mathcal{C}}_p^{LB} \to \mathcal{C}_p^{bm} \quad \text{for all feasible paths } p.
	\]
	
	The feasibility constraint $\delta_p \leq \delta_{\max}$ remains unchanged, as it depends only on obstacle geometry.
	
	Since the argmin over continuous and bounded costs is stable under uniform convergence (Berge's Maximum Theorem), 
	the selected path $p^{LB}$ satisfies:
	\[
	\mathbb{E}[\mathcal{C}_{p^{LB}}] \to \mathcal{C}_{p^{bm}}.
	\]
\end{proof}

\noindent
Simulation results in Figure~\ref{sup-fig: rLB_convergence} support this convergence, 
showing that as sensor precision increases, the traversal cost under $r^{LB}$ closely approximates the benchmark.

\section{Details of the Monte Carlo Simulations}
\subsection{Graph Reduction Performance: TPVE vs. SNE}
\label{app: simulation_red}
We conduct Monte Carlo simulations (1000 replications) to compare the performance of 
the proposed Two-Phase Vertex Elimination (TPVE) method with the original Simple Node Elimination (SNE).  
Table \ref{app: Table_reduction compare} reports results over all replications and 
for selected cases that exhibit either a nonzero duality gap or 
noticeable cost differences—highlighting meaningful distinctions between the two reduction strategies.

The simulation domain is $\Omega = [0, 100] \times [0, 50]$ and 
contains 20, 40, or 80 disk-shaped obstacles, 
among which 4, 8, or 16 are true obstacles, respectively.  
The source and target are fixed at $s = (50, 50)$ and $t = (50, 1)$, 
resulting in an initial graph with 5151 vertices.

To isolate the impact of graph reduction, 
we adopt the proposed traversal policy with 
the linear undesirability function ($\alpha = 15, 30$) and consider both:
(i) heterogeneous disambiguation costs, and  
(ii) a uniform cost setting where the constraint simplifies to a bound on the number of disambiguated obstacles.

Further simulation and policy comparisons are provided in Section~\ref{app: simulation_comp}.

As seen in Table \ref{app: Table_reduction compare}, 
TPVE consistently produces smaller final graphs and closes the duality gap in all runs (gap = 0), 
while SNE shows a nonzero gap in aggregate and selected cases (e.g., 0.026 on average in selected cases).  
Although both methods sometimes yield paths with identical costs, 
SNE more frequently fails to identify the true optimal path due to suboptimal pruning.  
This results either from unresolved duality gaps or from ties in surrogate cost, 
where TPVE better captures the true optimum.


\begin{table}[H]
	\centering
	\resizebox{13cm}{!}{%
		\begin{tabular}{@{}c|ccc|ccc@{}}
			\toprule
			& \multicolumn{3}{c|}{Two-Phase Vertex Elimination (TPVE)} & \multicolumn{3}{c}{Simple Node Elimination (SNE)} \\
			& \begin{tabular}[c]{@{}c@{}}Graph Size\\ (number of vertices)\end{tabular} & 
			\begin{tabular}[c]{@{}c@{}}Duality Gap\\ ($\frac{\text{upper-lower}}{\text{lower}}$)\end{tabular} & 
			$\mathcal{C}_p$ & 
			\begin{tabular}[c]{@{}c@{}}Graph Size\\ (number of vertices)\end{tabular} & 
			\begin{tabular}[c]{@{}c@{}}Duality Gap\\ ($\frac{\text{upper-lower}}{\text{lower}}$)\end{tabular} & 
			$\mathcal{C}_p$ \\ \midrule
			Case 1 & 728 & 0 & 75.658 & 499 & 0.016 & 75.658 \\
			Case 2 & 136 & 0 & 58.490 & 4231 & 0.064 & 59.405 \\
			Case 3 & 158 & 0 & 62.450 & 278 & 0.023 & 62.450 \\
			Case 4 & 225 & 0 & 59.282 & 4213 & 0 & 63.090 \\
			Case 5 & 137 & 0 & 64.065 & 829 & 0.879 & 64.065 \\
			Average (50 selected cases) & 142 & 0 & 57.304 & 3948 & 0.026 & 61.015 \\
			Average (1000 cases) & 2423 & 0 & 58.444 & 3745 & 0.001 & 58.667 \\ \bottomrule
		\end{tabular}%
	}
	\caption{Comparison of TPVE and SNE on final graph size, duality gap, and traversal cost.}
	\label{app: Table_reduction compare}
\end{table}

\subsection{Comprehensive Comparisons of Traversal Policies}
\label{app: simulation_comp}
In this section, we provide a comprehensive comparison of the policies, 
including detailed analyses using the metrics introduced in Section \ref{sec:MC-simulations} as well as additional evaluation metrics.

\begin{enumerate}
	\item \emph{Average traversal cost with confidence interval}: this primary metric measures the 
	mean traversal cost from starting point to target point across different environments, 
	directly reflects policy's quality, where lower values indicate superior overall performance.
	\item \emph{Error relative to optimal}: calculated as 
	$\sqrt{\frac{\sum_i(\mathcal{C}_{i}-\mathcal{C}^*)^2}{n}}$, where $\mathcal{C}_{i}$ is the 
	traversal cost in $i^{th}$ simulation, quantifying the deviation from an optimal cost 
	generated by an offline benchmark assuming perfect knowledge of obstacle status. 
	This metric assesses how closely each policy approximates the idealized best scenario.
	\item \emph{Variability}: using standard deviation of traversal costs across different 
	environments, reflecting each policy's stability and reliability under varying uncertain 
	conditions.
	\item \emph{Quantile metrics}: specifically, the $25^{th}$ and $75^{th}$ percentiles of 
	traversal costs across different environments are analyzed to reveal performance under both 
	favorable and challenging scenarios, offering additional insights into each policy's 
	robustness and consistency.
\end{enumerate}

Together, these metrics provide comprehensive assessment of each policy's strengths and weaknesses.

Figure~\ref{Figure4} visualizes an example comparison under the simplified scenario with $N_{\max} = 2$, 
contrasting the RCDP policy and the constrained RD policy.

\begin{figure}[H]
	\centering
	\begin{tabular}{c}
		\begin{subfigure}[b]{0.5\textwidth}
			\centering
			\includegraphics[width=\textwidth]{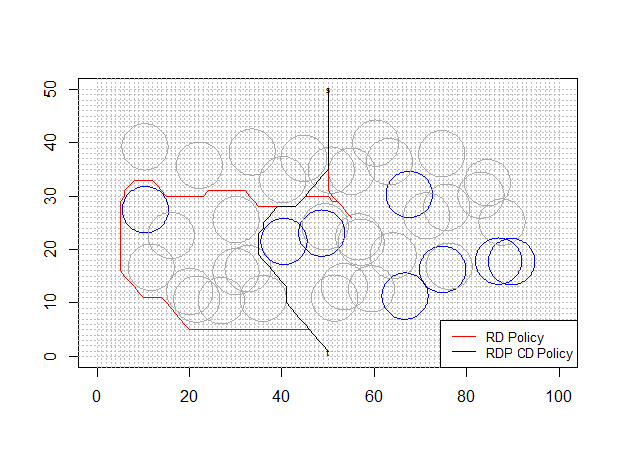}
			\caption{}
		\end{subfigure}
	\end{tabular}
	\caption{Example traversal paths under $N_{\max}=2$ for RCDP and constrained RD policies.}
	\label{Figure4}
\end{figure}

Figure~\ref{sup-fig:avg with CI} shows the mean traversal cost with 95\% confidence intervals for all policies. 
As expected, traversal cost increases with obstacle count due to greater obstruction, 
while larger disambiguation budgets yield lower costs by enabling more effective exploration and obstacle avoidance.

\begin{figure}[H]
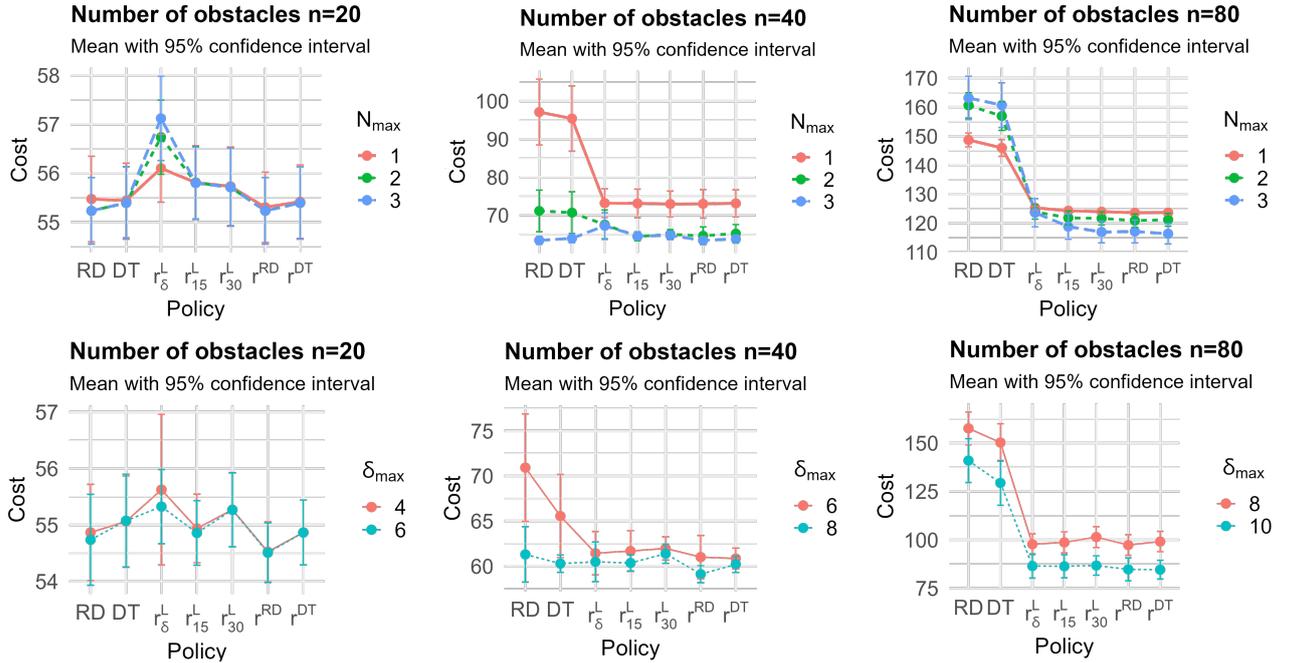

	\centering
	\begin{tabular}{ccc}
		\begin{subfigure}[b]{0.33\textwidth}
			\centering
			\includegraphics[width=\textwidth]{N_20_avg.png}
		\end{subfigure} &
		\begin{subfigure}[b]{0.33\textwidth}
			\centering
			\includegraphics[width=\textwidth]{N_40_avg.png}
		\end{subfigure} &
		\begin{subfigure}[b]{0.33\textwidth}
			\centering
			\includegraphics[width=\textwidth]{N_80_avg.png}
		\end{subfigure} \\
		\begin{subfigure}[b]{0.33\textwidth}
			\centering
			\includegraphics[width=\textwidth]{delta_20_avg.png}
		\end{subfigure} &
		\begin{subfigure}[b]{0.33\textwidth}
			\centering
			\includegraphics[width=\textwidth]{delta_40_avg.png}
		\end{subfigure} &
		\begin{subfigure}[b]{0.33\textwidth}
			\centering
			\includegraphics[width=\textwidth]{delta_80_avg.png}
		\end{subfigure}
	\end{tabular}
	\caption{Average traversal cost with $95\%$ confidence interval across two greedy policies and five RCDP policies using different risk functions.}
	\label{sup-fig:avg with CI}
\end{figure}

The RCDP policies consistently outperform the constrained RD and DT baselines. 
This improvement stems from the RCDP's global consideration of the disambiguation budget, 
in contrast to the myopic cost-minimization strategy of greedy methods.

As obstacle density increases or disambiguation budgets become tighter, 
the performance gap widens. 
RCDP policies exhibit both lower average cost and smaller confidence intervals, 
indicating greater robustness. 
In contrast, greedy policies sometimes exceed 
even the length of the longest obstacle-free path (150 units), a shortcoming not observed with RCDP.

These findings are further substantiated in Table~\ref{sup-tab:comparison_sevenalgs2}, 
which records the average number of obstacles intersected (in the simplified scenario) and 
disambiguation cost used (in the general scenario). 
Greedy methods often breach the constraint, while RCDP policies remain feasible and efficient.

Figure~\ref{sup-fig:sd} presents the standard deviations of traversal costs. 
Again, RCDP policies display smaller variability, 
reinforcing their consistency across heterogeneous environments.

\begin{figure}[H]
	\centering
	\begin{tabular}{ccc}
		\begin{subfigure}[b]{0.33\textwidth}
			\centering
			\includegraphics[width=\textwidth]{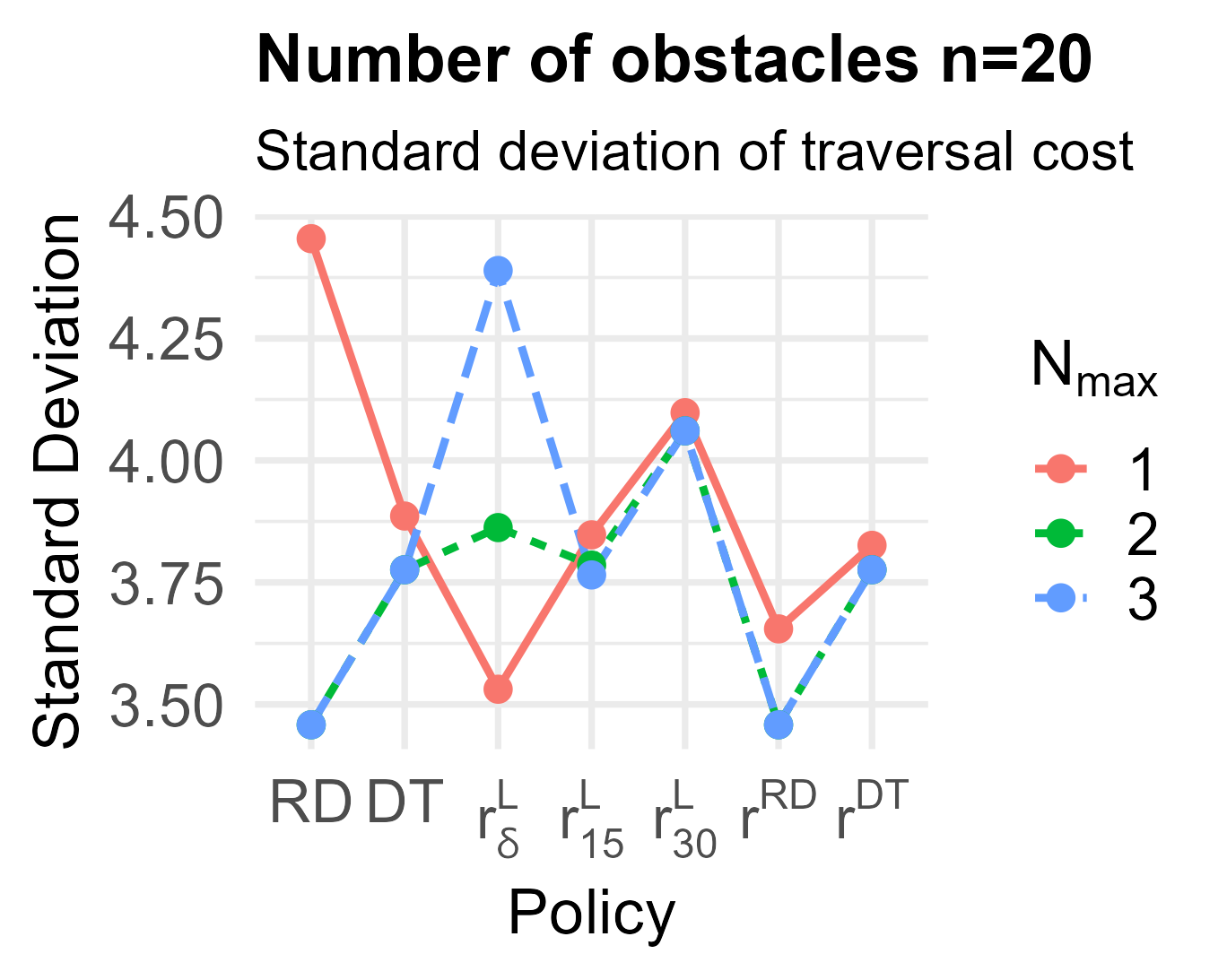}
		\end{subfigure} &
		\begin{subfigure}[b]{0.33\textwidth}
			\centering
			\includegraphics[width=\textwidth]{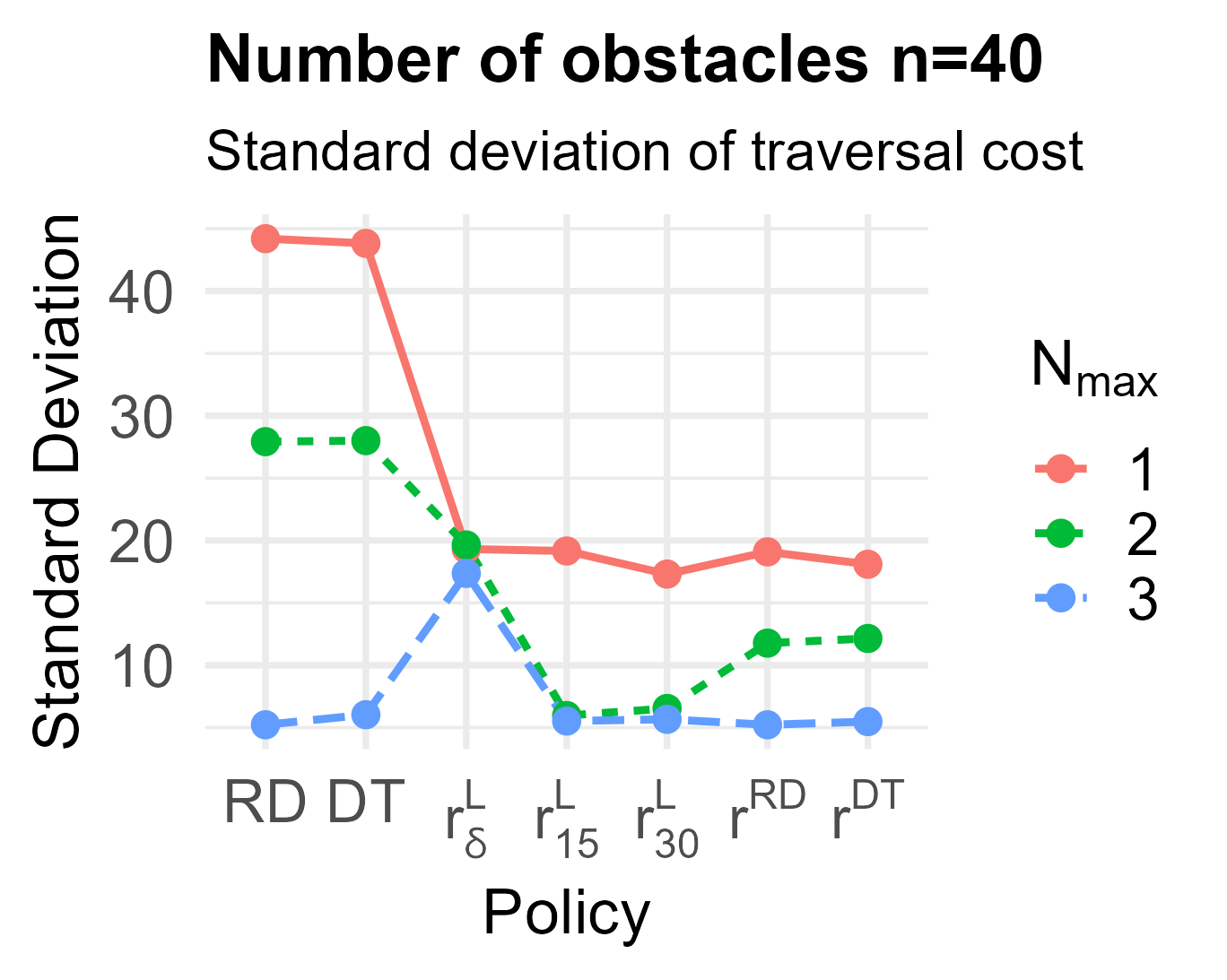}
		\end{subfigure} &
		\begin{subfigure}[b]{0.33\textwidth}
			\centering
			\includegraphics[width=\textwidth]{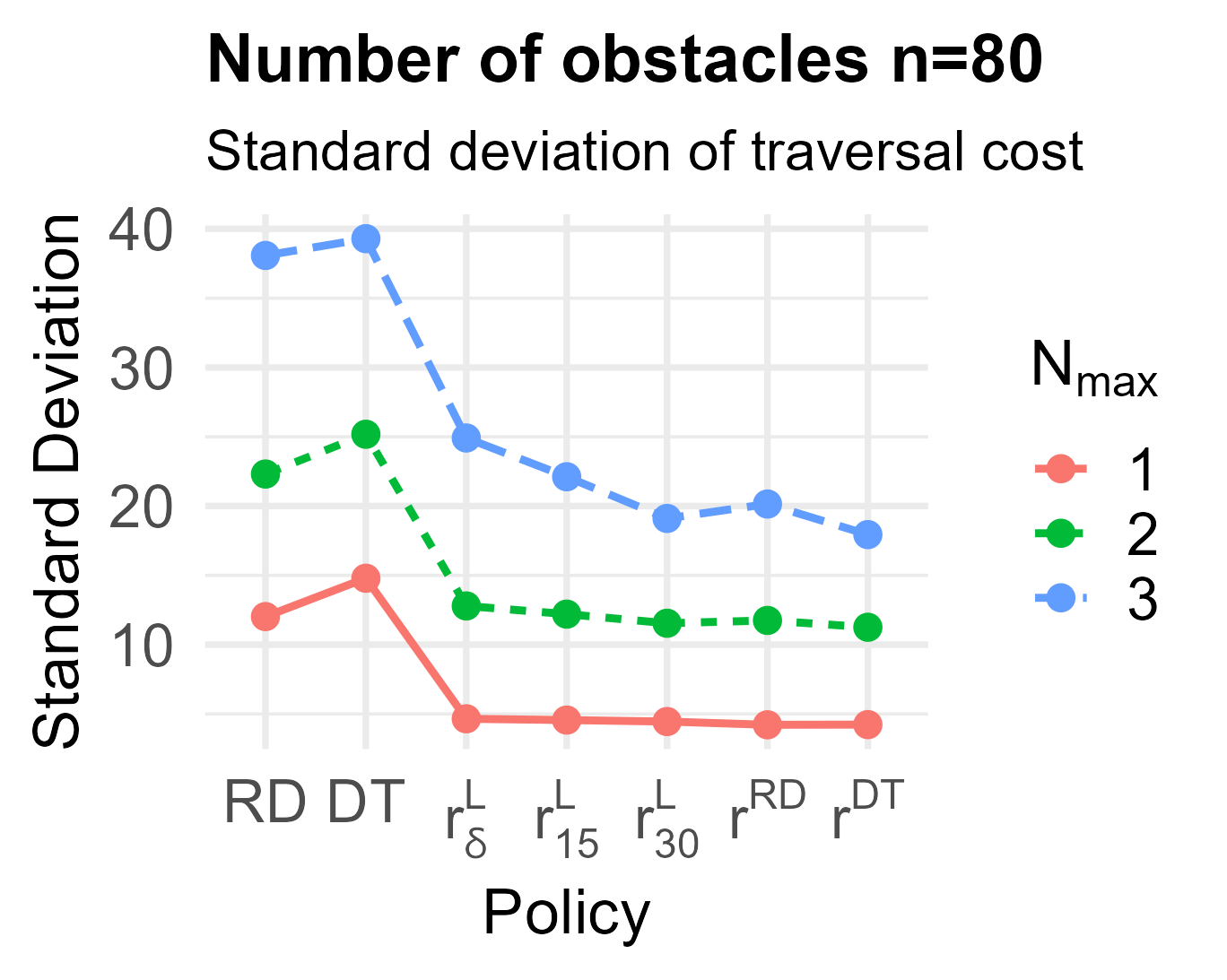}
		\end{subfigure} \\
		\begin{subfigure}[b]{0.33\textwidth}
			\centering
			\includegraphics[width=\textwidth]{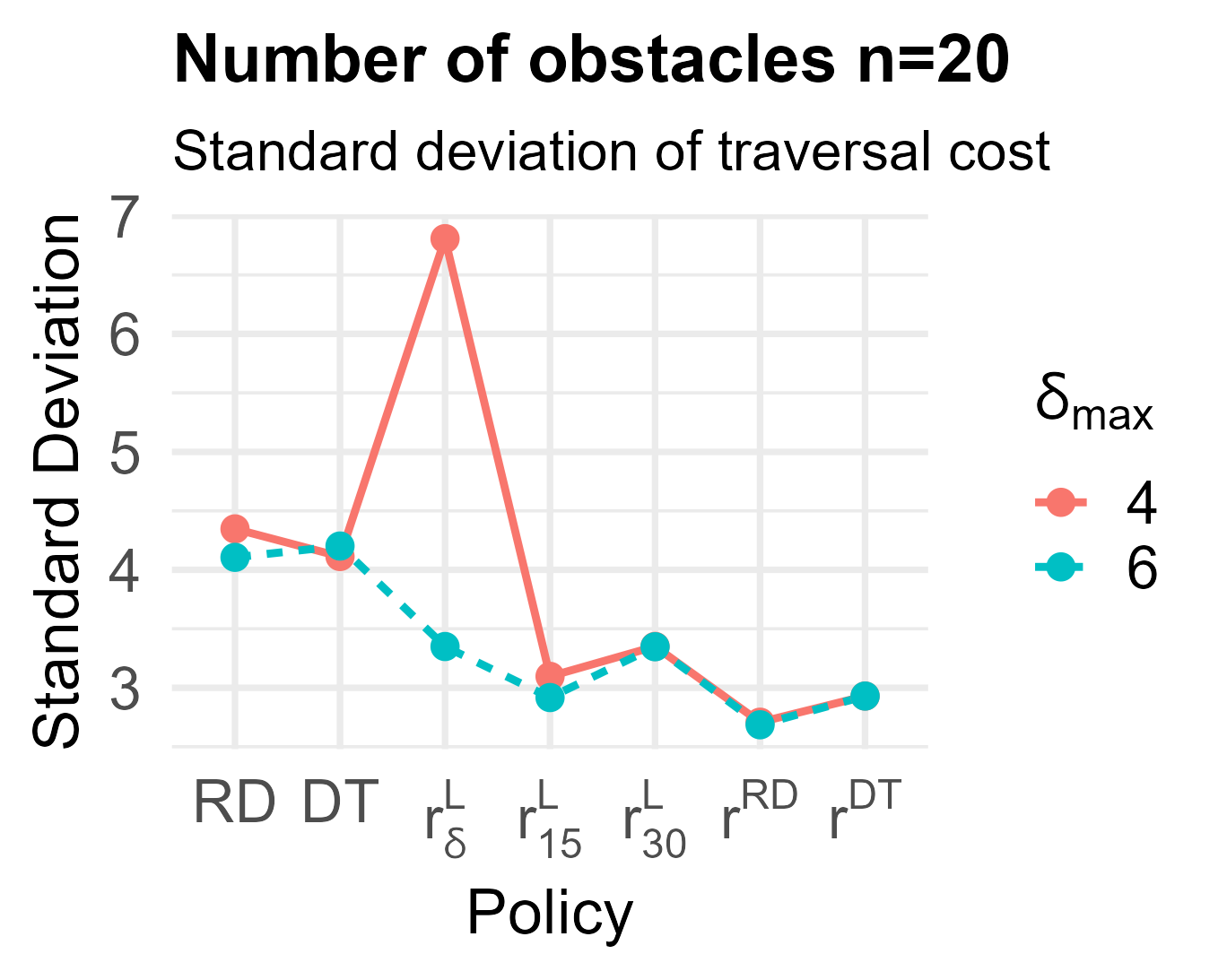}
		\end{subfigure} &
		\begin{subfigure}[b]{0.33\textwidth}
			\centering
			\includegraphics[width=\textwidth]{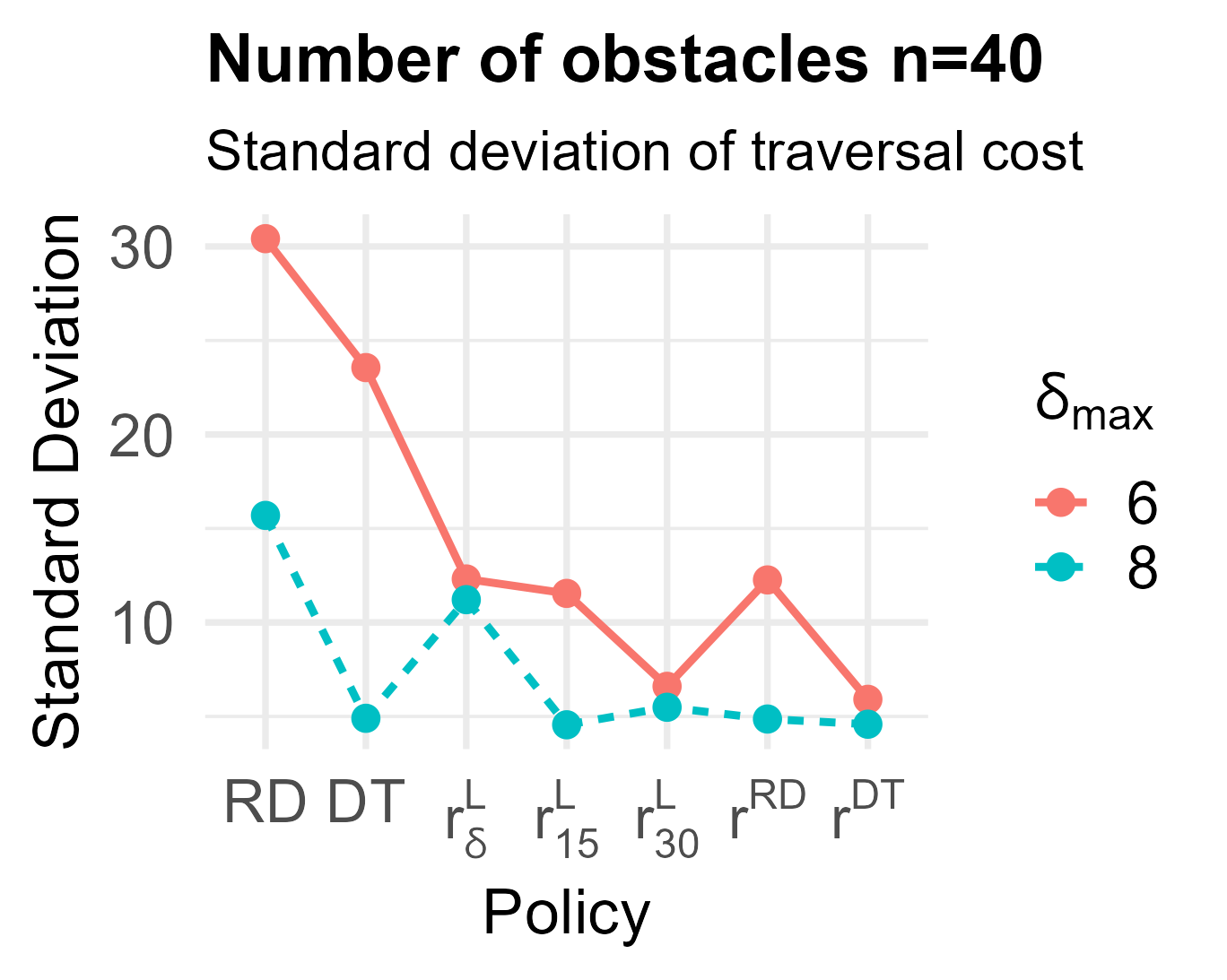}
		\end{subfigure} &
		\begin{subfigure}[b]{0.33\textwidth}
			\centering
			\includegraphics[width=\textwidth]{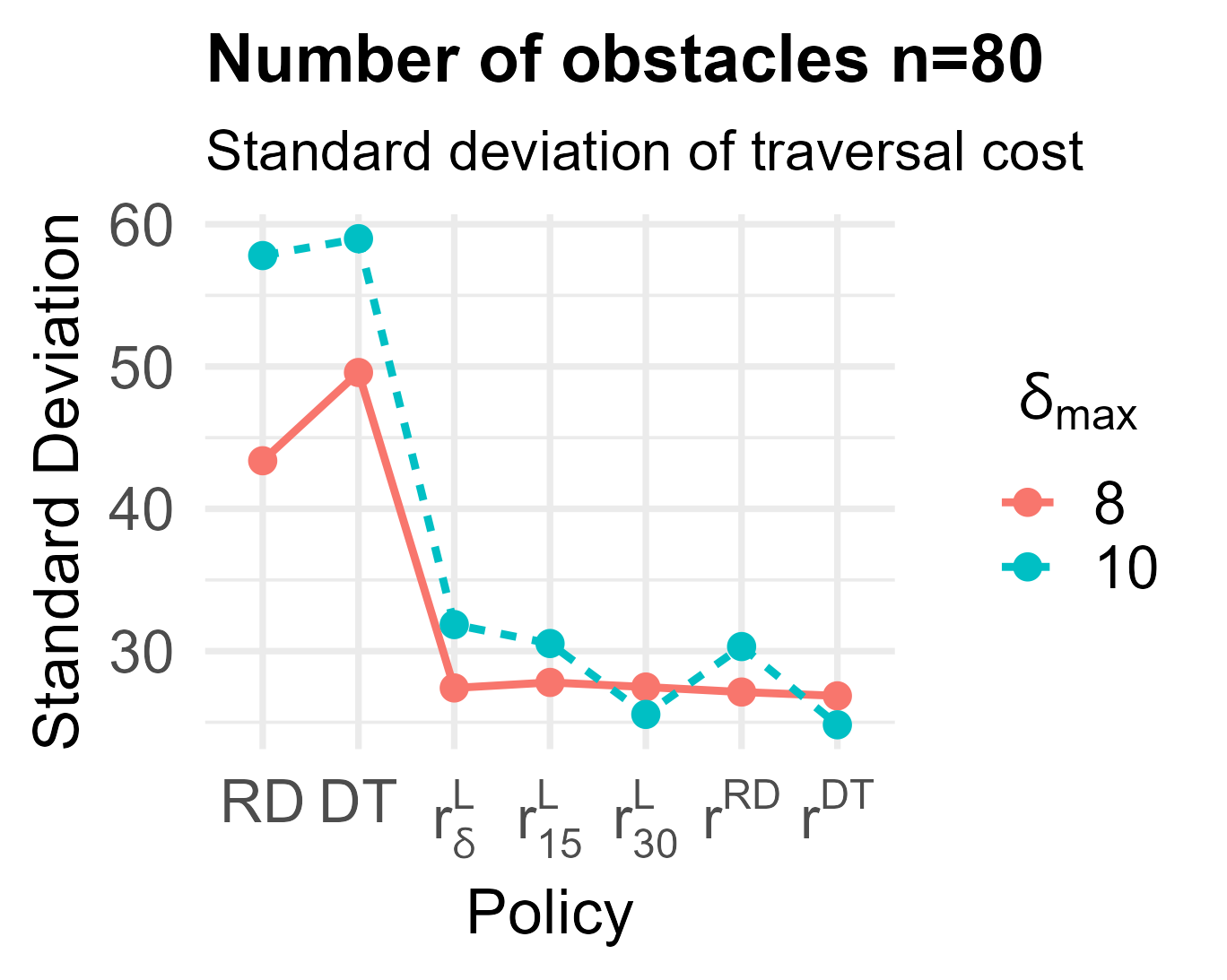}
		\end{subfigure}
	\end{tabular}
	\caption{Standard deviation of traversal costs across policies and environments. 
		Lower variance for RCDP policies indicates higher stability under uncertainty.}
	\label{sup-fig:sd}
\end{figure}

Figure~\ref{sup-fig:error} compares policies using the error metric, 
which captures the deviation of traversal cost from the offline optimal. 
As obstacle density increases and disambiguation budgets tighten, 
error magnitudes rise—reflecting the greater complexity of navigating such environments. 
Consistent with other performance metrics, 
RCDP policies achieve lower errors than greedy alternatives, 
especially under challenging conditions, 
highlighting their superior adaptivity and planning efficiency.

\begin{figure}[H]
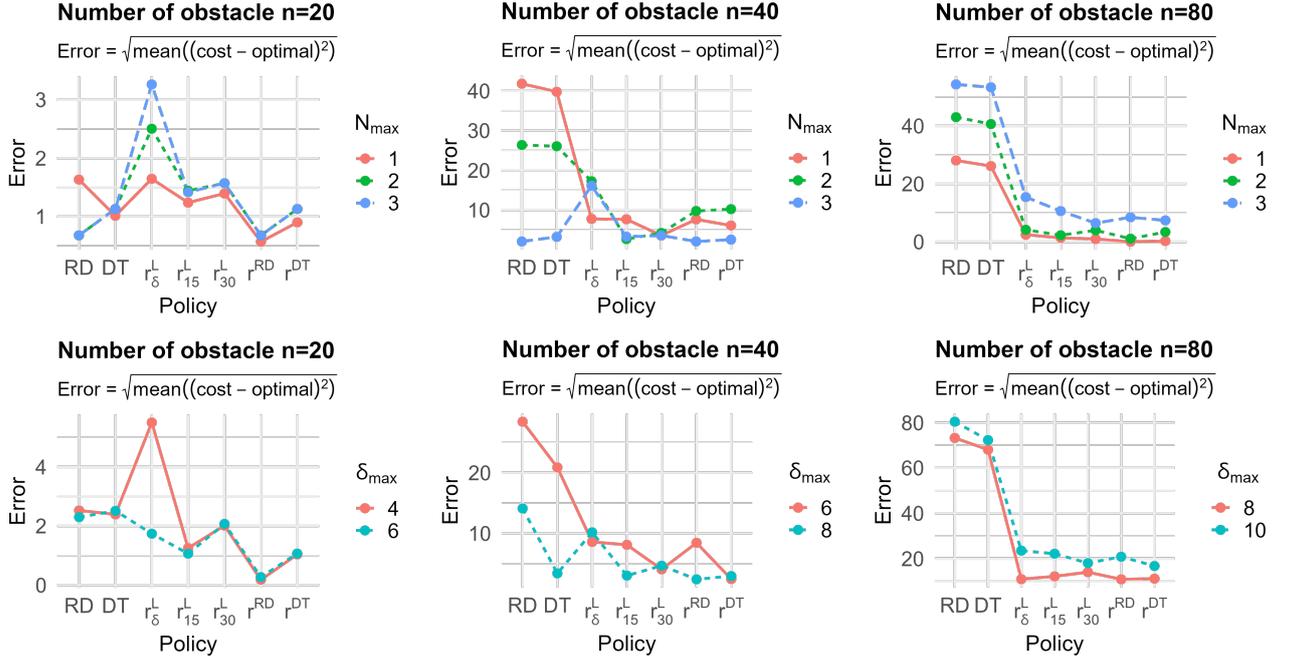

	\centering
	\begin{tabular}{ccc}
		\begin{subfigure}[b]{0.33\textwidth}
			\centering
			\includegraphics[width=\textwidth]{N_20_error.png}
		\end{subfigure} &
		\begin{subfigure}[b]{0.33\textwidth}
			\centering
			\includegraphics[width=\textwidth]{N_40_error.png}
		\end{subfigure} &
		\begin{subfigure}[b]{0.33\textwidth}
			\centering
			\includegraphics[width=\textwidth]{N_80_error.png}
		\end{subfigure} \\
		\begin{subfigure}[b]{0.33\textwidth}
			\centering
			\includegraphics[width=\textwidth]{delta_20_error.png}
		\end{subfigure} &
		\begin{subfigure}[b]{0.33\textwidth}
			\centering
			\includegraphics[width=\textwidth]{delta_40_error.png}
		\end{subfigure} &
		\begin{subfigure}[b]{0.33\textwidth}
			\centering
			\includegraphics[width=\textwidth]{delta_80_error.png}
		\end{subfigure}
	\end{tabular}
	\caption{The deviation of two greedy policies, five RCDP policies with different risk functions 
		from optimal solutions}
	\label{sup-fig:error}
\end{figure}

Figures \ref{sup-fig:quantile25} and \ref{sup-fig:quantile75} show the $25^{th}$ and $75^{th}$ percentile 
traversal cost across environments, respectively,
capturing each policy's performance in relatively favorable and challenging conditions.
Results regarding $25^{th}$ percentile indicate that in less dense environments, policies show 
similar performance,
but distinctions in policies' performance become more obvious as obstacle density increases.
Especially in environments with $n = 80$, greedy policies tend to show much higher variability in 
the $25^{th}$ percentile compared to RCDP policies.
This suggests that although greedy policies may find efficient paths, their lack of strategic 
disambiguation planning results in greater sensitivity to environment conditions.
When we compare policies regarding $75^{th}$ percentile, the benefits of applying RCDP policies 
compared to greedy policies become more evident.
Greedy policies tend to exhibit higher $75^{th}$ percentile traversal costs than RCDP policies, 
particularly in obstacle-dense environments ($n = 80$).
RCDP policies, in contrast, demonstrate better and more robust performance in worst scenarios, 
due to the systematic integration of risk into their entire decision-making process.

\begin{figure}[H]
	\centering
	\begin{tabular}{ccc}
		\begin{subfigure}[b]{0.33\textwidth}
			\centering
			\includegraphics[width=\textwidth]{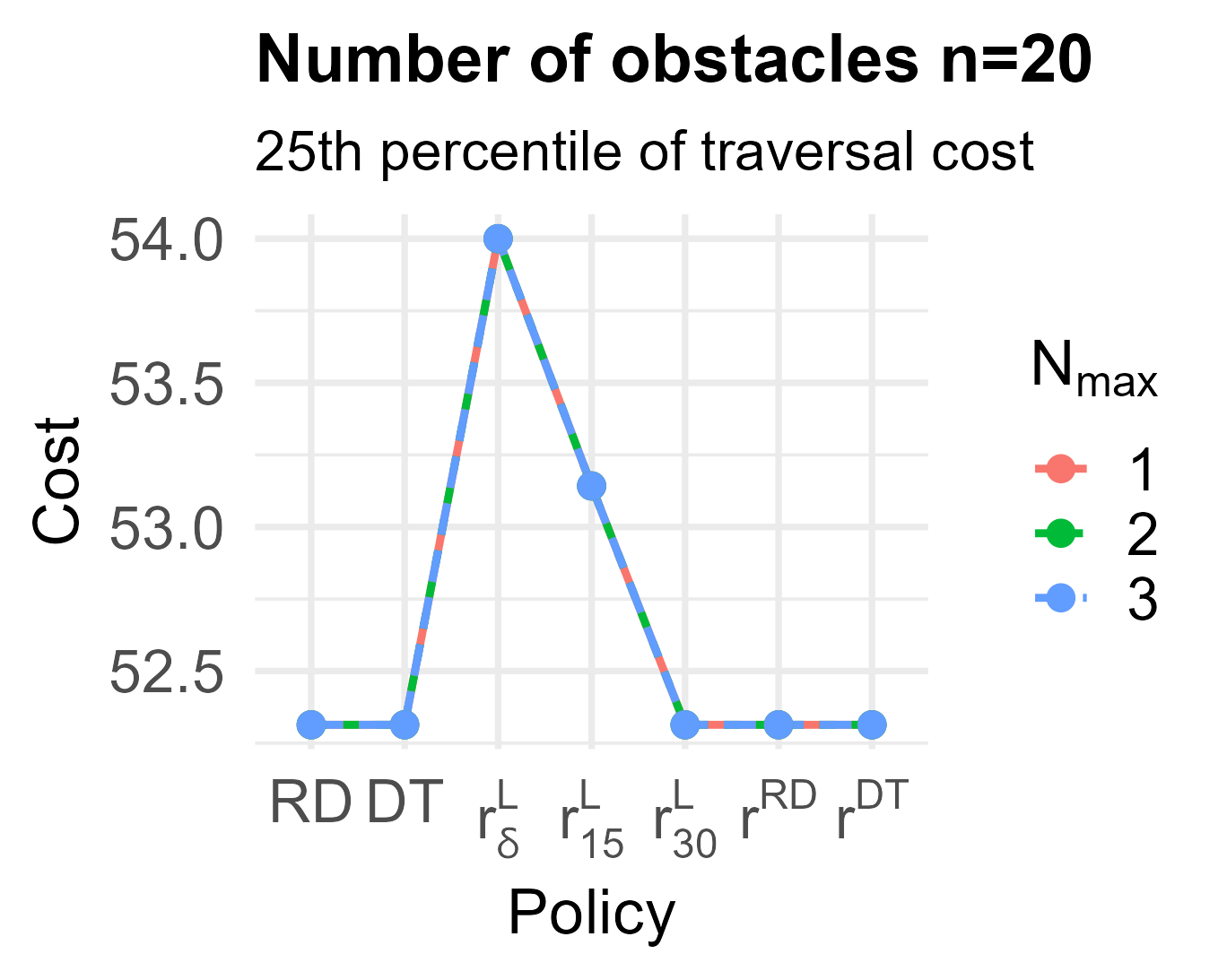}
		\end{subfigure} &
		\begin{subfigure}[b]{0.33\textwidth}
			\centering
			\includegraphics[width=\textwidth]{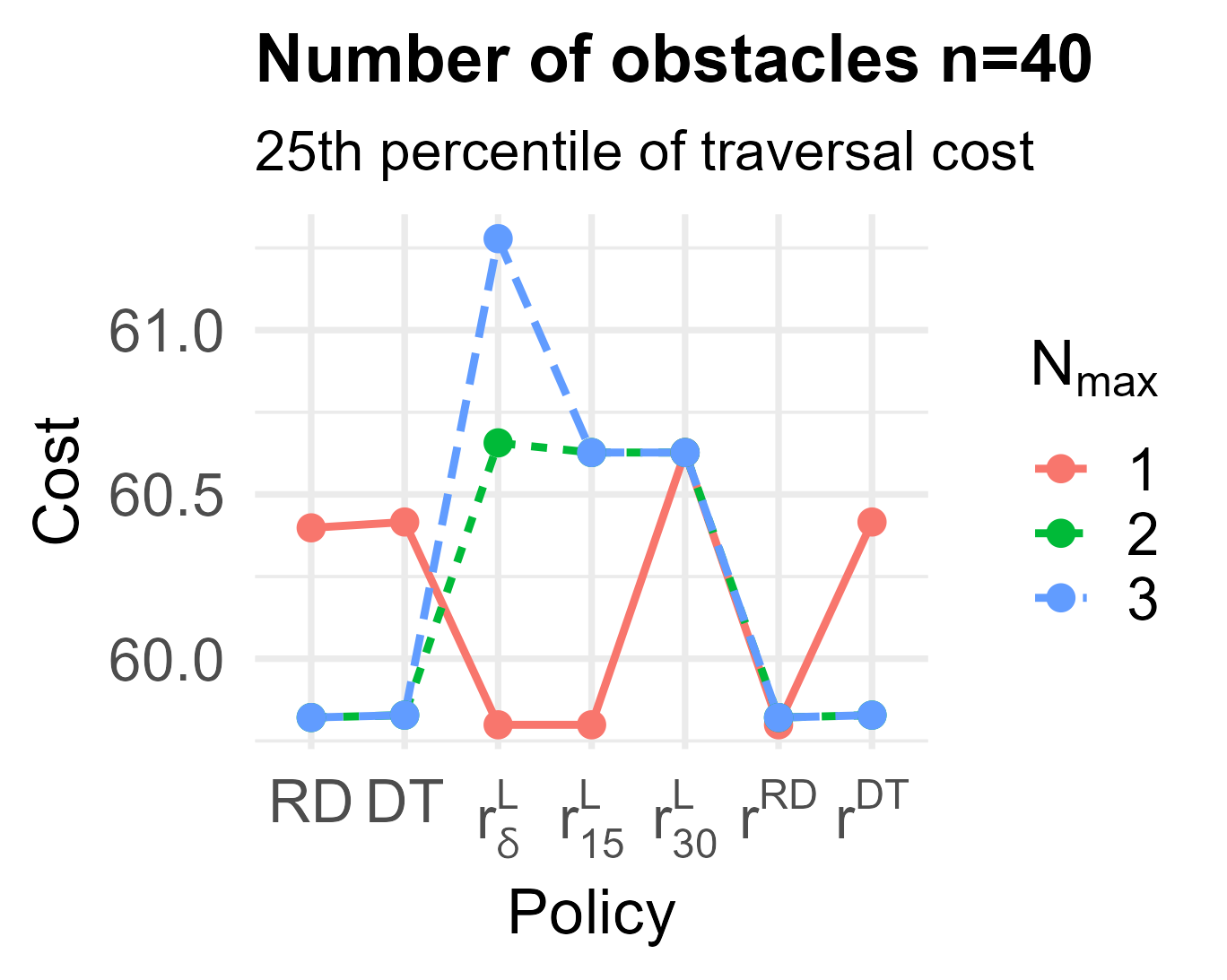}
		\end{subfigure} &
		\begin{subfigure}[b]{0.33\textwidth}
			\centering
			\includegraphics[width=\textwidth]{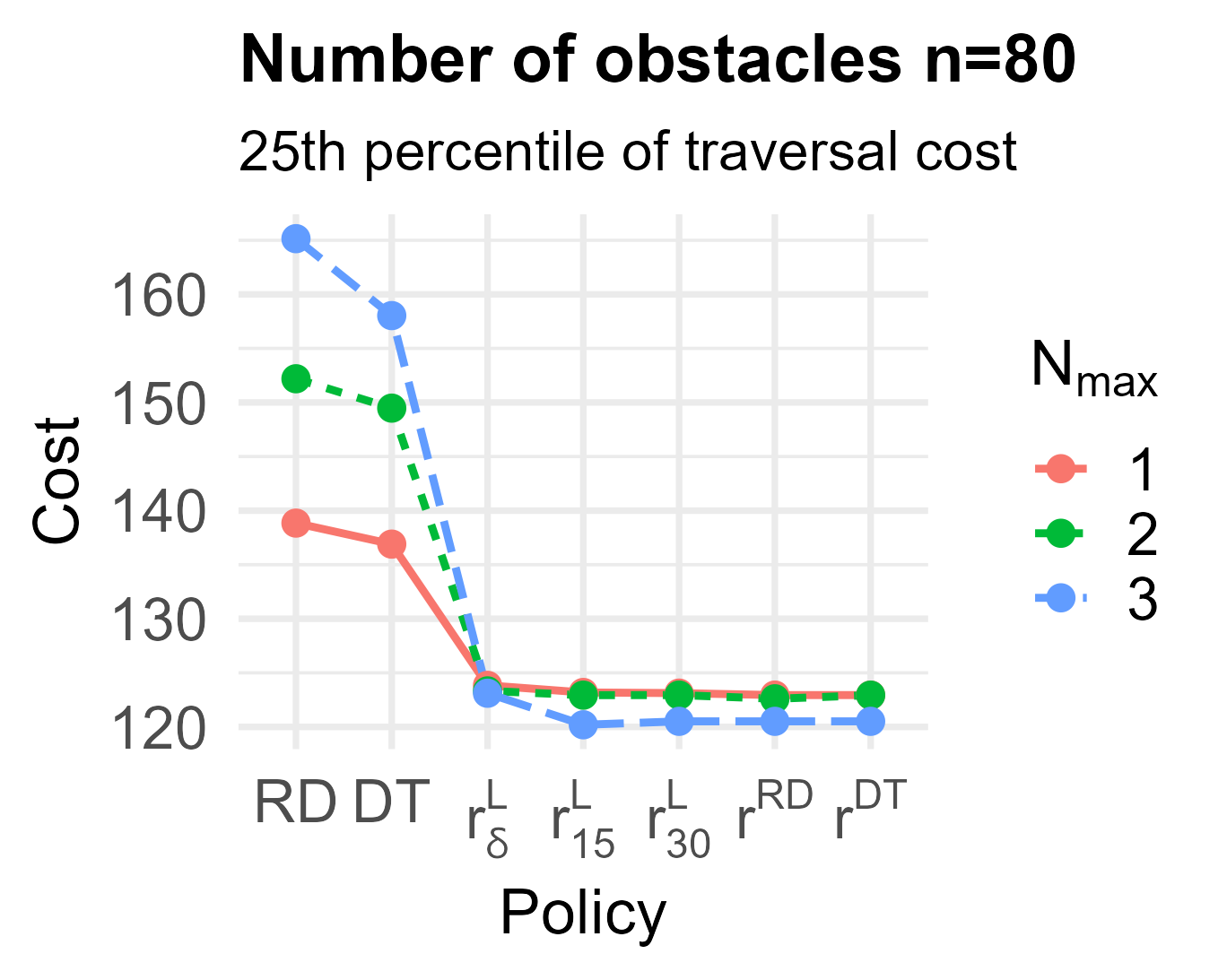}
		\end{subfigure} \\
		\begin{subfigure}[b]{0.33\textwidth}
			\centering
			\includegraphics[width=\textwidth]{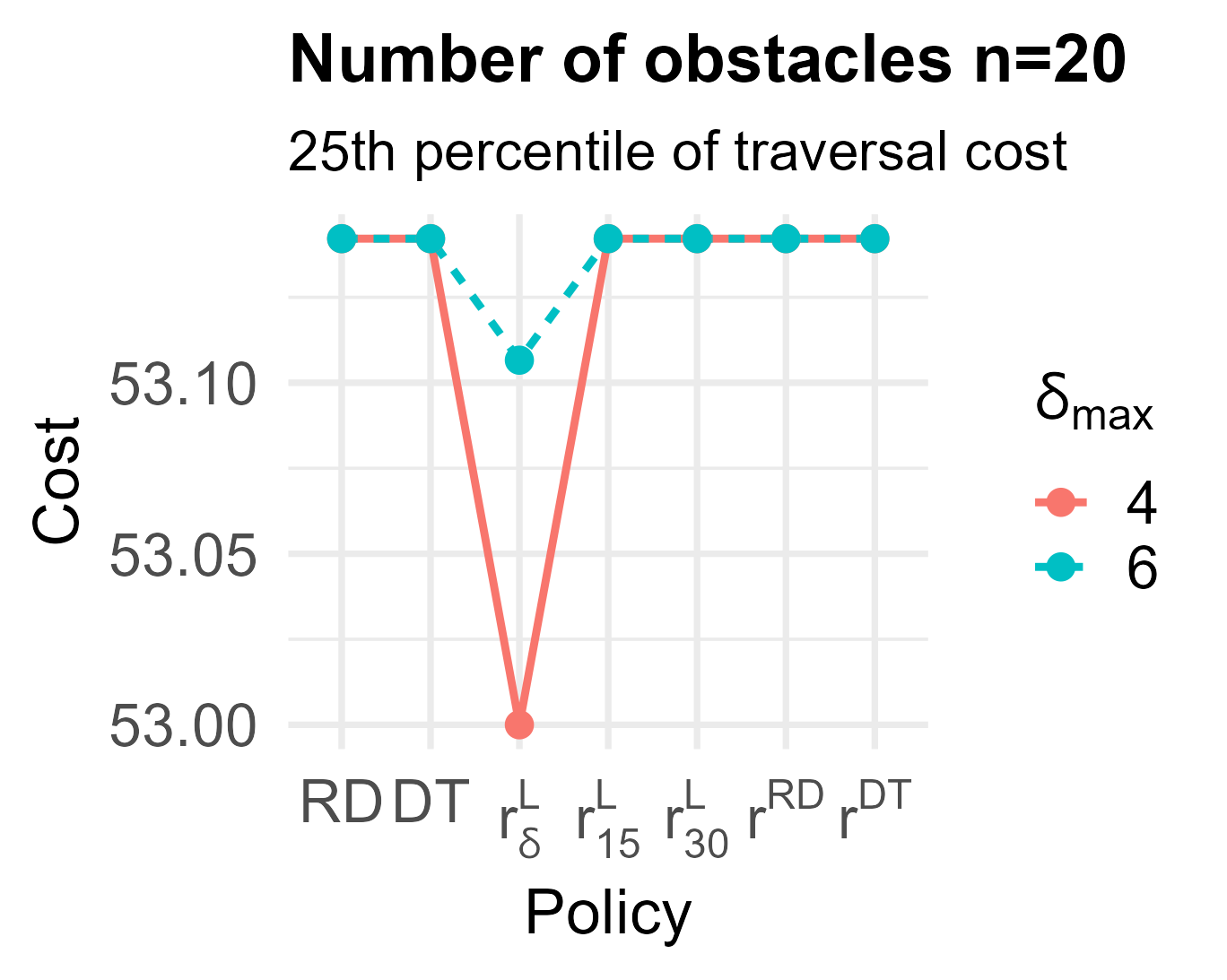}
		\end{subfigure} &
		\begin{subfigure}[b]{0.33\textwidth}
			\centering
			\includegraphics[width=\textwidth]{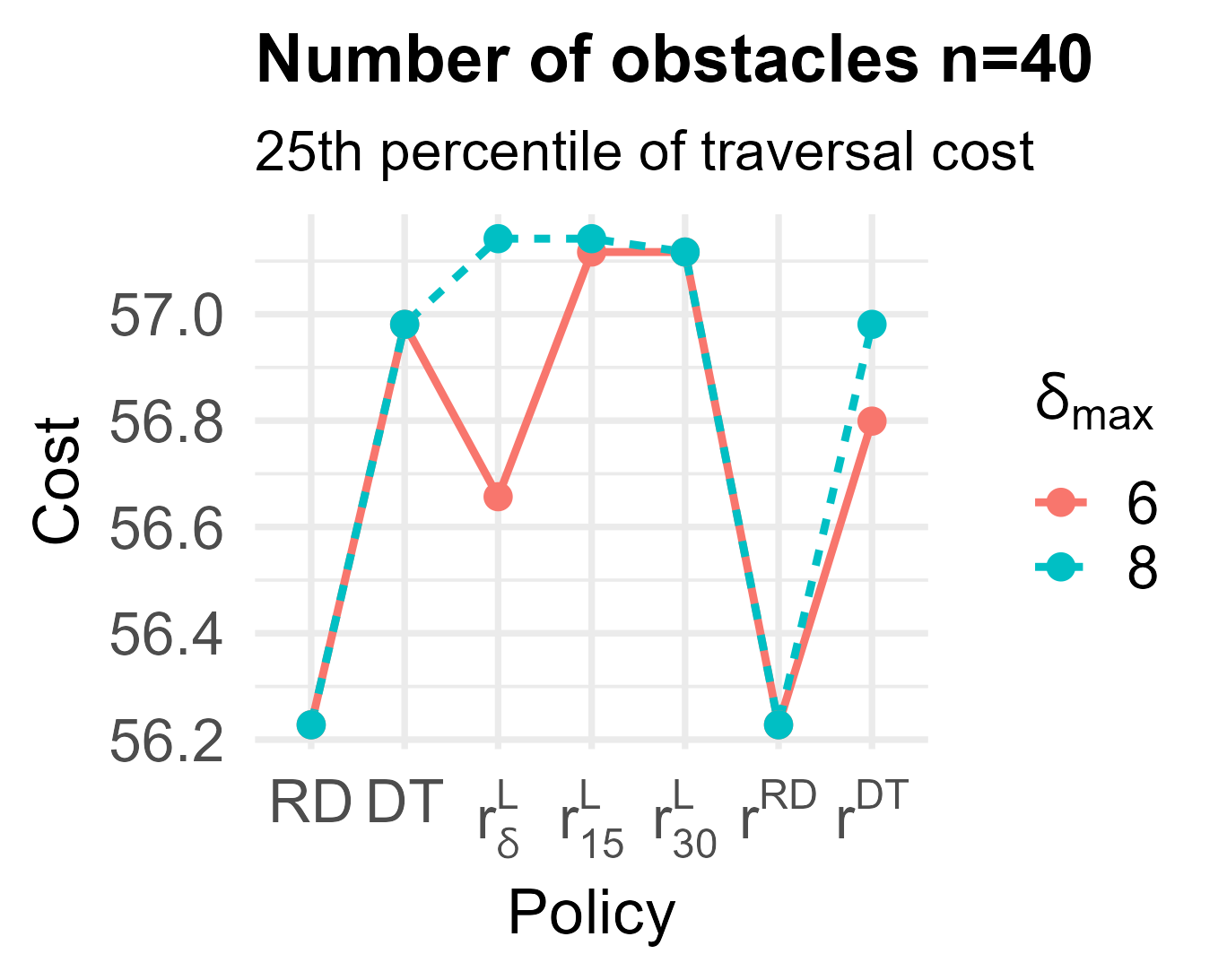}
		\end{subfigure} &
		\begin{subfigure}[b]{0.33\textwidth}
			\centering
			\includegraphics[width=\textwidth]{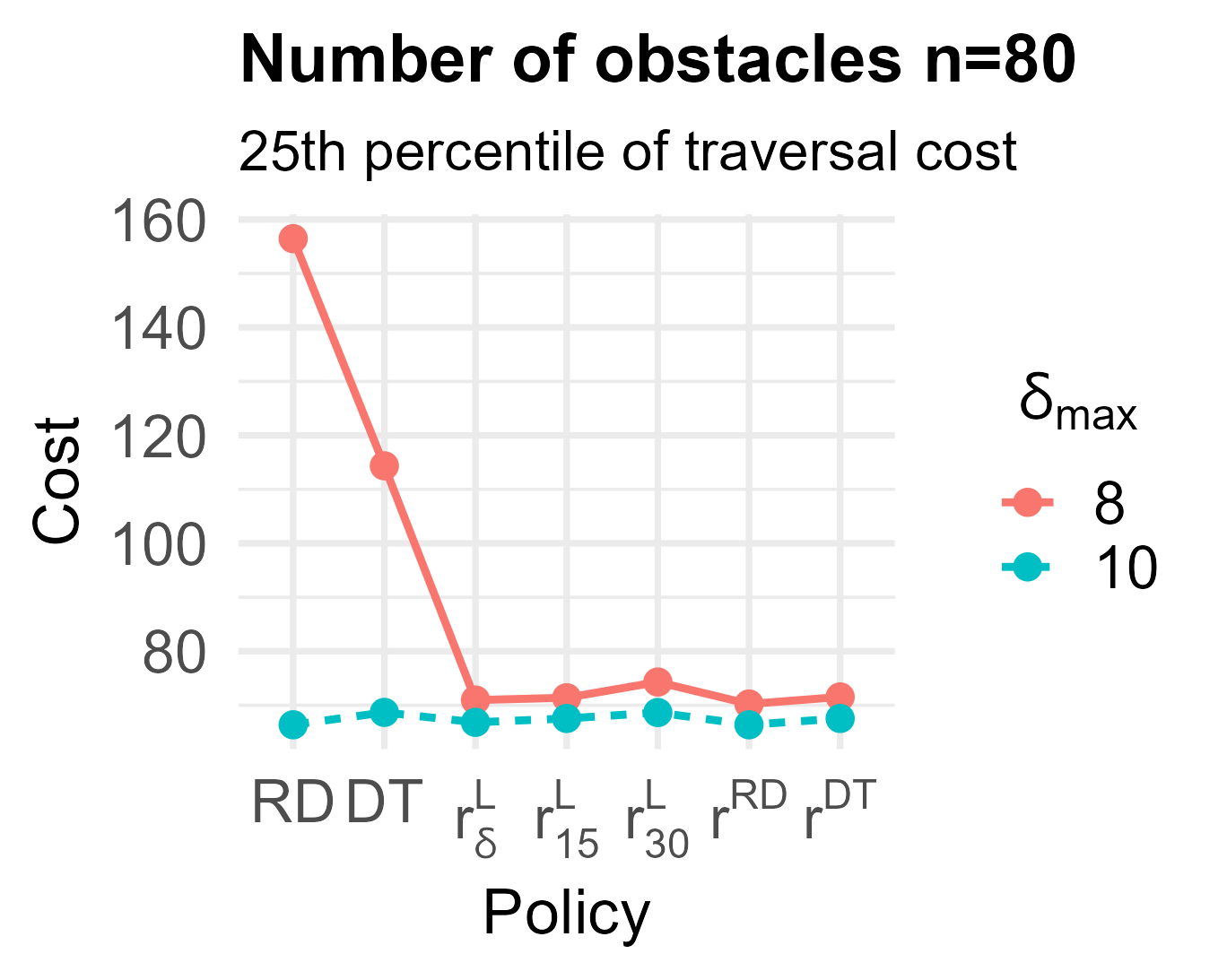}
		\end{subfigure}
	\end{tabular}
	\caption{The $25^{th}$ percentile of traversal costs across different environments using two 
		greedy policies, five RCDP policies with different risk functions}
	\label{sup-fig:quantile25}
\end{figure}

\begin{figure}[H]
	\centering
	\begin{tabular}{ccc}
		\begin{subfigure}[b]{0.33\textwidth}
			\centering
			\includegraphics[width=\textwidth]{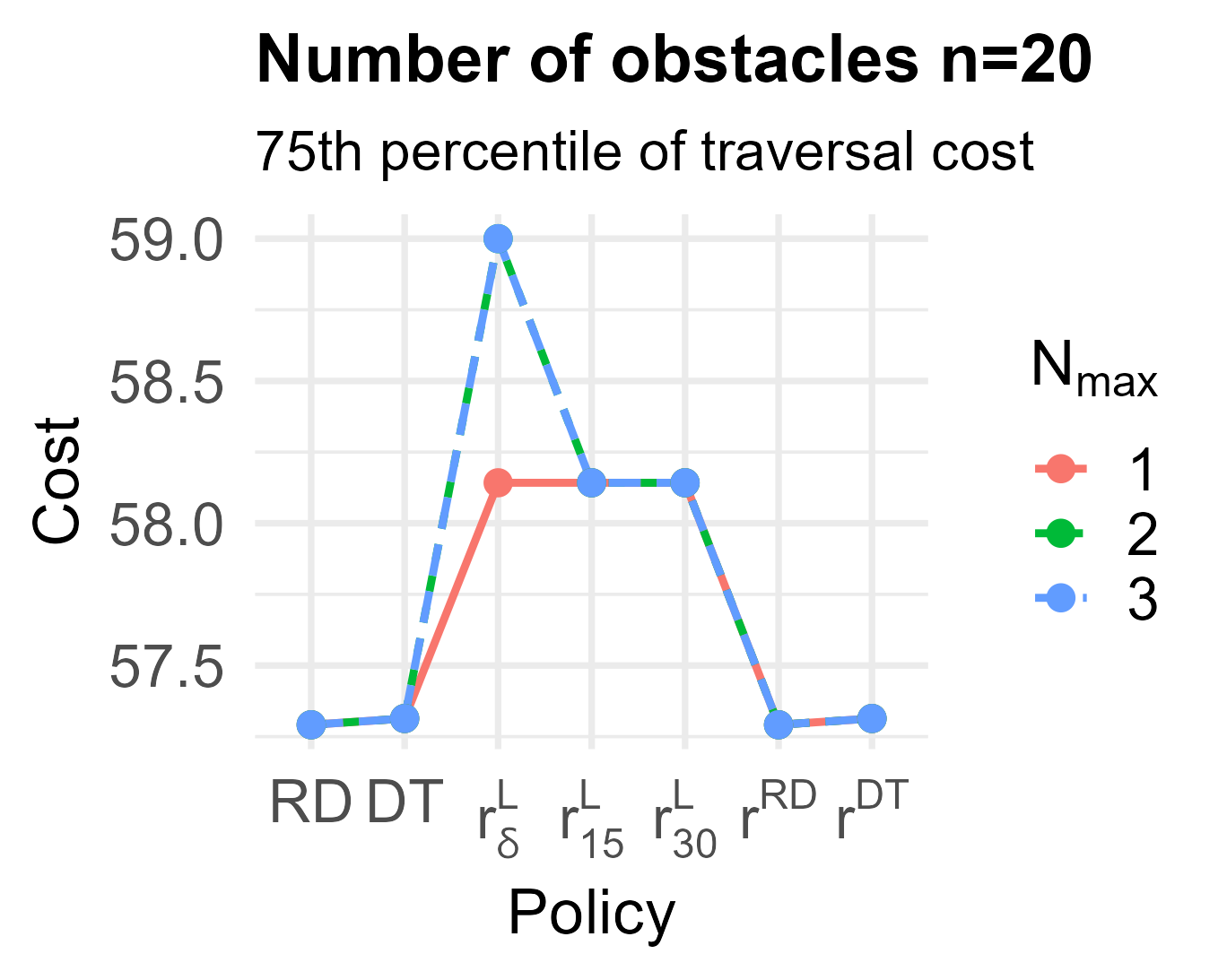}
		\end{subfigure} &
		\begin{subfigure}[b]{0.33\textwidth}
			\centering
			\includegraphics[width=\textwidth]{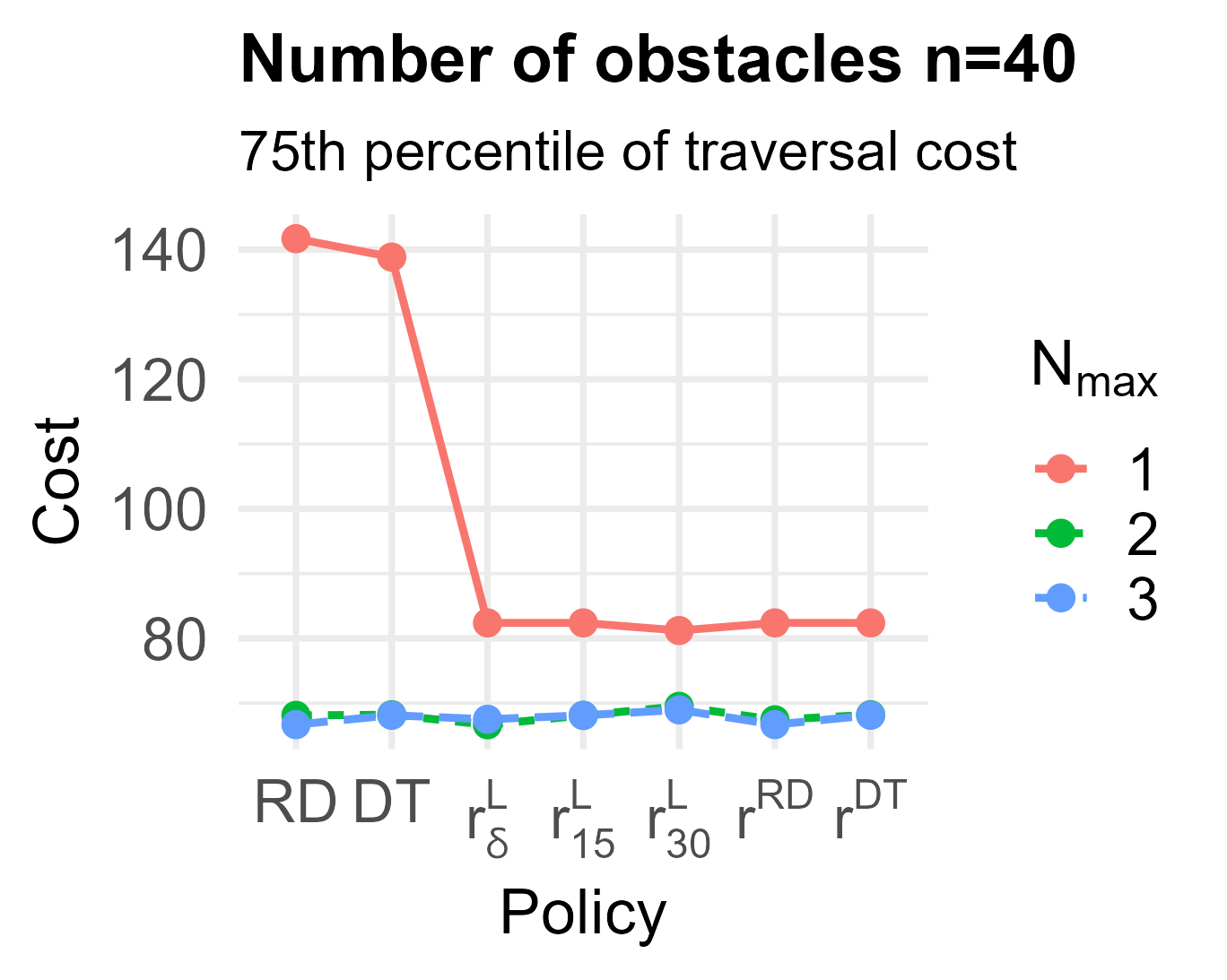}
		\end{subfigure} &
		\begin{subfigure}[b]{0.33\textwidth}
			\centering
			\includegraphics[width=\textwidth]{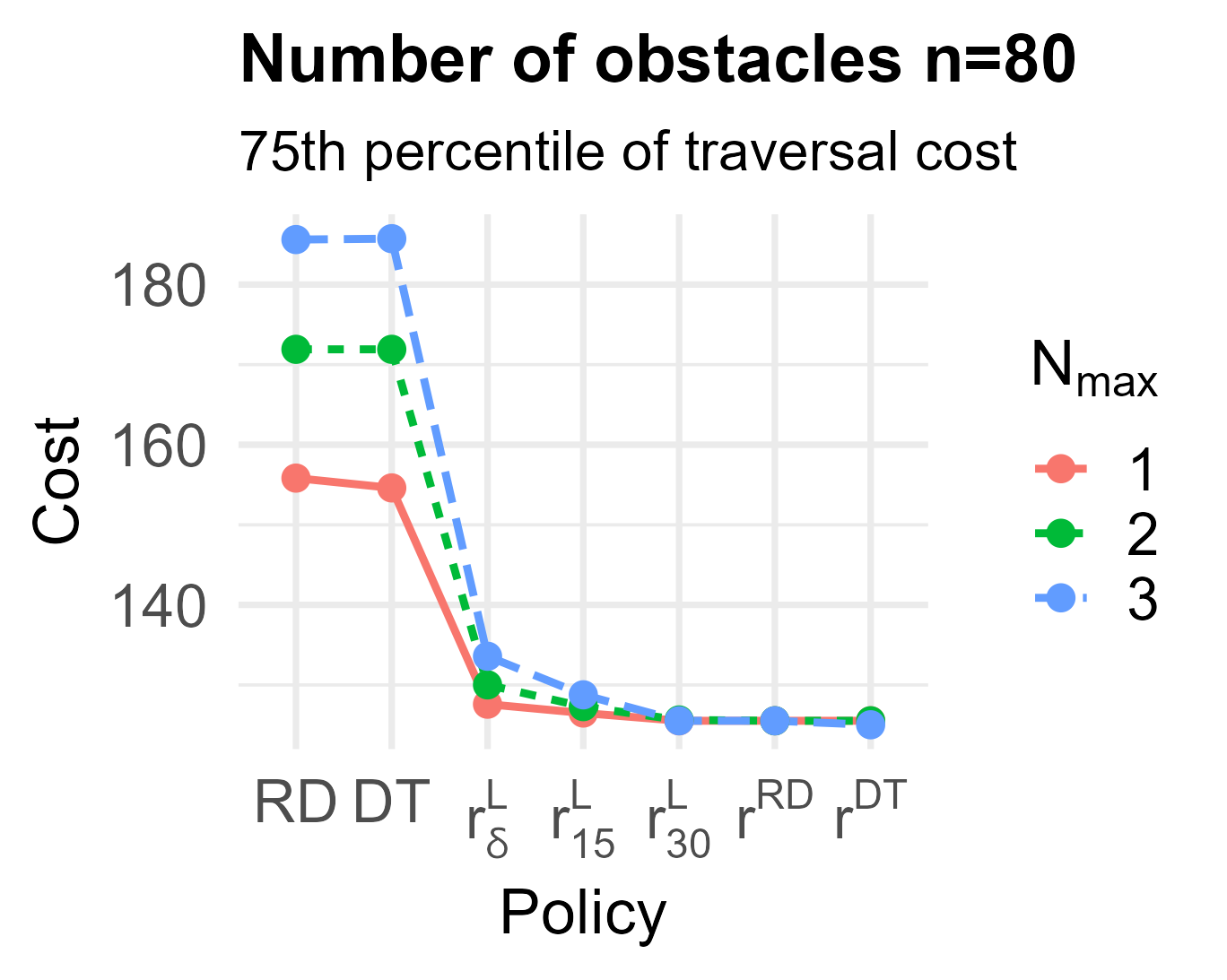}
		\end{subfigure} \\
		\begin{subfigure}[b]{0.33\textwidth}
			\centering
			\includegraphics[width=\textwidth]{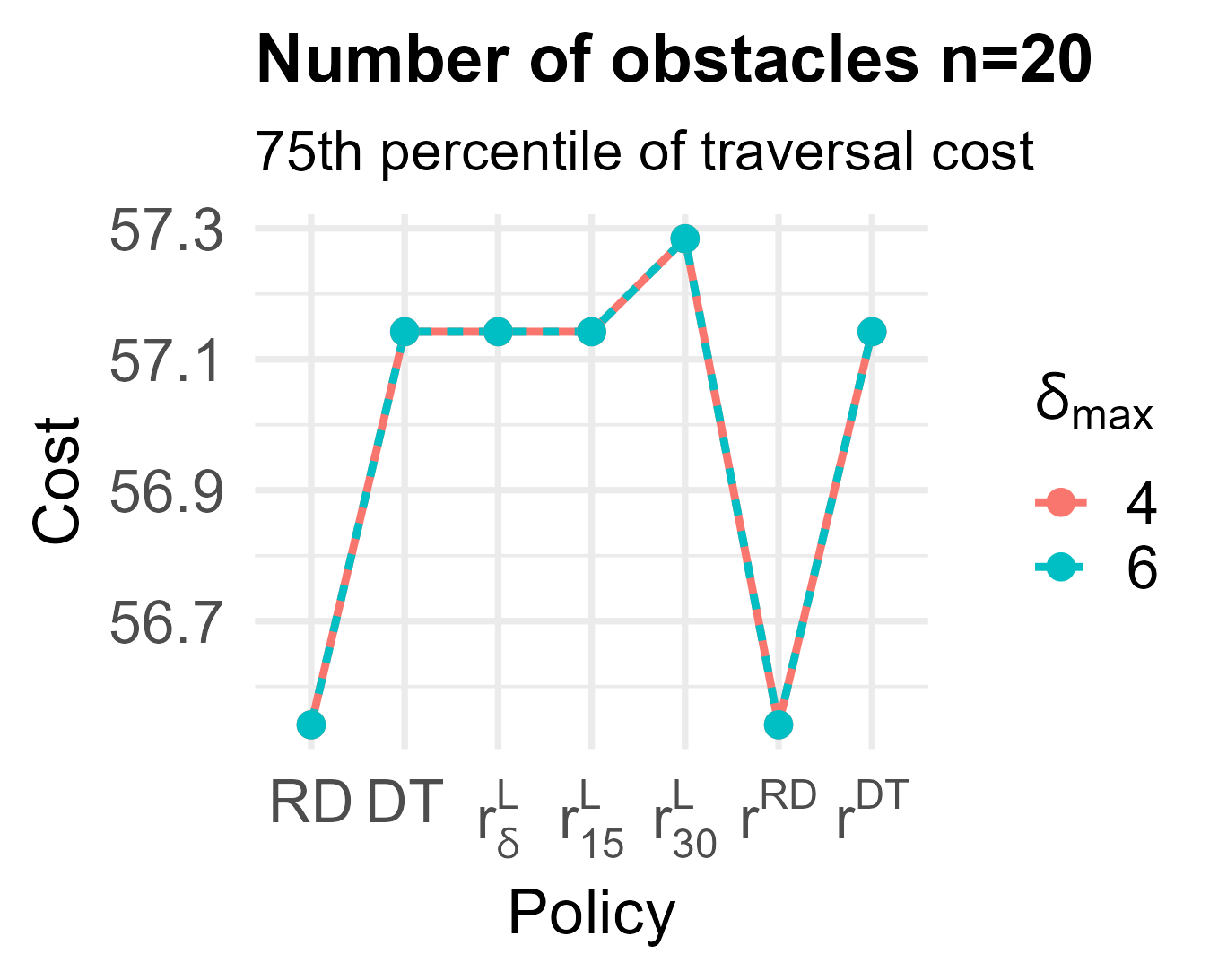}
		\end{subfigure} &
		\begin{subfigure}[b]{0.33\textwidth}
			\centering
			\includegraphics[width=\textwidth]{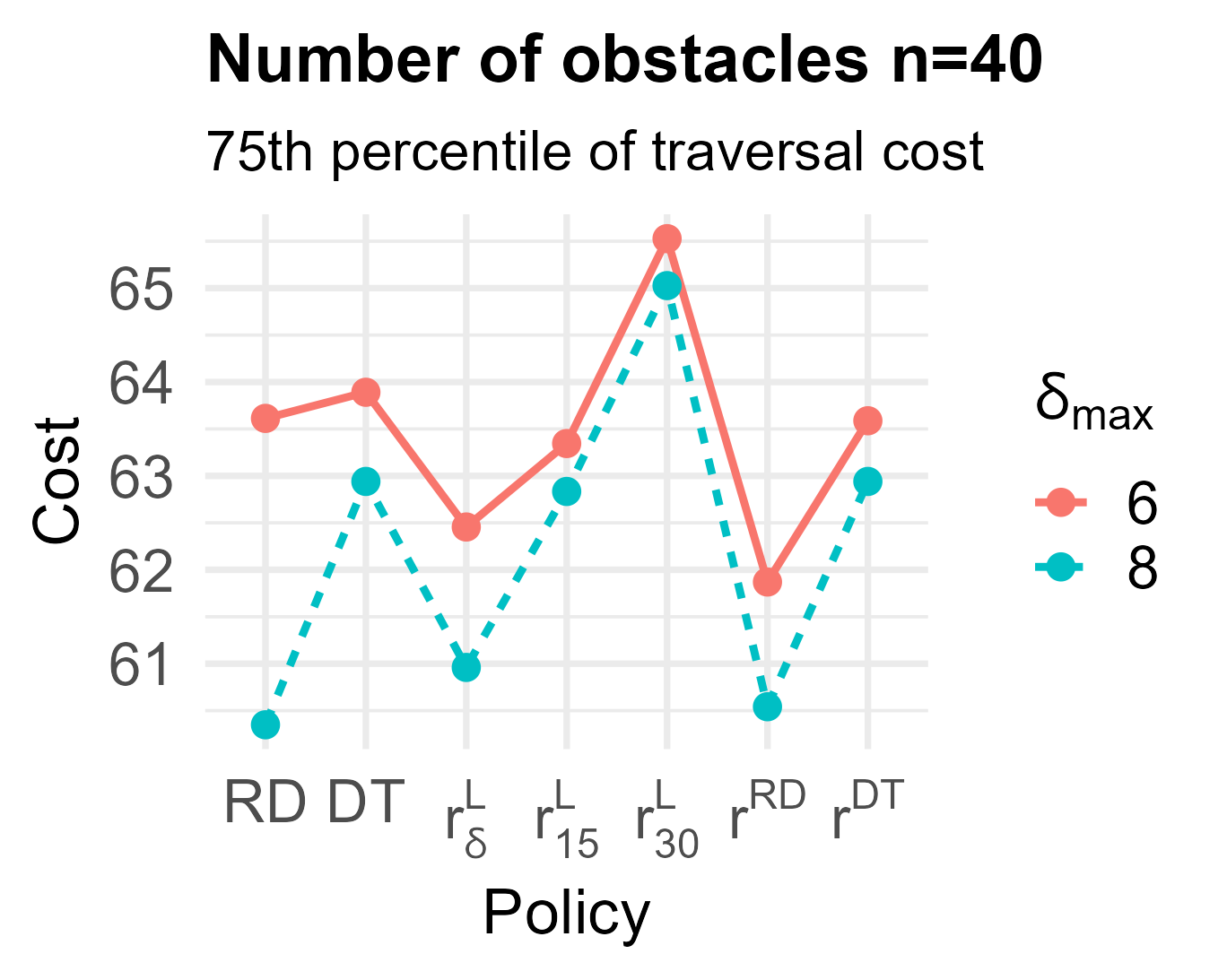}
		\end{subfigure} &
		\begin{subfigure}[b]{0.33\textwidth}
			\centering
			\includegraphics[width=\textwidth]{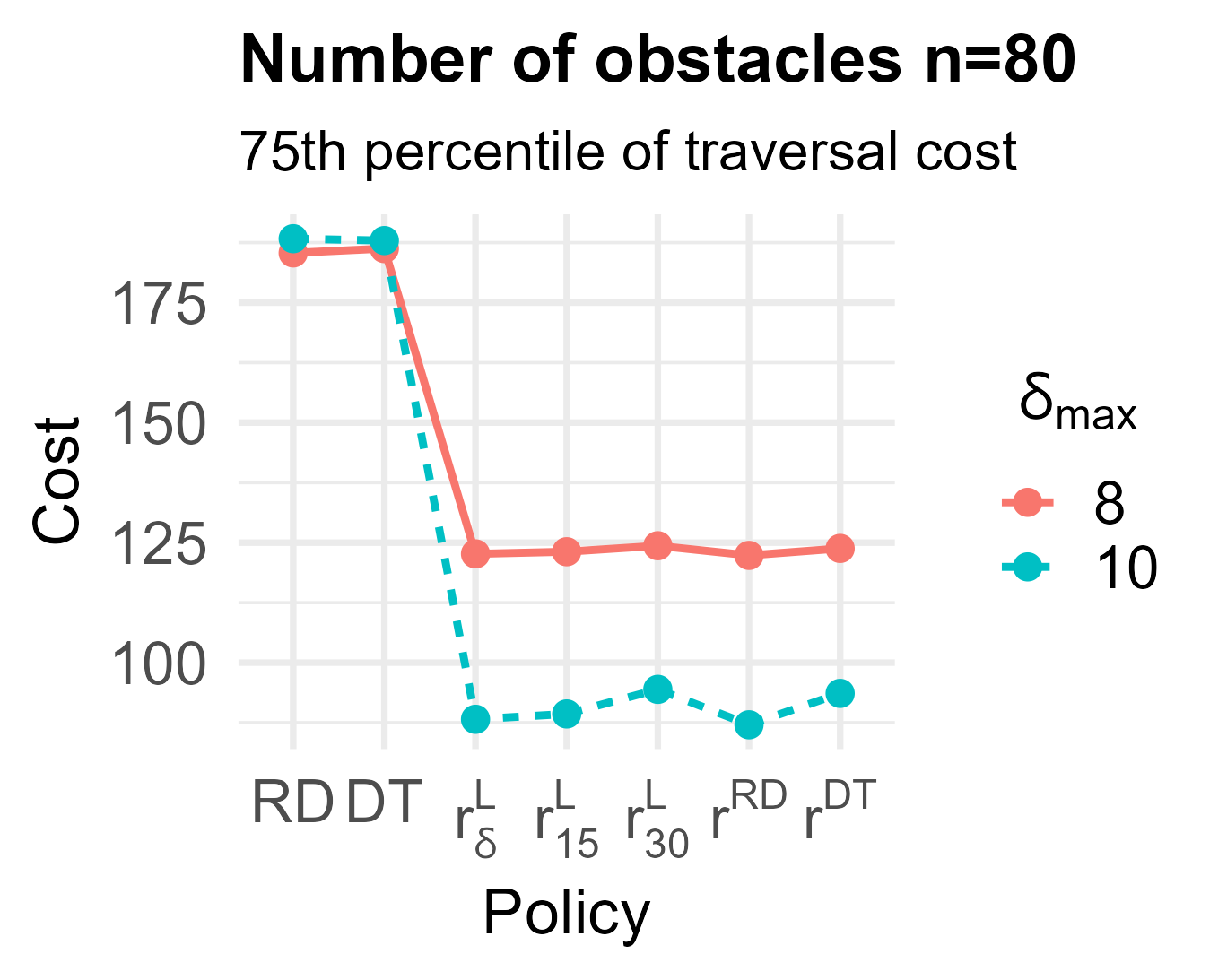}
		\end{subfigure}
	\end{tabular}
	\caption{The $75^{th}$ percentile of traversal costs across different environments using two 
		greedy policies, five RCDP policies with different risk functions}
	\label{sup-fig:quantile75}
\end{figure}

\noindent
In summary, RCDP policies consistently deliver more effective traversal performance than their greedy counterparts, particularly in complex and uncertain environments. Their advantage stems from a principled use of risk-aware planning under budget constraints, resulting in solutions that are not only cost-efficient but also stable across varying obstacle densities and risk profiles.

\begin{sidewaystable}
	\centering
	\resizebox{\textwidth}{!}{%
		\begin{tabular}{c}
			\begin{tabular}{|cc|cccc|ccccccccccc|}
				\hline
				\multicolumn{2}{|c|}{\multirow{2}{*}{Setting}} & \multicolumn{4}{c|}{Greedy Policies} & \multicolumn{10}{c}{RCDP Policies} & \\ \cline{3-17}
				\multicolumn{2}{|c|}{} & \multicolumn{2}{c}{RD} & \multicolumn{2}{c|}{DT} & \multicolumn{2}{c}{$r^L_\delta$} & \multicolumn{2}{c}{$r^L_{15}$} & \multicolumn{2}{c}{$r^L_{30}$} & \multicolumn{2}{c}{$r^{RD}$} & \multicolumn{2}{c|}{$r^{DT}$} & Benchmark (Optimal) \\ \cline{1-2}
				$N_{\max}$ & $n$ & $\bar{\mathcal{C}}_p$ & $SD(\mathcal{C}_p)$ & $\bar{\mathcal{C}}_p$ & $SD(\mathcal{C}_p)$ & $\bar{\mathcal{C}}_p$ & $SD(\mathcal{C}_p)$ & $\bar{\mathcal{C}}_p$ & $SD(\mathcal{C}_p)$ & $\bar{\mathcal{C}}_p$ & $SD(\mathcal{C}_p)$ & $\bar{\mathcal{C}}_p$ & $SD(\mathcal{C}_p)$ & $\bar{\mathcal{C}}_p$ & \multicolumn{1}{c|}{$SD(\mathcal{C}_p)$} & $\bar{\mathcal{C}}_p$ \\ \hline
				\multirow{3}{*}{1} & 20 & 55.47 & 4.45 & 55.44 & 3.89 & 56.10 & 3.53 & 55.81 & 3.85 & 55.73 & 4.10 & \textbf{55.30} & \textbf{3.65} & 55.42 & \multicolumn{1}{c|}{3.82} & 55.14 \\
				& 40 & 99.12 & 44.20 & 95.42 & 43.82 & 73.17 & 19.32 & 73.09 & 19.16 & \textbf{72.95} & \textbf{17.32} & 72.99 & 19.09 & 73.11 & \multicolumn{1}{c|}{18.09} & 71.73 \\
				& 80 & 148.69 & 12.03 & 145.94 & 14.79 & 125.31 & 4.65 & 124.29 & 4.56 & 123.97 & 4.44 & \textbf{123.62} & \textbf{4.22} & 123.68 & \multicolumn{1}{c|}{4.23} & 123.62 \\ \hline
				\multirow{3}{*}{2} & 20 & 55.23 & 3.46 & 55.39 & 3.78 & 56.73 & 3.86 & 55.81 & 3.79 & 55.71 & 4.06 & \textbf{55.23} & \textbf{3.46} & 55.39 & \multicolumn{1}{c|}{3.78} & 55.05 \\
				& 40 & 71.72 & 27.92 & 70.68 & 28.01 & 67.51 & 19.64 & \textbf{64.42} & \textbf{5.97} & 64.96 & 6.54 & 64.61 & 11.77 & 65.16 & \multicolumn{1}{c|}{12.15} & 62.91 \\
				& 80 & 160.58 & 22.31 & 156.91 & 25.18 & 123.91 & 12.79 & 121.86 & 12.20 & 121.67 & 11.55 & \textbf{120.89} & \textbf{11.75} & 121.20 & \multicolumn{1}{c|}{11.26} & 120.74 \\ \hline
				\multirow{3}{*}{3} & 20 & 55.23 & \textbf{3.46} & 55.39 & 3.78 & 57.12 & 4.39 & 55.80 & 3.77 & 55.71 & 4.06 & \textbf{55.23} & \textbf{3.46} & 55.39 & \multicolumn{1}{c|}{3.78} & 55.05 \\
				& 40 & 63.39 & 5.23 & 63.92 & 6.03 & 67.21 & 17.36 & 64.55 & 5.53 & 64.69 & 5.66 & \textbf{63.39} & \textbf{5.23} & 63.74 & \multicolumn{1}{c|}{5.48} & 62.48 \\
				& 80 & 163.15 & 38.08 & 160.67 & 39.29 & 123.62 & 24.91 & 118.83 & 22.12 & 116.96 & 19.11 & 117.15 & 20.15 & \textbf{116.36} & \multicolumn{1}{c|}{\textbf{17.94}} & 114.99 \\ \hline
				\multicolumn{17}{c}{Results for $N_{\max}$ values of 1, 2, and 3}
			\end{tabular} \\
			
			\\
			\\
			
			\begin{tabular}{|cc|cccc|ccccccccccc|}
				\hline
				\multicolumn{2}{|c|}{\multirow{2}{*}{Setting}} & \multicolumn{4}{c|}{Greedy Policies} & \multicolumn{10}{c}{RCDP Policies} & \\ \cline{3-17}
				\multicolumn{2}{|c|}{} & \multicolumn{2}{c}{RD} & \multicolumn{2}{c|}{DT} & \multicolumn{2}{c}{$r^L_\delta$} & \multicolumn{2}{c}{$r^L_{15}$} & \multicolumn{2}{c}{$r^L_{30}$} & \multicolumn{2}{c}{$r^{RD}$} & \multicolumn{2}{c|}{$r^{DT}$} & Benchmark (Optimal) \\ \cline{1-2}
				$\delta_{\max}$ & $n$ & $\bar{\mathcal{C}}_p$ & $SD(\mathcal{C}_p)$ & $\bar{\mathcal{C}}_p$ & $SD(\mathcal{C}_p)$ & $\bar{\mathcal{C}}_p$ & $SD(\mathcal{C}_p)$ & $\bar{\mathcal{C}}_p$ & $SD(\mathcal{C}_p)$ & $\bar{\mathcal{C}}_p$ & $SD(\mathcal{C}_p)$ & $\bar{\mathcal{C}}_p$ & $SD(\mathcal{C}_p)$ & $\bar{\mathcal{C}}_p$ & \multicolumn{1}{c|}{$SD(\mathcal{C}_p)$} & $\bar{\mathcal{C}}_p$ \\ \hline
				\multirow{1}{*}{4} & 20 & 54.87 & 4.27 & 55.06 & 4.11 & 55.62 & 6.81 & 54.94 & 3.10 & 55.27 & 3.35 & \textbf{54.52} & \textbf{2.71} & 54.87 & \multicolumn{1}{c|}{2.93} & 54.47 \\ \hline
				\multirow{2}{*}{6} & 20 & 54.74 & 4.11 & 55.06 & 4.20 & 55.32 & 3.35 & 54.86 & 2.92 & 55.72 & 3.35 & \textbf{54.51} & \textbf{2.69} & 54.87 & \multicolumn{1}{c|}{2.93} & 54.44 \\
				& 40 & 70.54 & 29.93 & 65.44 & 23.30 & 61.42 & 12.31 & 61.66 & 11.54 & 61.95 & 6.60 & 60.99 & 12.26 & \textbf{60.82} & \multicolumn{1}{c|}{\textbf{5.91}} & 59.67 \\ \hline
				\multirow{2}{*}{8} & 40 & 61.25 & 15.45 & 60.24 & 4.33 & 60.47 & 11.21 & 60.33 & 4.56 & 61.37 & 5.49 & \textbf{59.09} & \textbf{4.86} & 60.18 & \multicolumn{1}{c|}{4.58} & 58.49 \\
				& 80 & 156.01 & 42.77 & 148.83 & 48.79 & \textbf{97.21} & \textbf{27.41} & 98.66 & 27.80 & 98.97 & 27.48 & 97.31 & 27.12 & 99.10 & \multicolumn{1}{c|}{26.86} & 95.22 \\ \hline
				\multirow{1}{*}{10} & 80 & 139.81 & 56.98 & 128.40 & 58.11 & 86.44 & 31.86 & 86.31 & 30.53 & 86.31 & 25.55 & 84.72 & 30.32 & \textbf{84.56} & \multicolumn{1}{c|}{\textbf{24.82}} & 80.51 \\ \hline
				\multicolumn{17}{c}{Results for $\delta_{\max}$ values of 4, 6, 8, and 10}
			\end{tabular}
		\end{tabular}
	}
	\caption{The average traversal costs $\bar{\mathcal{C}}_p$ and 
		standard deviation $SD(\mathcal{C}_p)$ of two greedy policies, 
		five RCDP policies with different risk functions, and the benchmark policy.}
	\label{sup-tab:comparison_sevenalgs}
\end{sidewaystable}

\begin{table}[H]
	\centering
	\begin{subtable}{\textwidth}
		\centering
		\resizebox{10cm}{!}{%
			\begin{tabular}{@{}cc|cc|cccccc@{}}
				\toprule
				\multicolumn{2}{c|}{Setting} & \multicolumn{2}{c|}{Greedy Policies} & \multicolumn{6}{c}{RCDP Policies}     \\ \midrule
				$N_{\max}$ & $n$ & RD & DT   & $r^L_\delta$ & $r^L_{15}$ & $r^L_{30}$ & $r^{RD}$ & \multicolumn{1}{c|}{$r^{DT}$} & Benchmark (Optimal) \\ \midrule
				\multirow{3}{*}{1} & 20   & 0.26  & 0.20 & 0.75 & 0.54 & 0.35 & 0.24  & \multicolumn{1}{c|}{0.19}  & 0.32 \\
				& 40   & 1.31  & 1.26 & 0.93 & 0.86 & 0.78 & 0.77  & \multicolumn{1}{c|}{0.72}  & 0.82 \\
				& 80   & 2.00  & 1.87 & 0.73 & 0.38 & 0.18 & 0.07  & \multicolumn{1}{c|}{0.03}  & 0.09 \\ \midrule
				\multirow{3}{*}{2} & 20   & 0.26  & 0.20 & 1.09 & 0.62 & 0.38 & 0.26  & \multicolumn{1}{c|}{0.20}  & 0.34 \\
				& 40   & 1.42  & 1.35 & 1.70 & 1.57 & 1.37 & 1.30  & \multicolumn{1}{c|}{1.26}  & 1.33 \\
				& 80   & 2.96  & 2.75 & 1.60 & 0.78 & 0.37 & 0.24  & \multicolumn{1}{c|}{0.15}  & 0.32 \\ \midrule
				\multirow{3}{*}{3} & 20   & 0.26  & 0.20 & 1.21 & 0.63 & 0.28 & 0.26  & \multicolumn{1}{c|}{0.20}  & 0.34 \\
				& 40   & 1.42  & 1.36 & 2.25 & 1.89 & 1.53 & 1.42  & \multicolumn{1}{c|}{1.34}  & 1.43 \\
				& 80   & 3.82  & 3.54 & 2.52 & 1.47 & 0.80 & 0.74  & \multicolumn{1}{c|}{0.54}  & 0.70 \\ \bottomrule
			\end{tabular}%
		}
		\caption{Results for $N_{\max}$ values of 1, 2 and 3}
	\end{subtable}
	
	\vspace{0.5cm} 
	
	\begin{subtable}{\textwidth}
		\centering
		\resizebox{10cm}{!}{%
			\begin{tabular}{@{}cc|cc|cccccc@{}}
				\toprule
				\multicolumn{2}{c|}{Setting} & \multicolumn{2}{c|}{Greedy Policies} & \multicolumn{6}{c}{RCDP Policies}     \\ \midrule
				$\delta_{\max}$ & $n$ & RD & DT   & $r^L_\delta$ & $r^L_{15}$ & $r^L_{30}$ & $r^{RD}$ & \multicolumn{1}{c|}{$r^{DT}$} & Benchmark (Optimal) \\ \midrule
				\multirow{1}{*}{4} & 20   & 1.14  & 1.16 & 2.92 & 1.66 & 1.04 & 1.10  & \multicolumn{1}{c|}{0.84}  & 1.54 \\ \midrule
				\multirow{2}{*}{6} & 20   & 1.16  & 0.84 & 3.38 & 1.80 & 1.12 & 1.14  & \multicolumn{1}{c|}{0.85}  & 1.66 \\
				& 40   & 4.38  & 3.74 & 5.04 & 3.78 & 3.58 & 3.72  & \multicolumn{1}{c|}{3.30}  & 4.04 \\ \midrule
				\multirow{2}{*}{8} & 40   & 4.60  & 3.62 & 6.46 & 4.54 & 3.78 & 4.50  & \multicolumn{1}{c|}{3.60}  & 5.04 \\
				& 80   & 8.82  & 8.72 & 7.8 & 5.46 & 4.26 & 5.78  & \multicolumn{1}{c|}{4.42}  & 6.50 \\ \midrule
				\multirow{1}{*}{10} & 80   & 9.56  & 9.38 & 9.72 & 8.72 & 7.40 & 8.92  & \multicolumn{1}{c|}{8.78}  & 9.00 \\ \bottomrule
			\end{tabular}%
		}
		\caption{Results for $\delta_{\max}$ values of 4, 6, 8 and 10}
	\end{subtable}
	
	\caption{The average number of intersected obstacles $\bar{N_p}$ and 
		the average resource needed $\bar{\delta_p}$ on the traversal path $p$ generated two greedy policies, 
		five RCDP policies with different risk functions, and the benchmark policy}
	\label{sup-tab:comparison_sevenalgs2}
\end{table}
RCDP policies also intersect fewer true obstacles and require lower disambiguation costs on average. 
Greedy policies often violate budget constraints, particularly in dense environments,
as evident from consistently higher average $\bar{N}_p$ and $\bar{\delta}_p$ values.

\subsection{Efficacy of the Bayesian Linear Undesirability Function} 
\label{sup-sec:Linear-Experiments}

To further assess the efficacy of the Bayesian linear undesirability function, 
we evaluate its performance relative to other risk models 
using simulations across varying obstacle densities, 
proportions of true obstacles, and sensor accuracies. 
We focus on how performance varies with:
(i) the number of obstacles $n$ and true obstacle ratio $\rho_T$,
(ii) the precision of sensor-derived probabilities $\pi_x$, and
(iii) the underlying spatial distribution of obstacles.

\subsubsection{Sensitivity to the Proportion of True Obstacles}
\label{sup-sec:Linear-rho}

Using the simulation setup described in Section~\ref{sec:MC-simulations},
we analyze traversal performance across 
$\rho_T \in \{0, 0.1, 0.2, 0.4, 0.8, 1\}$ and for $n = 20, 40, 80$ obstacles. 
We set $\alpha_{\max}=60$ in the Bayesian risk function.

Figure~\ref{sup-fig_compare_newrisk} shows the average traversal cost under six risk functions. 
The Bayesian undesirability function consistently achieves the lowest mean cost, 
particularly as $\rho_T$ increases—highlighting 
its improved risk discrimination and path planning under elevated uncertainty.

\begin{figure} [H]
	\centering
	\begin{tabular}{c}
		\begin{subfigure}[b]{0.333\textwidth}
			\centering
			\includegraphics[scale=0.333]{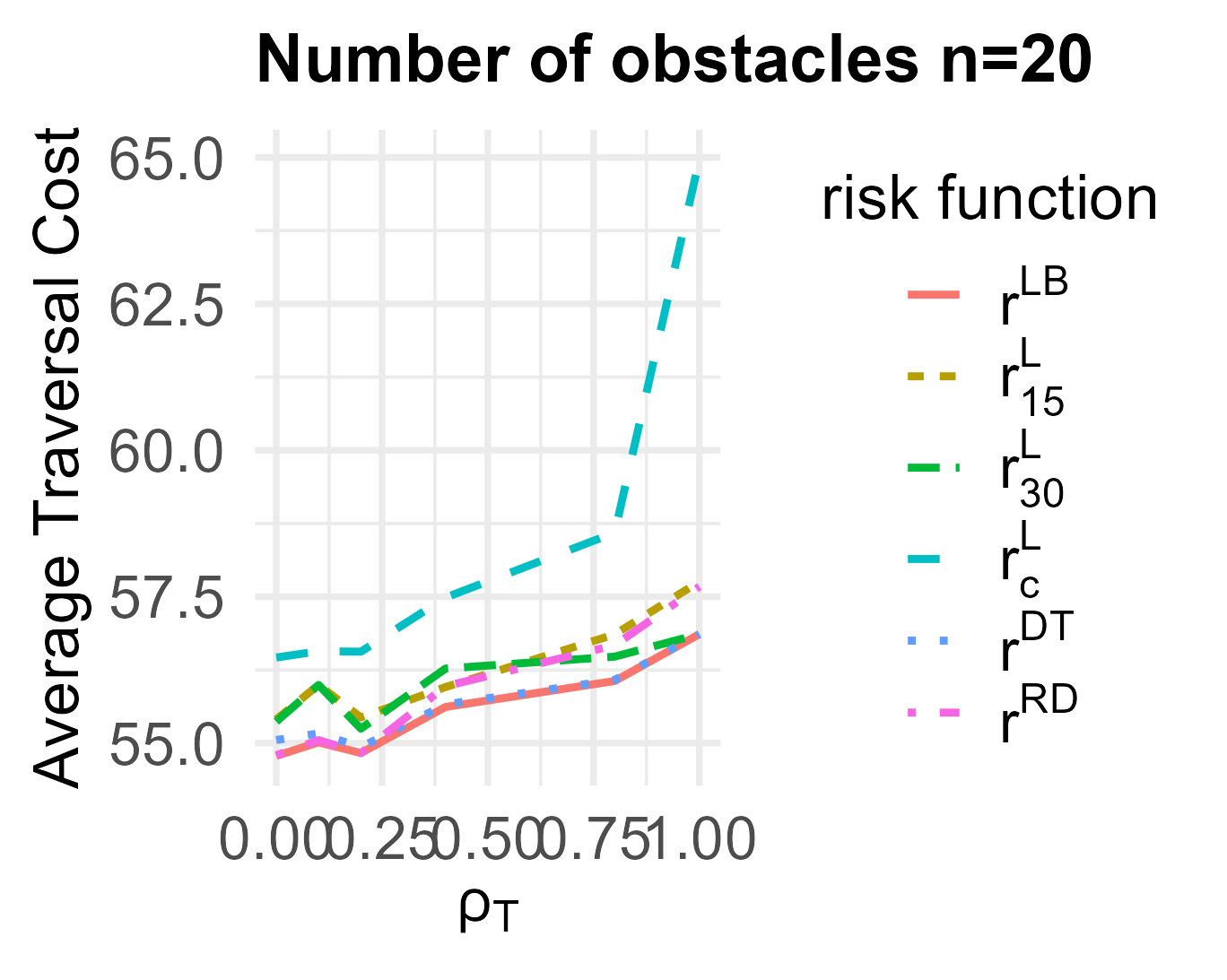}  
			\caption{}
		\end{subfigure}
		\begin{subfigure}[b]{0.333\textwidth}
			\centering
			\includegraphics[scale=0.333]{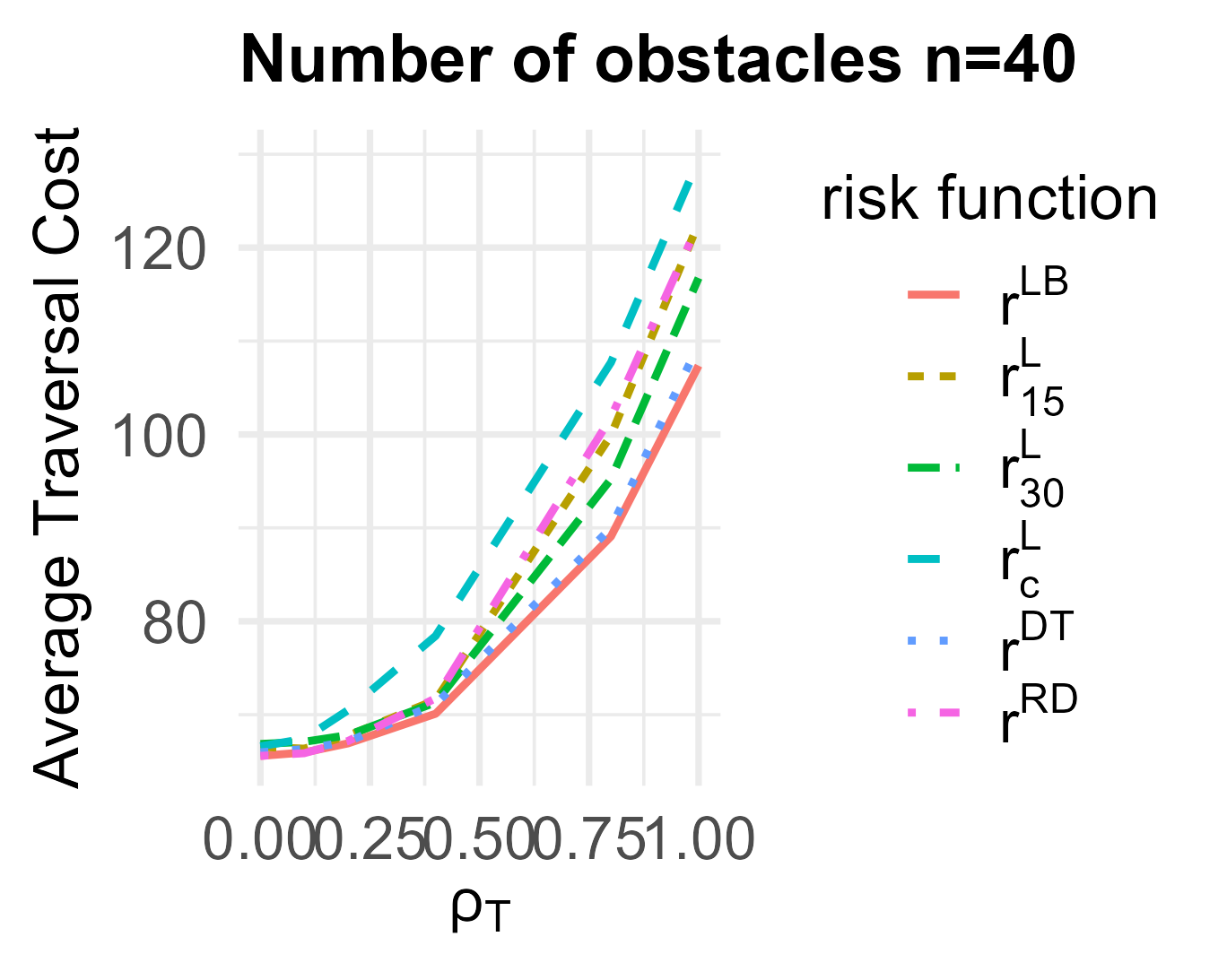}  
			\caption{}
		\end{subfigure}
		\begin{subfigure}[b]{0.333\textwidth}
			\centering
			\includegraphics[scale=0.333]{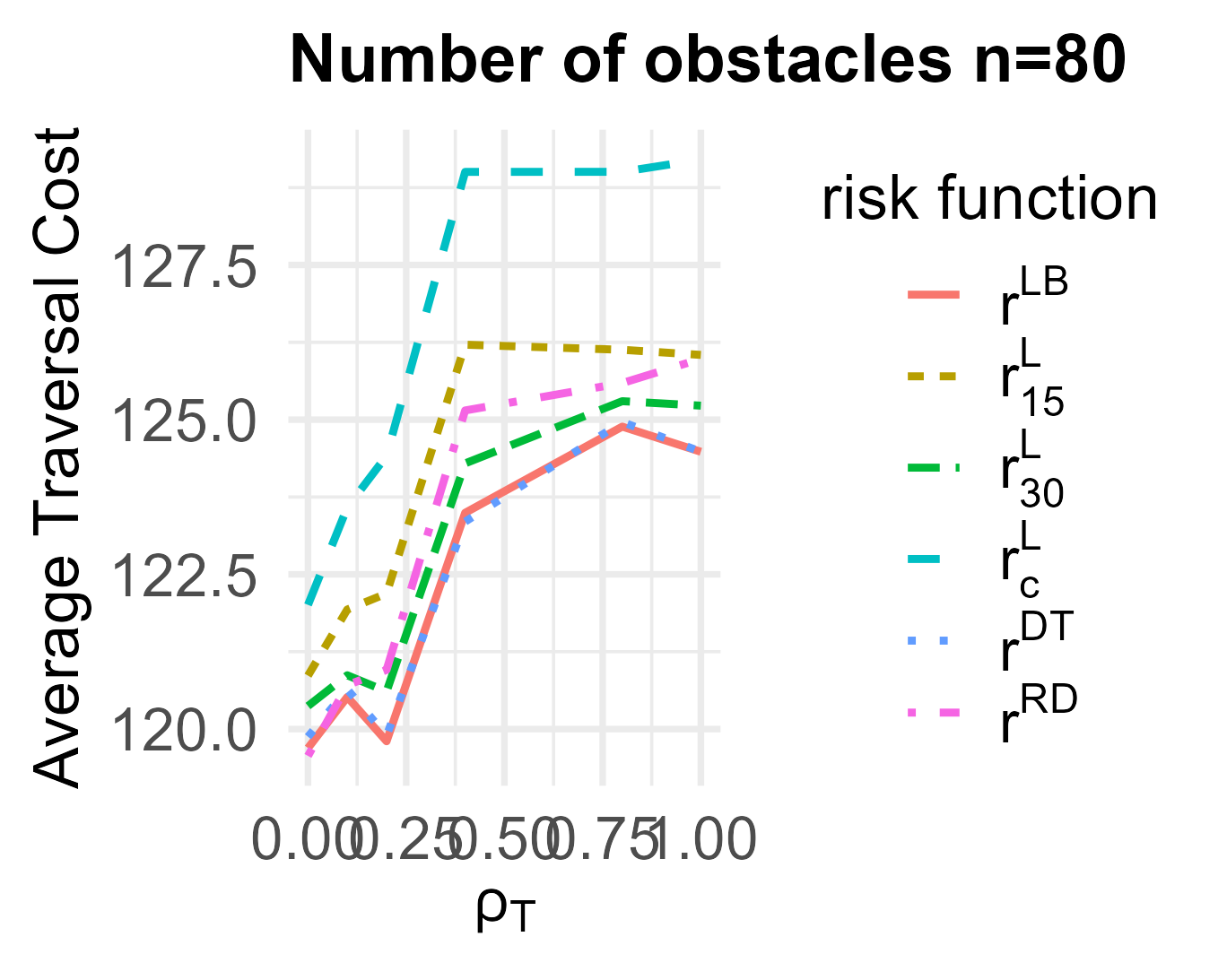}  
			\caption{}
		\end{subfigure}
	\end{tabular}
	\caption{Traversal cost versus true obstacle proportion $\rho_T$ using six different risk functions. 
		Panels: (a) $n=20$, (b) $n=40$, (c) $n=80$.}
	\label{sup-fig_compare_newrisk}
\end{figure}

Beyond mean performance, 
we also assess how often each policy achieves the minimum cost within individual replications. 
Table~\ref{sup-tab:newrisk_minprop} reports the proportion of 100 simulations 
in which the Bayesian risk function ($r^{LB}$) outperforms all others. 
It dominates consistently—especially in high $\rho_T$ settings and
under simplified constraints—confirming its robustness in both average-case and worst-case conditions.

\begin{table}[H]
	\centering
	\resizebox{12cm}{!}{%
		\begin{tabular}{@{}ccccccc@{}}
			\toprule
			\multicolumn{7}{c}{$n=20$} \\
			\multicolumn{1}{c|}{$\rho_T$} & 0 & 0.1 & 0.2 & 0.4 & 0.8 & 1 \\
			\midrule
			\multicolumn{1}{r|}{Simplified Scenario ($N_{\max}$)} & 0.98 & 0.97 & 0.97 & 0.98 & 0.98 & 1.00 \\
			\multicolumn{1}{r|}{General Scenario ($\delta_{\max}$)} & 0.81 & 0.77 & 0.80 & 0.91 & 0.92 & 1.00 \\
			\midrule
			\multicolumn{7}{c}{$n=40$} \\
			\multicolumn{1}{c|}{$\rho_T$} & 0 & 0.1 & 0.2 & 0.4 & 0.8 & 1 \\
			\midrule
			\multicolumn{1}{r|}{Simplified Scenario ($N_{\max}$)} & 0.94 & 0.90 & 0.92 & 0.87 & 0.92 & 0.91 \\
			\multicolumn{1}{r|}{General Scenario ($\delta_{\max}$)} & 0.56 & 0.58 & 0.59 & 0.58 & 0.84 & 0.93 \\
			\midrule
			\multicolumn{7}{c}{$n=80$} \\
			\multicolumn{1}{c|}{$\rho_T$} & 0 & 0.1 & 0.2 & 0.4 & 0.8 & 1 \\
			\midrule
			\multicolumn{1}{r|}{Simplified Scenario ($N_{\max}$)} & 0.96 & 0.97 & 0.98 & 0.98 & 0.99 & 1.00 \\
			\multicolumn{1}{r|}{General Scenario ($\delta_{\max}$)} & 0.62 & 0.55 & 0.57 & 0.56 & 0.84 & 1.00 \\
			\bottomrule
		\end{tabular}
	}
	\caption{Proportion of simulations in which the Bayesian linear undesirability function yields the lowest traversal cost.}
	\label{sup-tab:newrisk_minprop}
\end{table}

\subsubsection{Impact of Sensor Accuracy on Traversal Cost}
\label{sup-sec:Linear-lambda}

To further evaluate the convergence behavior of the RCDP policy
with the Bayesian undesirability function $r^{LB}$, 
we empirically assess how traversal cost evolves with increasing sensor precision. 
As established in Theorem~\ref{thm:convergence}, $r^{LB}$ approximates the benchmark cost under perfect sensing. Here, we validate this convergence across a continuum of sensor accuracies.

Using the same simulation setup as in Sections~\ref{sec:MC-simulations} and \ref{sup-sec:Linear-rho}, we vary a proxy parameter $\lambda$ controlling the sensor's informativeness over the range $[0, 4]$ in increments of 0.5. For each setting, we simulate 100 replications and compute the mean traversal cost under the RCDP policy with $r^{LB}$, denoted $\bar{\mathcal{C}}_{p^{LB}}$, and compare it to the benchmark cost $\bar{\mathcal{C}}_{p^{bm}}$.

Figure~\ref{sup-fig: rLB_convergence} shows that $\bar{\mathcal{C}}_{p^{LB}}$ 
decreases monotonically as $\lambda$ increases, 
aligning with prior findings that higher sensor precision improves policy performance \citep{ye2011sensor}. 
For $\lambda \geq 3$, the cost closely matches the benchmark, 
indicating practical convergence even under moderately accurate sensors.

This monotonicity not only reinforces theoretical guarantees 
but also demonstrates that $r^{LB}$ provides an effective, 
adaptive framework in settings with evolving or imperfect sensor inputs.

\begin{figure}[H]
	\centering
	\begin{tabular}{c}
		\begin{subfigure}[b]{0.333\textwidth}
			\centering
			\includegraphics[width=\textwidth]{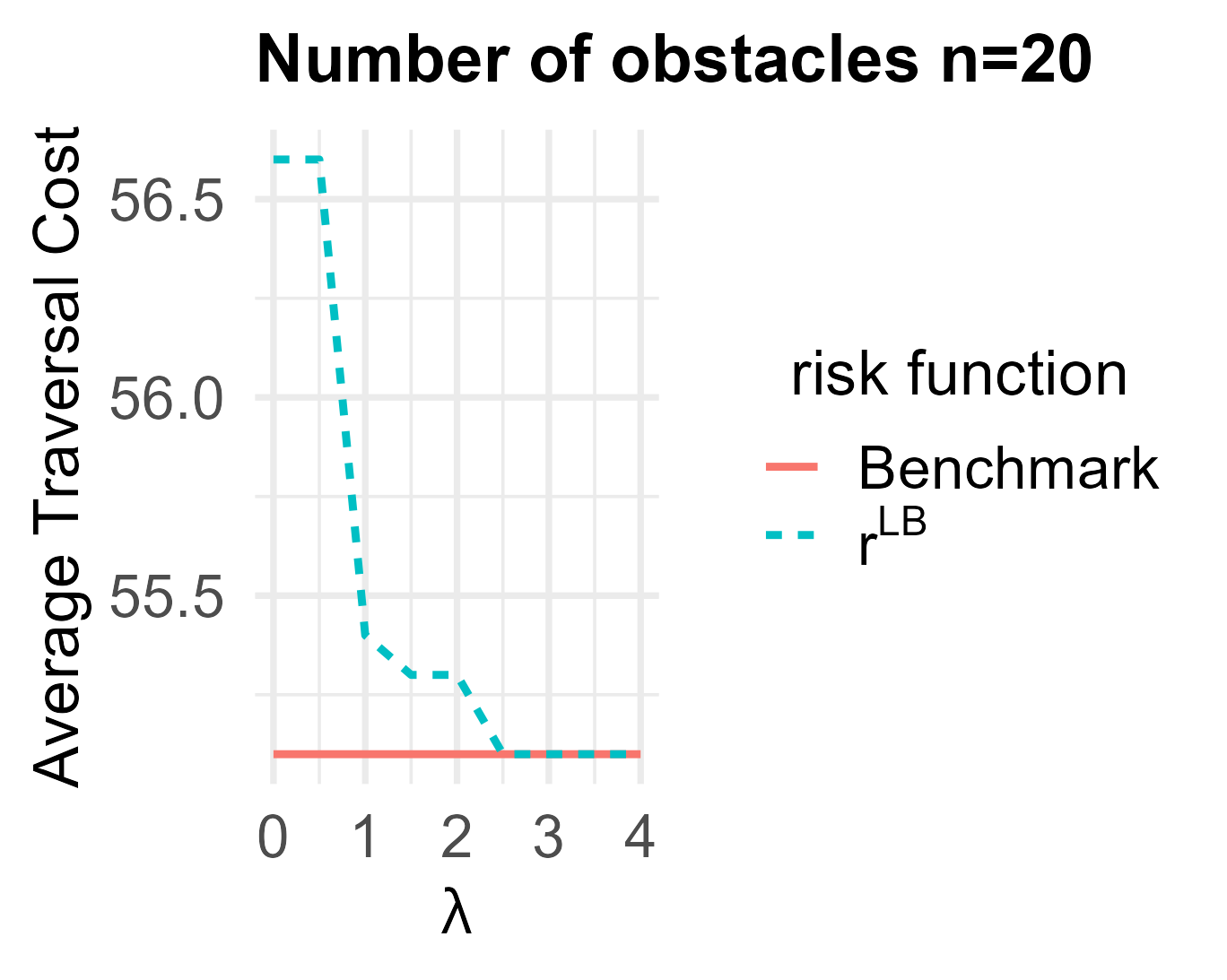}
			\caption{}
		\end{subfigure}
		\begin{subfigure}[b]{0.333\textwidth}
			\centering
			\includegraphics[width=\textwidth]{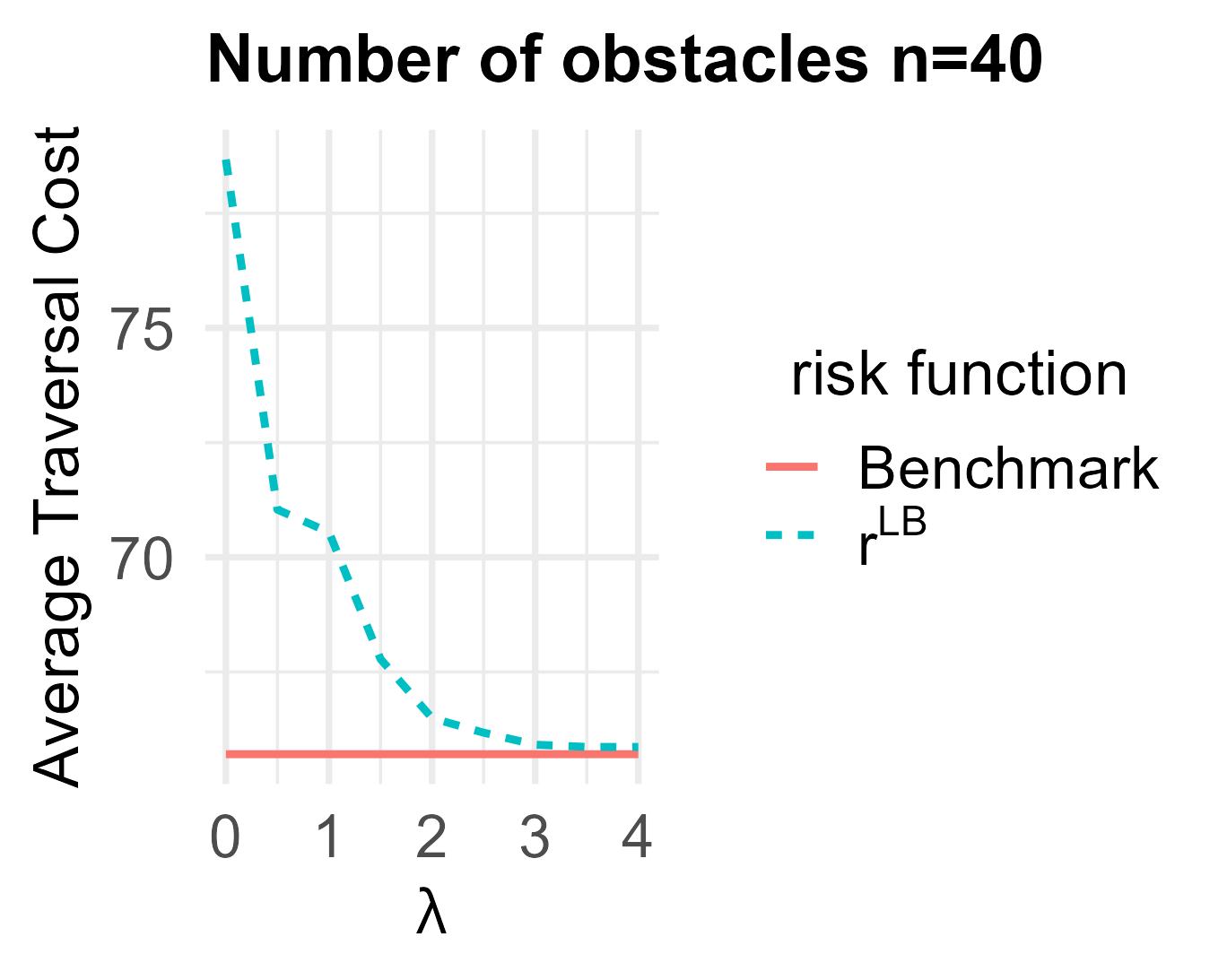}
			\caption{}
		\end{subfigure}
		\begin{subfigure}[b]{0.333\textwidth}
			\centering
			\includegraphics[width=\textwidth]{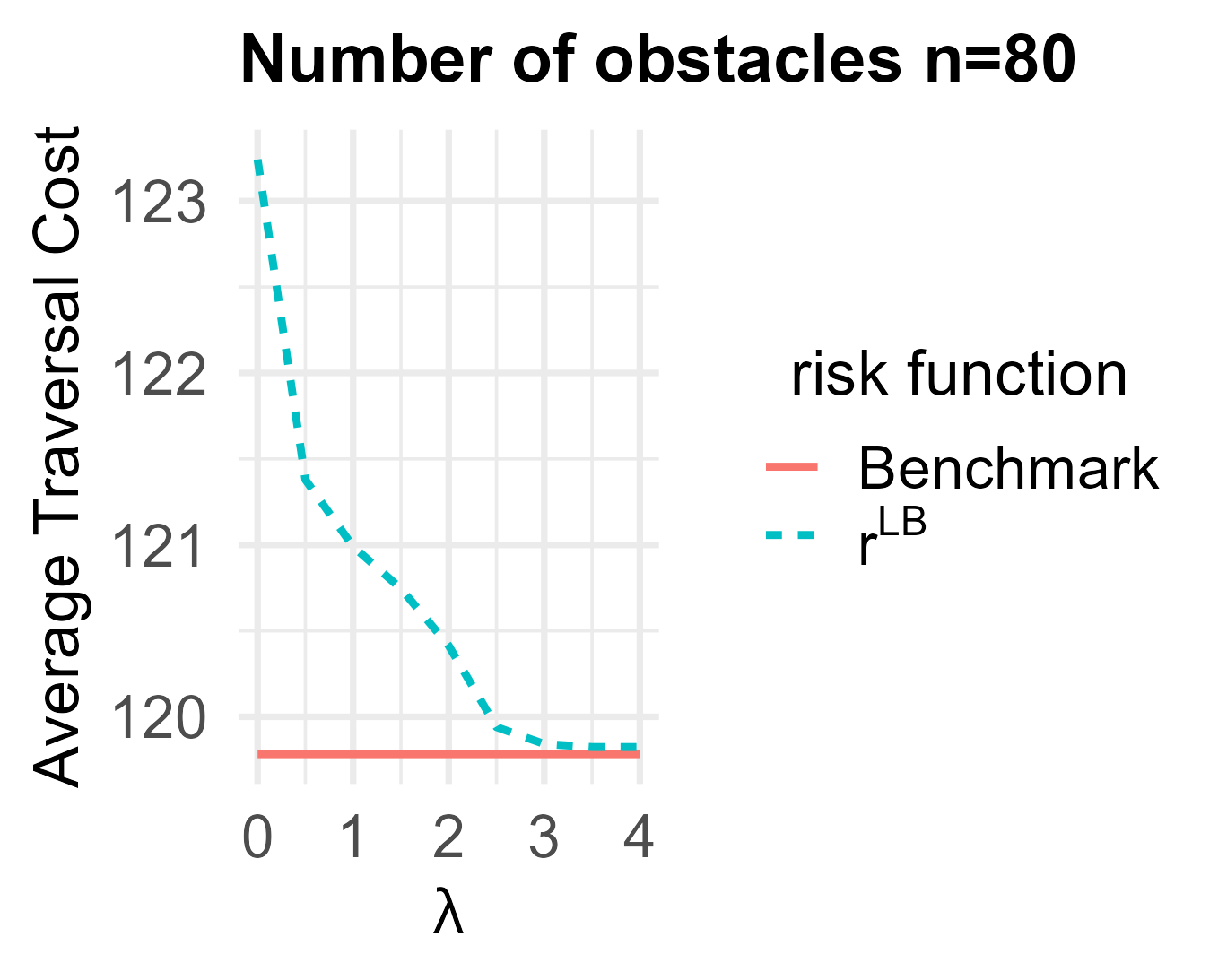}
			\caption{}
		\end{subfigure}
	\end{tabular}
	\caption{Traversal cost under the RCDP policy with $r^{LB}$ versus benchmark cost, as a function of sensor precision $\lambda$. Panels: (a) $n=20$, (b) $n=40$, (c) $n=80$.}
	\label{sup-fig: rLB_convergence}
\end{figure}

\subsubsection{Effect of Obstacle Spatial Pattern on Pathfinding Performance}
\label{sup-sec:Linear-spatial pattern}

To test the robustness of our findings under alternative spatial assumptions, 
we replicate the experiments from 
Sections~\ref{sup-sec:Linear-rho} and~\ref{sup-sec:Linear-lambda} 
using two different obstacle placement processes:  
(i) a uniform point process, and  
(ii) a Matérn clustering point process.

All other parameters are held constant to isolate the effect of spatial patterning. 
Figures~\ref{sup-fig:rLB_uniform} and~\ref{sup-fig:rLB_cluster} 
show that the RCDP policy with $r^{LB}$ retains its performance advantage across both spatial patterns. 
Its superior adaptivity and convergence to benchmark performance are consistent 
with results under Strauss-based configurations.

\begin{figure}[H]
	\centering
	\begin{tabular}{c}
		\begin{subfigure}[b]{0.35\textwidth}
			\centering
			\includegraphics[width=\textwidth]{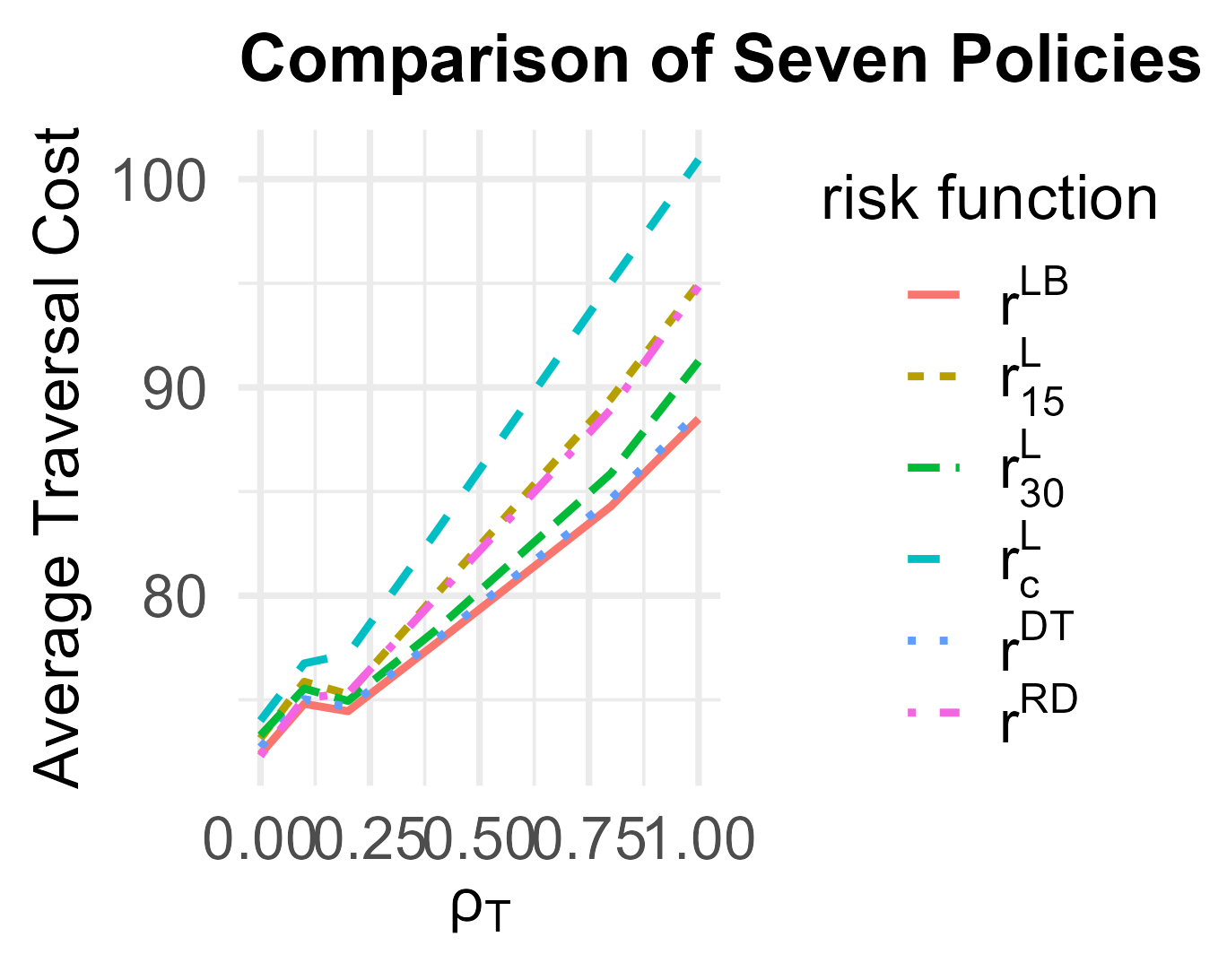}
			\caption{}
		\end{subfigure}
		\begin{subfigure}[b]{0.35\textwidth}
			\centering
			\includegraphics[width=\textwidth]{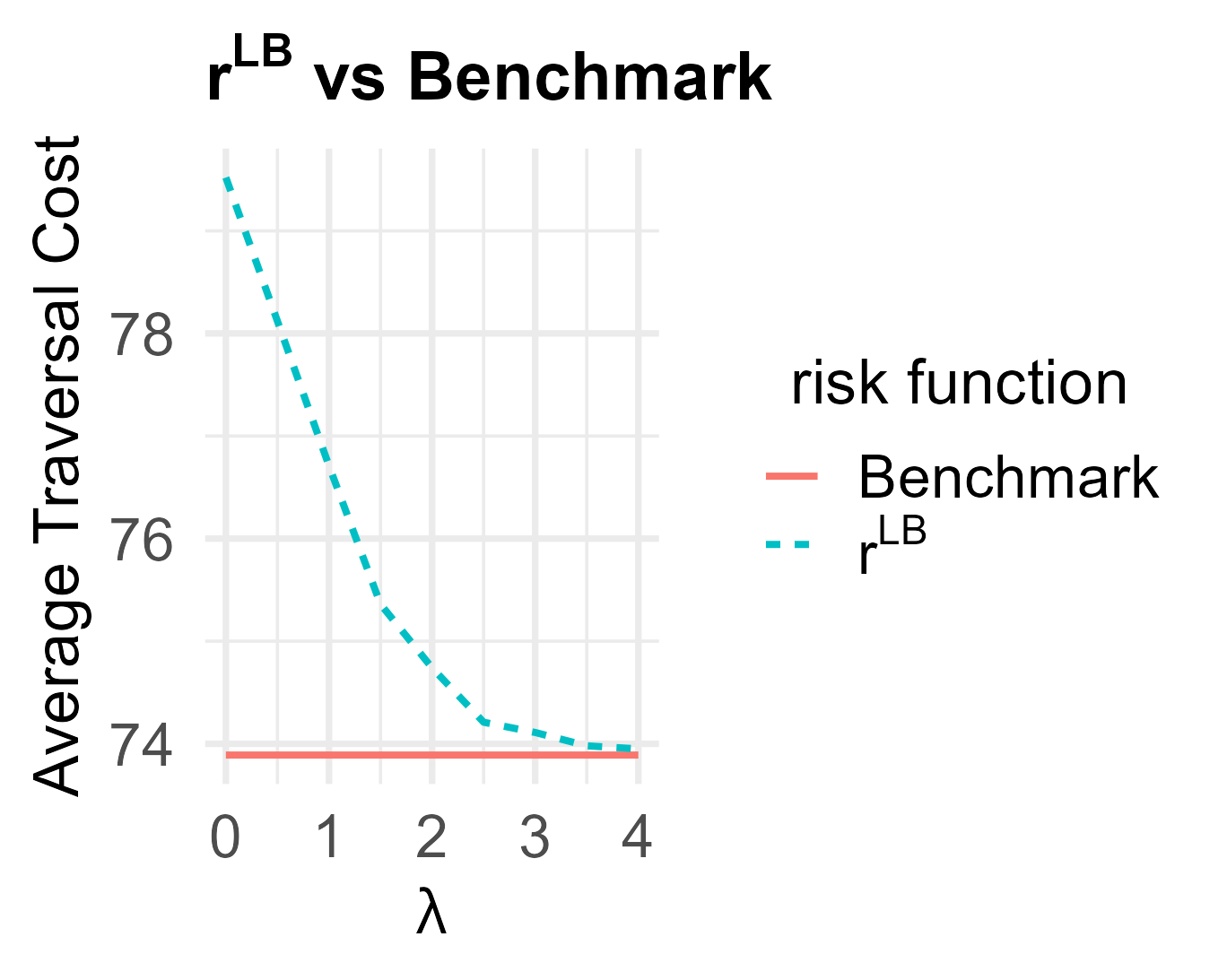}
			\caption{}
		\end{subfigure}
	\end{tabular}
	\caption{RCDP policy with $r^{LB}$ under uniform obstacle placement:  
		(a) traversal cost versus proportion of true obstacles;  
		(b) cost comparison with benchmark versus sensor accuracy $\lambda$.}
	\label{sup-fig:rLB_uniform}
\end{figure}

\begin{figure}[H]
	\centering
	\begin{tabular}{c}
		\begin{subfigure}[b]{0.35\textwidth}
			\centering
			\includegraphics[width=\textwidth]{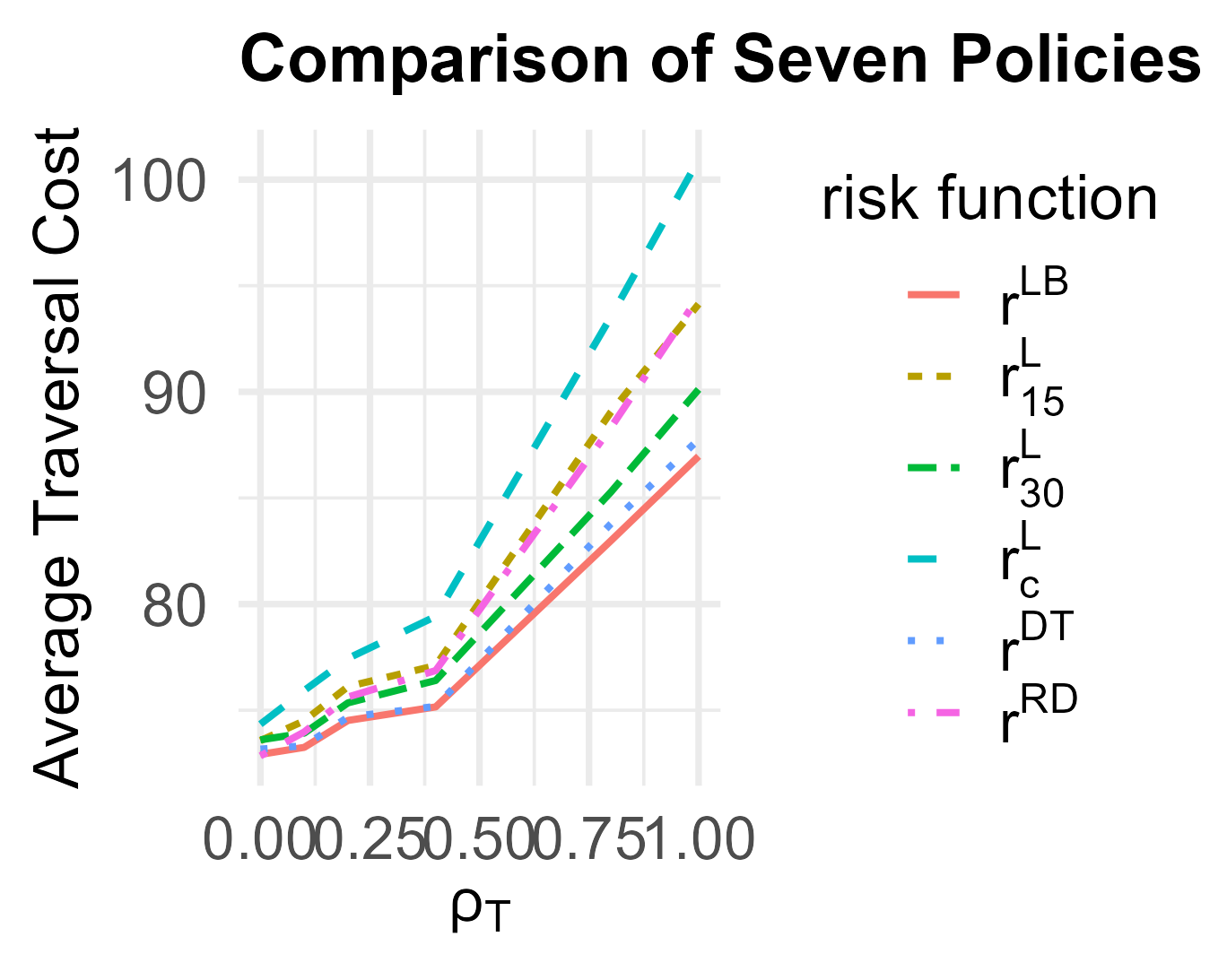}
			\caption{}
		\end{subfigure}
		\begin{subfigure}[b]{0.35\textwidth}
			\centering
			\includegraphics[width=\textwidth]{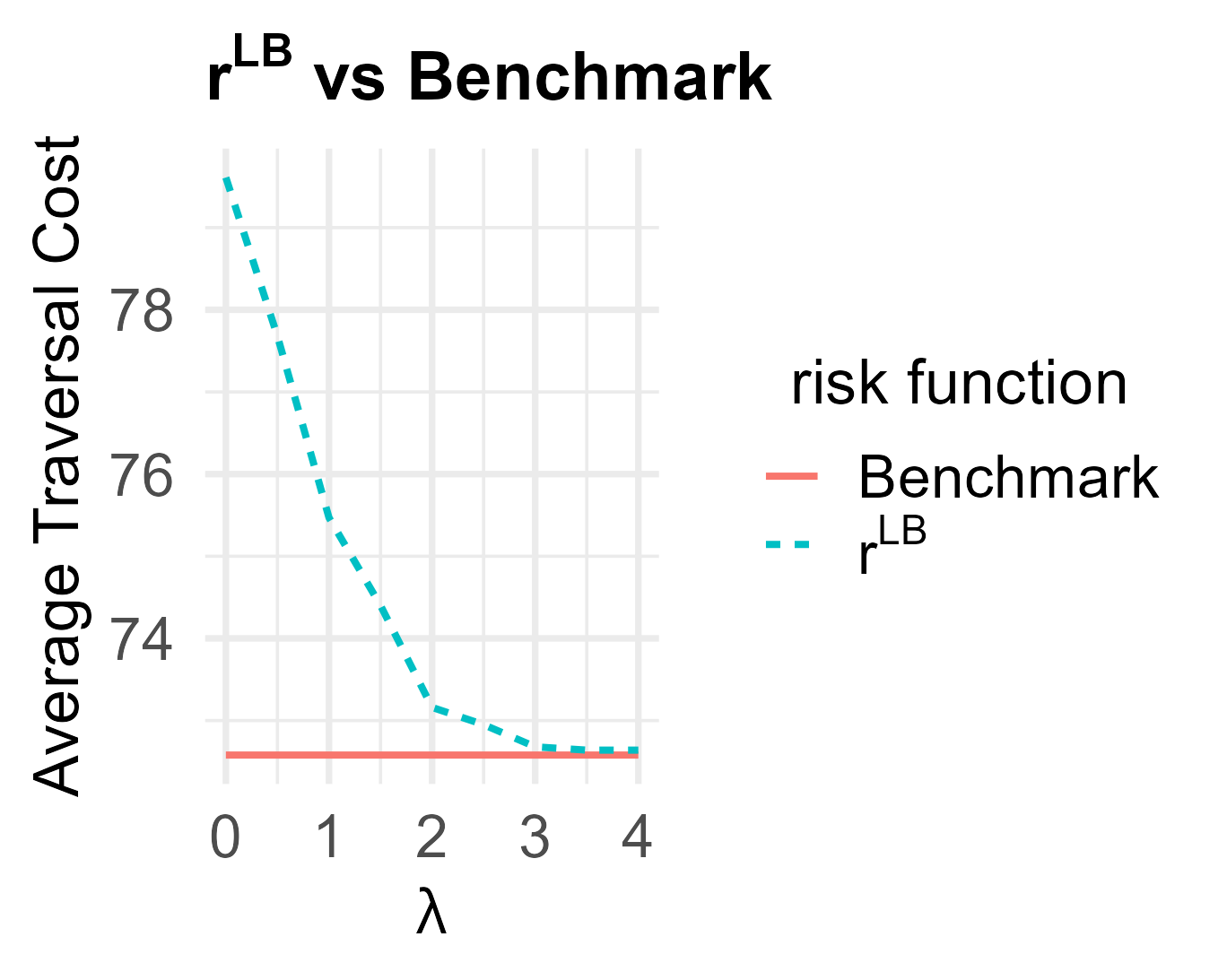}
			\caption{}
		\end{subfigure}
	\end{tabular}
	\caption{RCDP policy with $r^{LB}$ under clustered (Matérn) obstacle placement:  
		(a) traversal cost versus proportion of true obstacles;  
		(b) cost comparison with benchmark versus sensor accuracy $\lambda$.}
	\label{sup-fig:rLB_cluster}
\end{figure}

\subsection{Visualization of Relative Efficiency}

Figure~\ref{fig:efficiency-boxplot} presents the relative efficiency box plots 
for each policy across 15 simulation regimes. LU-based policies consistently 
achieve high median efficiency with tight variance, indicating robust performance 
under diverse environment and sensing settings.

\begin{figure}[H]
	\centering
	\includegraphics[width=0.32\textwidth]{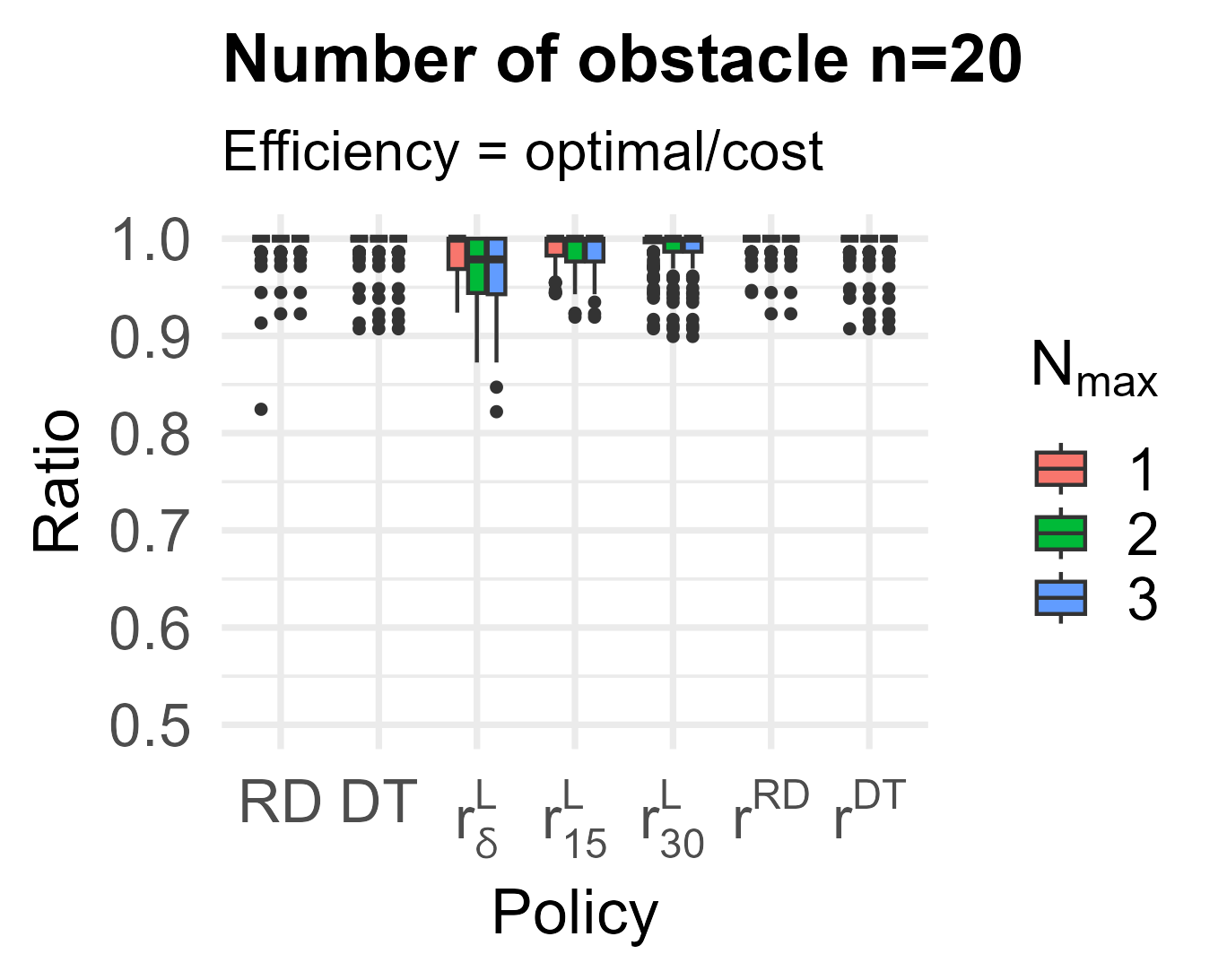}
	\includegraphics[width=0.32\textwidth]{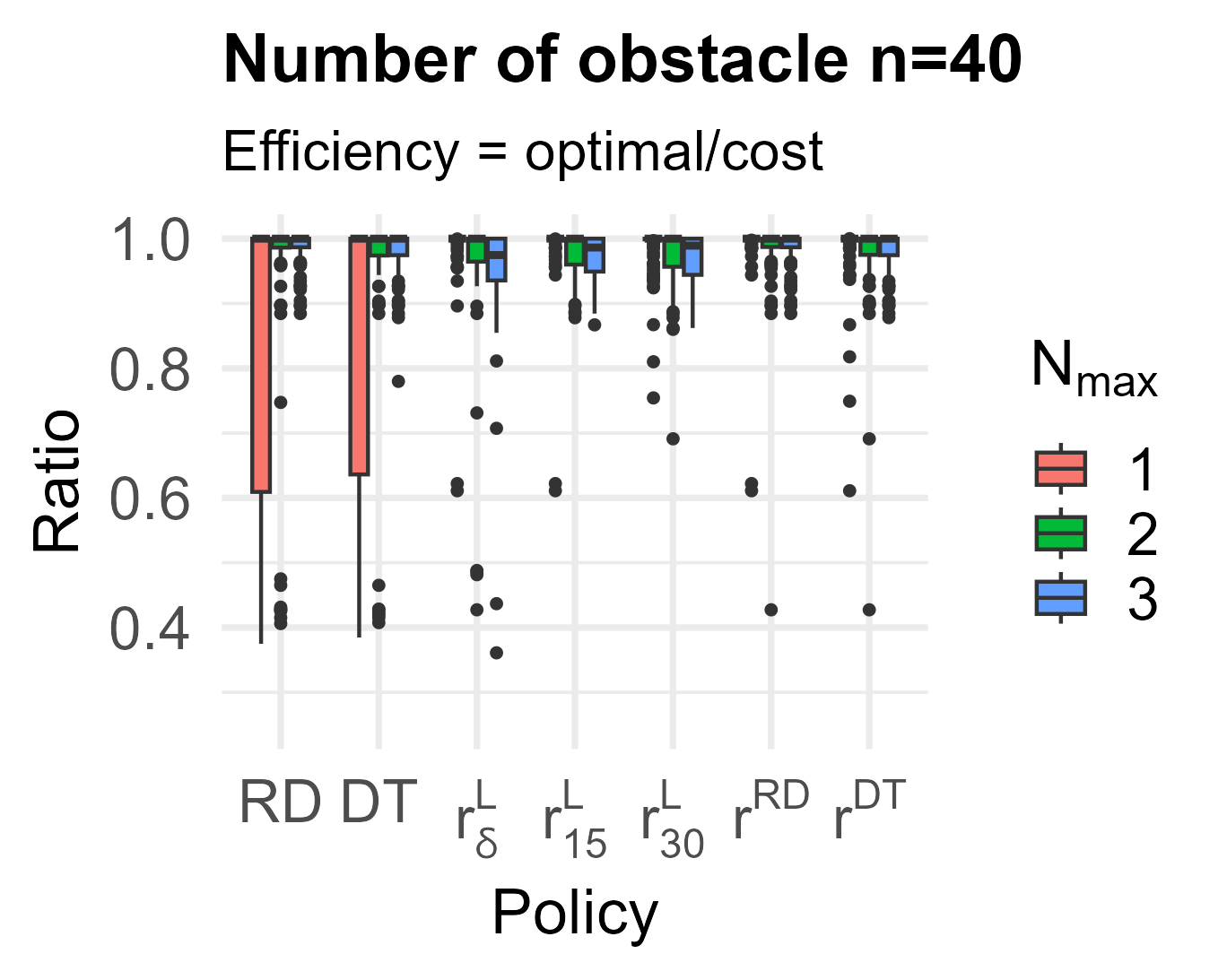}
	\includegraphics[width=0.32\textwidth]{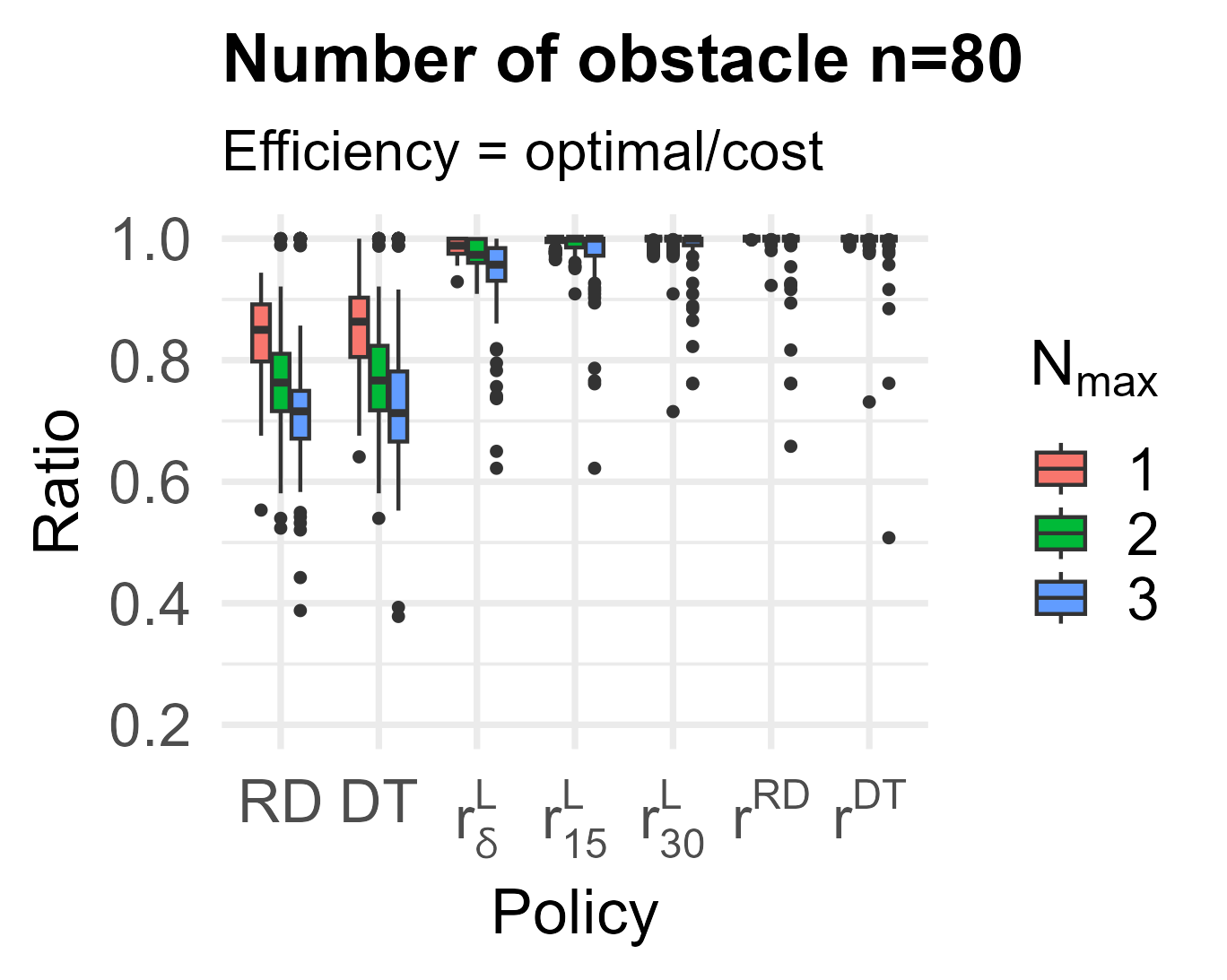}
	\includegraphics[width=0.32\textwidth]{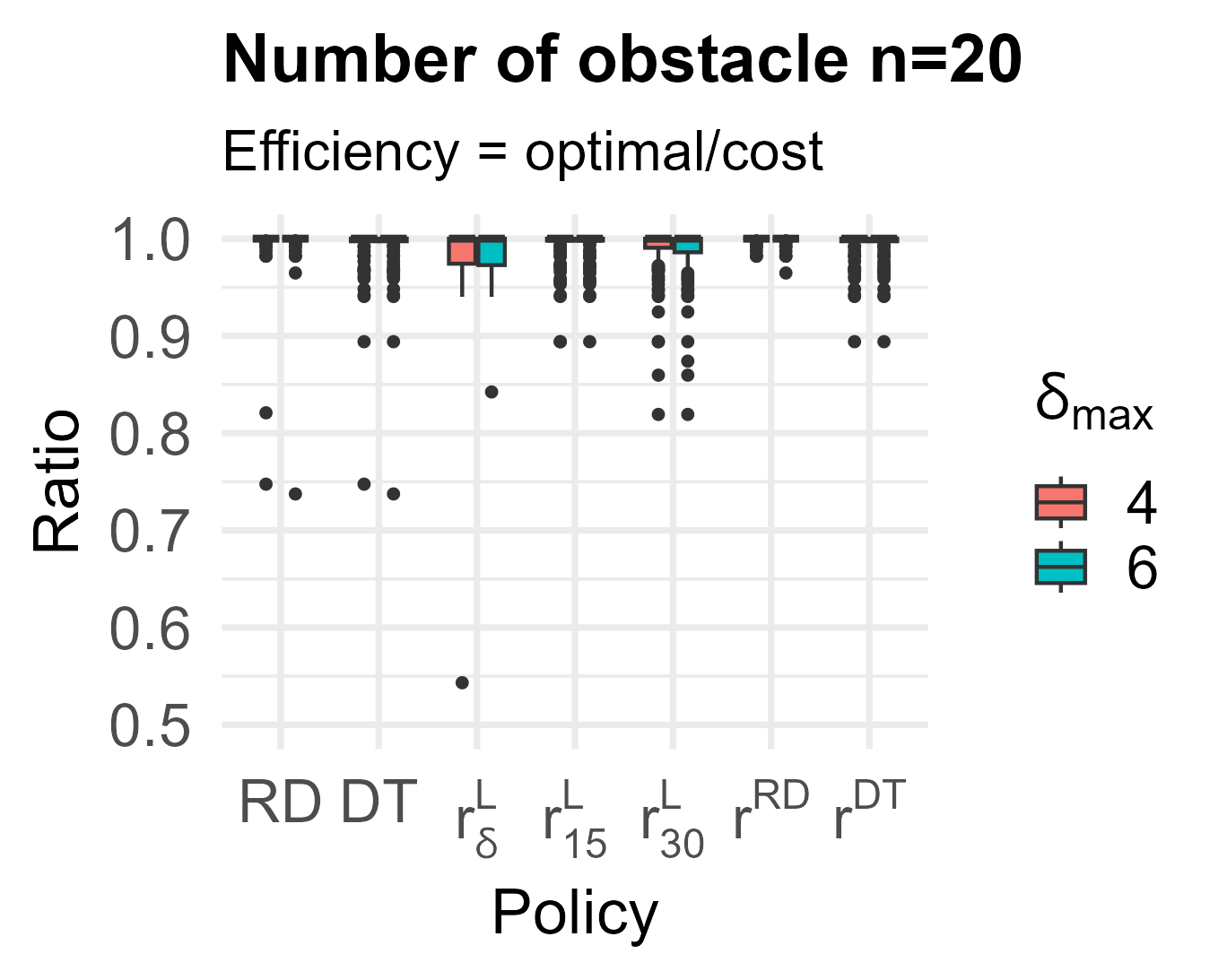}
	\includegraphics[width=0.32\textwidth]{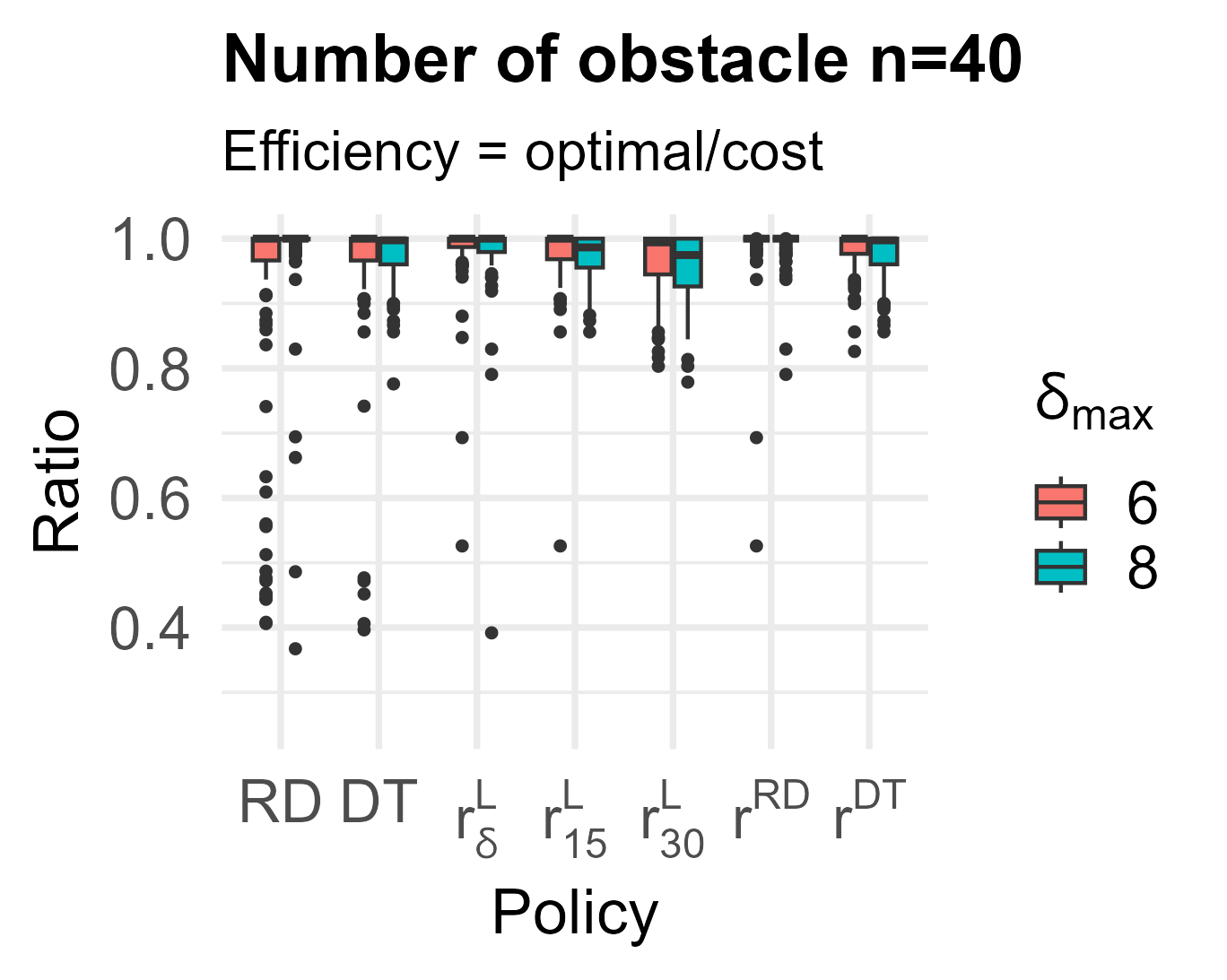}
	\includegraphics[width=0.32\textwidth]{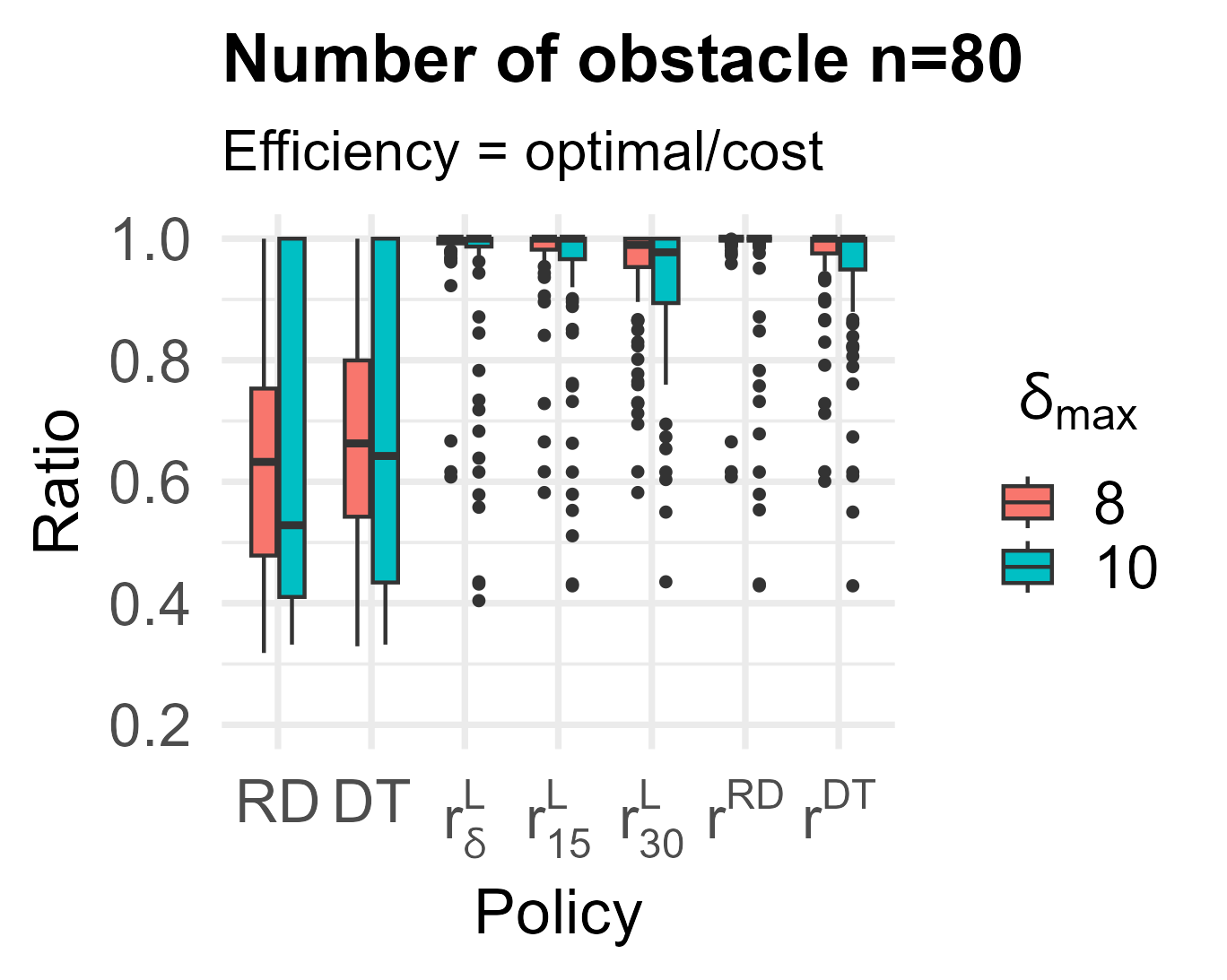}
	\caption{Relative efficiency of risk-based RCDP policies aggregated over all 
		15 simulation regimes. LU policies achieve high median efficiency with tight variance.}
	\label{fig:efficiency-boxplot}
\end{figure}


\begin{thebibliography}{}

\bibitem[\protect\astroncite{Aksakalli and Ari}{2014}]{aksakalliari2014}
Aksakalli, V. and Ari, I. (2014).
\newblock Penalty-based algorithms for the stochastic obstacle scene problem.
\newblock {\em INFORMS Journal on Computing}, 26:370--384.

\bibitem[\protect\astroncite{Aksakalli et~al.}{2011}]{aksakalli2011}
Aksakalli, V., Fishkind, D.~E., Priebe, C.~E., and Ye, X. (2011).
\newblock The reset disambiguation policy for navigating stochastic obstacle
  fields.
\newblock {\em Naval Research Logistics}, 58(4):389--399.

\bibitem[\protect\astroncite{Aksakalli et~al.}{2016}]{aksakalli2016based}
Aksakalli, V., Sahin, O.~F., and Ari, I. (2016).
\newblock An {AO*} based exact algorithm for the {Canadian} traveler problem.
\newblock {\em INFORMS Journal on Computing}, 28(1):96--111.

\bibitem[\protect\astroncite{Alkaya and Algin}{2015}]{alkaya2015metaheuristic}
Alkaya, A.~F. and Algin, R. (2015).
\newblock Metaheuristic based solution approaches for the obstacle
  neutralization problem.
\newblock {\em Expert systems with applications}, 42(3):1094--1105.

\bibitem[\protect\astroncite{Alkaya et~al.}{2021}]{alkaya2021heuristics}
Alkaya, A.~F., Yildirim, S., and Aksakalli, V. (2021).
\newblock Heuristics for the {Canadian} traveler problem with neutralizations.
\newblock {\em Computers \& Industrial Engineering}, 159:107488.

\bibitem[\protect\astroncite{Azizi and Seifi}{2024}]{azizi2024shortest}
Azizi, E. and Seifi, A. (2024).
\newblock Shortest path network interdiction with incomplete information: a
  robust optimization approach.
\newblock {\em Annals of Operations Research}, 335(2):727--759.

\bibitem[\protect\astroncite{Baddeley}{2010}]{baddeley2010}
Baddeley, A. (2010).
\newblock Analysing spatial point patterns in {R}.
\newblock {\em Workshop notes Ver. 4.1.}

\bibitem[\protect\astroncite{Dijkstra}{1959}]{dijkstra1959}
Dijkstra, E.~W. (1959).
\newblock A note on two problems in connexion with graphs.
\newblock {\em Numerische Mathematik}, 1(1):269--271.

\bibitem[\protect\astroncite{Fishkind et~al.}{2007}]{fishkind2007}
Fishkind, D.~E., Priebe, C.~E., Giles, K., Smith, L.~N., and Aksakalli, V.
  (2007).
\newblock Disambiguation protocols based on risk simulation.
\newblock {\em IEEE Transactions on Systems, Man, and Cybernetics},
  37(5):814--823.

\bibitem[\protect\astroncite{Guo and Matta}{2003}]{GUO200373}
Guo, L. and Matta, I. (2003).
\newblock Search space reduction in {QoS} routing.
\newblock {\em Computer Networks}, 41(1):73--88.

\bibitem[\protect\astroncite{Israeli and Wood}{2002}]{israeli2002shortest}
Israeli, E. and Wood, R.~K. (2002).
\newblock Shortest-path network interdiction.
\newblock {\em Networks: An International Journal}, 40(2):97--111.

\bibitem[\protect\astroncite{J{\"u}ttner}{2005}]{juttner2005resource}
J{\"u}ttner, A. (2005).
\newblock On resource constrained optimization problems.
\newblock In {\em 4th Japanese-Hungarian Symposium on Discrete Mathematics and
  Its Applications}, pages 3--6.

\bibitem[\protect\astroncite{Juttner et~al.}{2001}]{Juttner2001}
Juttner, A., Szviatovski, B., Mecs, I., and Rajko, Z. (2001).
\newblock Lagrange relaxation based method for the {QoS} routing problem.
\newblock In {\em Proceedings IEEE INFOCOM 2001. Conference on Computer
  Communications. Twentieth Annual Joint Conference of the IEEE Computer and
  Communications Society (Cat. No.01CH37213)}, volume~2, pages 859--868 vol.2.

\bibitem[\protect\astroncite{Muhandiramge and Boland}{2009}]{Ranga2009}
Muhandiramge, R. and Boland, N. (2009).
\newblock Simultaneous solution of {Lagrangean} dual problems interleaved with
  preprocessing for the weight constrained shortest path problem.
\newblock {\em Networks}, 53:358--381.

\bibitem[\protect\astroncite{Papadimitriou and
  Yannakakis}{1991}]{papadimitriou1991shortest}
Papadimitriou, C.~H. and Yannakakis, M. (1991).
\newblock Shortest paths without a map.
\newblock {\em Theoretical Computer Science}, 84(1):127--150.

\bibitem[\protect\astroncite{Priebe et~al.}{2005}]{fishkind2005}
Priebe, C., Fishkind, D., Abrams, L., and Piatko, C. (2005).
\newblock Random disambiguation paths for traversing a mapped hazard field.
\newblock {\em Naval Research Logistics (NRL)}, 52:285 -- 292.

\bibitem[\protect\astroncite{Sadeghi and Seifi}{2024}]{sadeghi2024modified}
Sadeghi, S. and Seifi, A. (2024).
\newblock A modified scenario bundling method for shortest path network
  interdiction under endogenous uncertainty.
\newblock {\em Annals of Operations Research}, pages 1--29.

\bibitem[\protect\astroncite{Smith and Song}{2020}]{smith2020survey}
Smith, J.~C. and Song, Y. (2020).
\newblock A survey of network interdiction models and algorithms.
\newblock {\em European Journal of Operational Research}, 283(3):797--811.

\bibitem[\protect\astroncite{Xin et~al.}{2023}]{xin2023factor}
Xin, S., Wang, X., Zhang, J., Zhou, K., and Chen, Y. (2023).
\newblock A comparative study of factor graph optimization-based and extended
  kalman filter-based ppp-b2b/ins integrated navigation.
\newblock {\em Remote Sensing}, 15(21):5144.

\bibitem[\protect\astroncite{Ye et~al.}{2011}]{ye2011sensor}
Ye, X., Fishkind, D.~E., Abrams, L., and Priebe, C.~E. (2011).
\newblock Sensor information monotonicity in disambiguation protocols.
\newblock {\em Journal of the Operational Research Society}, 62(1):142--151.

\bibitem[\protect\astroncite{Ye and Priebe}{2010}]{ye2010graph}
Ye, X. and Priebe, C.~E. (2010).
\newblock A graph-search based navigation algorithm for traversing a
  potentially hazardous area with disambiguation.
\newblock {\em International Journal of Operations Research and Information
  Systems (IJORIS)}, 1(3):14--27.

\bibitem[\protect\astroncite{Yildirim et~al.}{2019}]{alkaya2021ctpn}
Yildirim, S., Aksakalli, V., and Alkaya, A.~F. (2019).
\newblock Canadian traveler problem with neutralizations.
\newblock {\em Expert Syst. Appl.}, 132:151--165.

\bibitem[\protect\astroncite{Zweig et~al.}{2020}]{zweig2020graphRL}
Zweig, A., Ahmed, N., Willke, T.~L., and Ma, G. (2020).
\newblock Neural algorithms for graph navigation.
\newblock In {\em Learning Meets Combinatorial Algorithms at NeurIPS2020}.

\end{thebibliography}
\makeatletter
\@input{ResConstRDP_LV.bbl}
\makeatother
\end{document}